\pgfplotsset{compat=newest}
\title{A Random Matrix Analysis of Random Fourier Features: \\Beyond the Gaussian Kernel, a Precise Phase Transition, and the Corresponding Double Descent}
\author{
  Zhenyu Liao\\
  ICSI and Department of Statistics\\
  University of California, Berkeley, USA\\
  \texttt{zhenyu.liao@berkeley.edu}\\
  \and
  Romain Couillet \\
  G-STATS Data Science Chair, GIPSA-lab\\
  University Grenobles-Alpes, France\\
  \texttt{romain.couillet@gipsa-lab.grenoble-inp.fr}
  \and
   Michael W. Mahoney\\
  ICSI and Department of Statistics\\
  University of California, Berkeley, USA\\
  \texttt{mmahoney@stat.berkeley.edu}
}
\date{\today}
\DeclareMathOperator{\tr}{tr}
\DeclareMathOperator{\train}{train}
\DeclareMathOperator{\test}{test}
\newcommand{\T}{{\sf T}}
\newcommand{\RR}{{\mathbb{R}}}
\newcommand{\EE}{{\mathbb{E}}}
\newcommand{\NN}{{\mathcal{N}}}
\newcommand{\A}{\mathbf{A}}
\newcommand{\B}{\mathbf{B}}
\newcommand{\C}{\mathbf{C}}
\newcommand{\D}{\mathbf{D}}
\newcommand{\K}{\mathbf{K}}
\newcommand{\U}{\mathbf{U}}
\newcommand{\W}{\mathbf{W}}
\newcommand{\X}{\mathbf{X}}
\newcommand{\Z}{\mathbf{Z}}
\newcommand{\Q}{\mathbf{Q}}
\newcommand{\x}{\mathbf{x}}
\newcommand{\y}{\mathbf{y}}
\newcommand{\z}{\mathbf{z}}
\newcommand{\w}{\mathbf{w}}
\newcommand{\zo}{\mathbf{0}}
\newcommand{\I}{\mathbf{I}}
\newcommand{\bSigma}{\boldsymbol{\Sigma}}
\newcommand{\bPhi}{\boldsymbol{\Phi}}
\newcommand{\bXi}{\boldsymbol{\Xi}}
\newcommand{\bOmega}{\boldsymbol{\Omega}}
\newcommand{\bbeta}{\boldsymbol{\beta}}
\newcommand{\asto}{{ \xrightarrow{\rm a.s.} }}
\newcommand*{\QED}{\hfill\ensuremath{\blacksquare}}%
\definecolor{RED}{rgb}{0.7,0,0}
\definecolor{BLUE}{rgb}{0,0,0.69}
\definecolor{GREEN}{rgb}{0,0.6,0}
\definecolor{PURPLE}{rgb}{0.69,0,0.8}
\newcommand{\RED}{\color[rgb]{0.70,0,0}}
\newcommand{\BLUE}{\color[rgb]{0,0,0.69}}
\newtheorem{Assumption}{Assumption}
\newtheorem{Theorem}{Theorem}
\newtheorem{Lemma}{Lemma}
\newtheorem{Remark}{Remark}
\begin{document}
\maketitle

\begin{abstract}
This article characterizes the exact asymptotics of random Fourier feature (RFF) regression, in the realistic setting where the number of data samples $n$, their dimension $p$, and the dimension of feature space $N$ are all large and comparable. 
In this regime, the random RFF Gram matrix no longer converges to the well-known limiting Gaussian kernel matrix (as it does when $N \to \infty$ alone), but it still has a tractable behavior that is captured by our analysis. 
This analysis also provides accurate estimates of training and test regression errors for large $n,p,N$. 
Based on these estimates, a precise characterization of two qualitatively different phases of learning, including the phase transition between them, is provided; and the corresponding double descent test error curve is derived from this phase transition behavior. 
These results do not depend on strong assumptions on the data distribution, and they perfectly match empirical results on real-world data sets. 
\end{abstract}

\tableofcontents

\section{Introduction}
\label{sec:introduction}

For a machine learning system having $N$ parameters, trained on a data set of size $n$, asymptotic analysis as used in classical statistical learning theory typically either focuses on the (statistical) population $n \to \infty$ limit, for $N$ fixed, or the over-parameterized $N \to \infty$ limit, for a given $n$. 
These two settings are technically more convenient to work with, yet less practical, as they essentially assume that one of the two dimensions is negligibly small compared to the other, and this is rarely the case in practice.
Indeed, with a factor of $2$ or $10$ more data, one typically works with a more complex model.
This has been highlighted perhaps most prominently in recent work on neural network models, in which the model complexity and data size increase together.
For this reason, the \emph{double asymptotic} regime where $n,N \rightarrow \infty$, with $N/n\rightarrow c$, a constant, is a particularly interesting (and likely more realistic) limit, despite being technically more challenging~\cite{SST92,WRB93,DKST96,EB01_BOOK,MezardMontanari09,MM17_TR,BKPx20}.
In particular, working in this regime allows for a finer quantitative assessment of machine learning systems, as a function of their \emph{relative} complexity $N/n$, as well as for a precise description of the under- to over-parameterized ``phase transition'' (that does not appear in the $N\to \infty$ alone analysis). 
This transition is largely hidden in the usual style of statistical learning theory~\cite{Vapnik98}, but it is well-known in the statistical mechanics approach to learning theory~\cite{SST92,WRB93,DKST96,EB01_BOOK}, and empirical signatures of it have received attention recently under the name ``double descent'' phenomena~\cite{advani2020high,belkin2019reconciling}.

This article considers the asymptotics of random Fourier features~\cite{rahimi2008random}, and more generally random feature maps, which may be viewed also as a single-hidden-layer neural network model, in this limit.
More precisely, let $\X = [\x_1, \ldots, \x_n] \in \RR^{p \times n}$ denote the data matrix of size $n$ with data vectors $\x_i \in \RR^p$ as column vectors. 
The random feature matrix $\bSigma_\X$ of $\X$ is generated by pre-multiplying some random matrix $\W \in \RR^{N \times p}$ having i.i.d.\@ entries and then passing through some \emph{entry-wise} nonlinear function $\sigma(\cdot)$, i.e., $\bSigma_\X \equiv \sigma(\W \X) \in \RR^{N \times n}$.
Commonly used random feature techniques such as random Fourier features (RFFs) \cite{rahimi2008random} and homogeneous kernel maps \cite{vedaldi2012efficient}, however, rarely involve a single nonlinearity. 
The popular RFF maps are built with cosine and sine nonlinearities, so that $\bSigma_\X \in \RR^{2N \times n}$ is obtained by cascading the random features of both, i.e., $\bSigma_\X^\T \equiv [\cos(\W \X)^\T,~\sin(\W\X)^\T]$. 
Note that, by combining both nonlinearities, RFFs generated from $\W \in \RR^{N \times p}$ are of dimension $2N$. 


The large $N$ asymptotics of random feature maps is closely related to their limiting kernel matrices $\K_\X$. 
In the case of RFF, it was shown in \cite{rahimi2008random} that \emph{entry-wise} the Gram matrix $\bSigma_\X^\T \bSigma_\X/N$ converges to the Gaussian kernel matrix $\K_\X \equiv \{\exp(- \| \x_i - \x_j \|^2/2) \}_{i,j=1}^n$, as $N \to \infty$. 
This follows from $\frac1N [\bSigma_\X^\T \bSigma_\X]_{ij} = \frac1N \sum_{t=1}^N \cos(\x_i^\T \w_t) \cos(\w_t^\T \x_j) + \sin(\x_i^\T \w_t) \sin(\w_t^\T \x_j) $, for $\w_t$ independent Gaussian random vectors, so that by the strong law of large numbers, for fixed $n,p$, $[\bSigma_\X^\T \bSigma_\X/N]_{ij}$ goes to its expectation (with respect to $\w \sim \NN(\zo, \I_p)$) almost surely as $N \to \infty$, i.e.,
\begin{equation}
  [\bSigma_\X^\T \bSigma_\X/N]_{ij}~\asto~\EE_\w\left[\cos(\x_i^\T \w) \cos(\w^\T \x_j) + \sin(\x_i^\T \w) \sin(\w^\T \x_j) \right] \equiv \K_{\cos} + \K_{\sin},
\end{equation}
with 
\begin{equation}
    \K_{\cos} + \K_{\sin}\equiv e^{-\frac12 (\| \x_i \|^2 + \| \x_j \|^2) } \left( \cosh(\x_i^\T \x_j) + \sinh(\x_i^\T \x_j) \right) = e^{-\frac12 (\| \x_i - \x_j \|^2) } \equiv [\K_\X]_{ij}. \label{eq:Gram-large-N}
\end{equation}

While this result holds in the $N \to \infty$ limit, recent advances in random matrix theory \cite{louart2018random,liao2018spectrum} suggest that, in the more practical setting where $N$ is not much larger than $n,p$ and $n,p,N \to \infty$ at the same pace,
the situation is more subtle.
In particular, the above entry-wise convergence remains valid, but the convergence $\| \bSigma_\X^\T \bSigma_\X/N - \K_\X \| \to 0$ no longer holds in spectral norm, due to the factor $n$, now large, in the norm inequality $\| \A \|_\infty \le \| \A \| \le n \| \A \|_\infty$ for $\A \in \RR^{n \times n}$ and $\| \A \|_\infty \equiv \max_{ij} |\A_{ij}|$. 
This implies that, in the large $n,p,N$ regime, the assessment of the behavior of $\bSigma_\X^\T \bSigma_\X/N$ via $\K_\X$ may result in a spectral norm error that blows up. 
As a consequence, for various machine learning algorithms \cite{cortes2010impact}, the performance guarantee offered by the limiting Gaussian kernel is less likely to agree with empirical observations in real-world large-scale problems, when $n,p$ are~large.

\subsection{Warm-up: Sample Covariance Matrix and the Mar{\u c}enko-Pastur Equation}
\label{subsec:SCM-and-MP}

As a warm-up example for the large $n,p,N$ mismatch issue that we shall address, consider the sample covariance matrix $\hat \C = \frac1n \X \X^\T$ from some data $\X \in \RR^{p \times n}$ composed of $n$ i.i.d.~$\x_i \sim \NN(\mathbf{0}, \C)$ with positive definite $\C \in \RR^{p \times p}$. In this zero-mean Gaussian setting, the sample covariance $\hat \C$, despite being the maximum likelihood estimator of the \emph{population covariance} $\C$ and providing \emph{entry-wise} consistent estimate for it, is an extremely poor estimator of $\C$ in a \emph{spectral norm} sense, for $n,p$ large. More precisely, $\| \hat \C - \C \| \not\to 0$ as $n,p \to \infty$ with $p/n \to c \in (0,\infty)$.
Indeed, one has $\| \hat \C - \C \|/\| \C \| \approx 20\%$, even with $n=100p$, in the simple $\C = \I_p$ setting. Figure~\ref{fig:MP-law} compares the eigenvalue histogram of $\hat \C$ with the population eigenvalue of $\C$, in the setting of $\C = \I_p$ and $n = 100p$. In the $\C = \I_p$ case, the limiting eigenvalue distribution of $\hat \C$ as $n,p \to \infty$ is known to be the popular Mar{\u c}enko-Pastur law \cite{marvcenko1967distribution} given by
\begin{equation}\label{eq:MP-law}
  \mu(dx) = (1 - c^{-1}) \cdot \delta_0(x) + \frac1{2\pi c x} \sqrt{ \left( x - (1-\sqrt c)^2 \right)^+ \left( (1 + \sqrt c)^2 -x \right)^+ } dx
\end{equation}
with $\delta_0(x)$ the Dirac mass at zero, $c = \lim p/n$ and $(x)^+ = \max(x,0)$, so that the support of $\mu$ has length $(1+\sqrt c)^2 - (1-\sqrt c)^2 = 4 \sqrt c = 0.4$ for $n = 100 p$.

\begin{figure}[htb]
\centering
\begin{tikzpicture}[font=\footnotesize]
    \renewcommand{\axisdefaulttryminticks}{4} 
    \pgfplotsset{every major grid/.append style={densely dashed}}       
    \pgfplotsset{every axis legend/.append style={cells={anchor=west},fill=white, at={(0.98,0.98)}, anchor=north east, font=\scriptsize }}
    \begin{axis}[
    width = .6\linewidth,
      height = .45\linewidth,
      xmin=0.7,
      ymin=0,
      xmax=1.3,
      ymax=4.5,
      bar width=2pt,
      grid=major,
      ymajorgrids=false,
      scaled ticks=true,
      xlabel={Eigenvalues of $\hat \C$},
      ylabel={}
      ]
      \addplot+[ybar,mark=none,draw=white,fill=blue!60!white,area legend] coordinates{
      (0.803814, 0.000000)(0.811441, 0.256076)(0.819068, 1.536458)(0.826695, 1.280382)(0.834322, 1.536458)(0.841949, 2.560764)(0.849576, 2.048611)(0.857203, 1.792535)(0.864831, 2.816840)(0.872458, 2.816840)(0.880085, 2.816840)(0.887712, 2.816840)(0.895339, 2.560764)(0.902966, 3.328993)(0.910593, 3.072917)(0.918220, 3.328993)(0.925847, 3.072917)(0.933475, 3.072917)(0.941102, 3.328993)(0.948729, 2.816840)(0.956356, 3.072917)(0.963983, 3.328993)(0.971610, 3.072917)(0.979237, 3.328993)(0.986864, 3.328993)(0.994492, 3.072917)(1.002119, 3.072917)(1.009746, 3.328993)(1.017373, 3.328993)(1.025000, 2.816840)(1.032627, 3.072917)(1.040254, 3.328993)(1.047881, 2.560764)(1.055508, 2.816840)(1.063136, 3.328993)(1.070763, 2.560764)(1.078390, 3.072917)(1.086017, 2.304687)(1.093644, 2.816840)(1.101271, 2.560764)(1.108898, 2.816840)(1.116525, 2.560764)(1.124153, 2.048611)(1.131780, 2.048611)(1.139407, 2.560764)(1.147034, 1.792535)(1.154661, 1.792535)(1.162288, 1.792535)(1.169915, 2.048611)(1.177542, 1.536458)(1.185169, 1.024306)(1.192797, 1.024306)(1.200424, 0.768229)(1.208051, 0.256076)(1.215678, 0.000000)(1.223305, 0.000000)(1.230932, 0.000000)(1.238559, 0.000000)(1.246186, 0.000000)(1.253814, 0.000000)
      };
      \addlegendentry{{Empirical eigenvalues}}
      \def\c{0.01}
      \addplot[samples=200,domain=0.7:1.3,RED,line width=1pt] {1/(2*pi*\c*x)*sqrt(max(((1+sqrt(\c))^2-x)*(x-(1-sqrt(\c))^2),0))};
      \addlegendentry{{Mar\u{c}enko-Pastur law} }
      \addplot+[ybar,mark=none,draw=white,fill=black,area legend] coordinates{(1, 5)};
      \addlegendentry{{ Population eigenvalue} }
    \end{axis}
  \end{tikzpicture}
  \caption{ Eigenvalue histogram of $\hat \C$ versus the {Mar\u{c}enko-Pastur law, for $p=512$ and $ n= 100p$. }}
  \label{fig:MP-law}
  \end{figure}
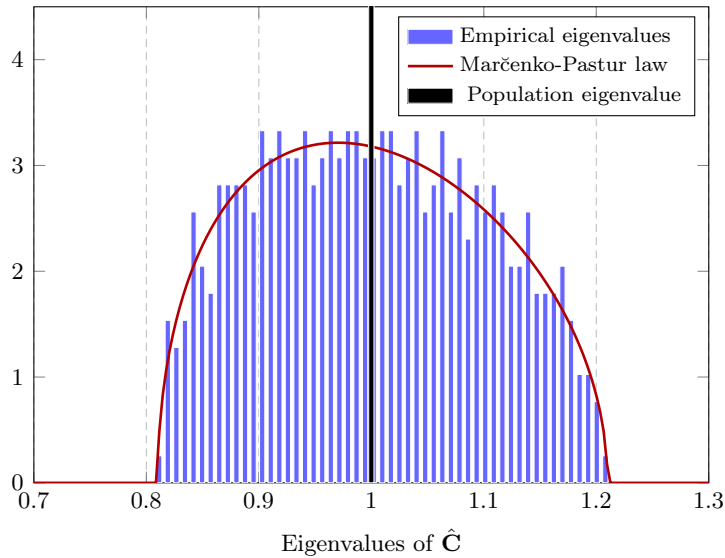

In the regression analysis (such as ridge regression) based on $\X$, of more immediate interest is the \emph{resolvent} $\Q_{\hat \C}(\lambda) \equiv ( \hat \C + \lambda \I_p )^{-1}, \lambda > 0$ of the sample covariance $\hat \C$, and more concretely, the bilinear forms of the type $\mathbf{a}^\T \Q_{\hat \C}(\lambda) \mathbf{b}$ for $\mathbf{a}, \mathbf{b} \in \RR^p$. As a result of the spectral norm inconsistency $\| \hat \C - \C \| \not \to 0$ in the large $n,p$ regime, it is unlikely that for most $\mathbf{a}, \mathbf{b}$, the convergence $\mathbf{a}^\T \Q_{\hat \C}(\lambda) \mathbf{b} - \mathbf{a}^\T (\C + \lambda \I_p)^{-1} \mathbf{b} \to 0$ would still hold.


While the \emph{random} variable $\mathbf{a}^\T \Q_{\hat \C}(\lambda) \mathbf{b}$ is not getting close to $\mathbf{a}^\T (\C + \lambda \I_p)^{-1} \mathbf{b}$ as $n,p \to \infty$, it does exhibit a tractable asymptotically \emph{deterministic} behavior, described by the Mar{\u c}enko-Pastur equation \cite{marvcenko1967distribution} for $\C = \I_p$. Notably, for $\mathbf{a}, \mathbf{b} \in \RR^p$ deterministic vectors of bounded Euclidean norms, we have, as $n,p \to \infty$ and $p/n \to c \in (0,\infty)$,
\[
  \mathbf{a}^\T \Q_{\hat \C}(\lambda) \mathbf{b} - m(\lambda) \cdot \mathbf{a}^\T \mathbf{b}~\asto~0,
\]
with $m(\lambda)$ the unique positive solution to the following Mar{\u c}enko-Pastur equation \cite{marvcenko1967distribution}
\begin{equation}\label{eq:MP}
  c\lambda m^2(\lambda) + (1+\lambda -c ) m(\lambda) - 1 = 0.
\end{equation}
In a sense, $\bar \Q(\lambda) \equiv m(\lambda) \I_p$ can be seen as a \emph{deterministic equivalent} \cite{hachem2007deterministic,couillet2011random} for the \emph{random} $\Q_{\hat \C}(\lambda)$ that asymptotically characterizes the behavior of the latter, when bilinear forms are considered. In Figure~\ref{fig:MP-DE} we compare the quadratic forms $\mathbf{a}^\T \Q_{\hat \C}(\lambda) \mathbf{a}$ as a function of $\lambda$, for $n = 10p$ and $n = 2p$. We observe that, in both cases, the RMT prediction in \eqref{eq:MP} provides a much closer match than the large-$n$ alone asymptotic given by $\mathbf{a}^\T (\C + \lambda \I_p)^{-1} \mathbf{a}$. This, together with Figure~\ref{fig:compare-kernel-RMT} on RFF ridge regression model, conveys a strong practical motivation of this work.

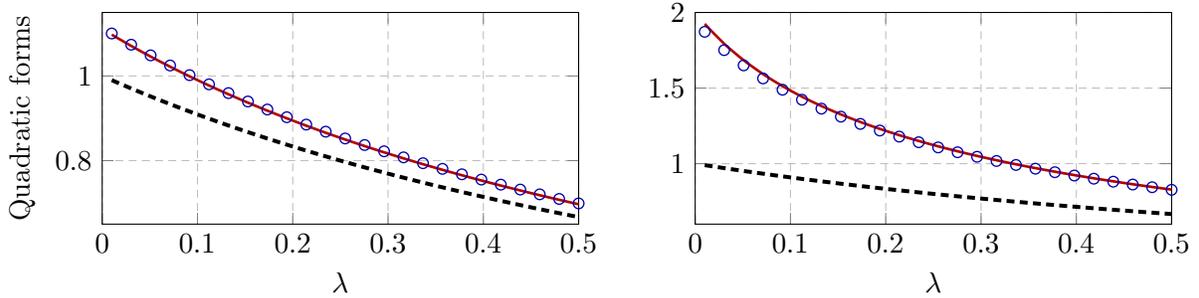
\begin{figure}[htb]
\centering
\begin{tabular}{cc}
\begin{tikzpicture}
    \pgfplotsset{every major grid/.style={style=densely dashed}}
    \begin{axis}[
      width = .45\linewidth,
      height = .25\linewidth,
      xmin=0,
      xmax=0.5,
      ymin=0.65,
      ymax=1.15,
      grid=major,
      scaled ticks=true,
      xlabel={ $\lambda$},
      ylabel= {Quadratic forms},
      legend style = {at={(0.02,0.98)}, anchor=north west, font=\footnotesize}
      ]
      \addplot[BLUE,only marks,mark=o,line width=.5pt] coordinates{
          (0.010000, 1.100330)(0.030417, 1.073815)(0.050833, 1.048648)(0.071250, 1.024724)(0.091667, 1.001947)(0.112083, 0.980231)(0.132500, 0.959501)(0.152917, 0.939686)(0.173333, 0.920724)(0.193750, 0.902559)(0.214167, 0.885138)(0.234583, 0.868415)(0.255000, 0.852347)(0.275417, 0.836894)(0.295833, 0.822020)(0.316250, 0.807692)(0.336667, 0.793879)(0.357083, 0.780552)(0.377500, 0.767686)(0.397917, 0.755255)(0.418333, 0.743238)(0.438750, 0.731614)(0.459167, 0.720362)(0.479583, 0.709464)(0.500000, 0.698904)
      };
      \addplot[black,densely dashed,smooth,line width=1.5pt] coordinates{
          (0.010000, 0.990099)(0.030417, 0.970481)(0.050833, 0.951626)(0.071250, 0.933489)(0.091667, 0.916031)(0.112083, 0.899213)(0.132500, 0.883002)(0.152917, 0.867365)(0.173333, 0.852273)(0.193750, 0.837696)(0.214167, 0.823610)(0.234583, 0.809990)(0.255000, 0.796813)(0.275417, 0.784057)(0.295833, 0.771704)(0.316250, 0.759734)(0.336667, 0.748130)(0.357083, 0.736874)(0.377500, 0.725953)(0.397917, 0.715350)(0.418333, 0.705053)(0.438750, 0.695048)(0.459167, 0.685323)(0.479583, 0.675866)(0.500000, 0.666667)
      };
      \addplot[RED,smooth,line width=1pt] coordinates{
          (0.010000, 1.097577)(0.030417, 1.071037)(0.050833, 1.045861)(0.071250, 1.021940)(0.091667, 0.999175)(0.112083, 0.977480)(0.132500, 0.956775)(0.152917, 0.936992)(0.173333, 0.918066)(0.193750, 0.899939)(0.214167, 0.882559)(0.234583, 0.865879)(0.255000, 0.849855)(0.275417, 0.834447)(0.295833, 0.819618)(0.316250, 0.805335)(0.336667, 0.791568)(0.357083, 0.778286)(0.377500, 0.765465)(0.397917, 0.753078)(0.418333, 0.741105)(0.438750, 0.729523)(0.459167, 0.718314)(0.479583, 0.707458)(0.500000, 0.696938)
      };
    \end{axis}
  \end{tikzpicture}
  &
  \begin{tikzpicture}
    \pgfplotsset{every major grid/.style={style=densely dashed}}
    \begin{axis}[
      width = .45\linewidth,
      height = .25\linewidth,
      xmin=0,
      xmax=0.5,
      ymin=0.6,
      ymax=2,
      grid=major,
      scaled ticks=true,
      xlabel={ $\lambda$},
      ylabel= {},
      legend style = {at={(0.02,0.98)}, anchor=north west, font=\footnotesize}
      ]
      \addplot[BLUE,only marks,mark=o,line width=.5pt] coordinates{
      (0.010000, 1.872020)(0.030417, 1.751010)(0.050833, 1.650070)(0.071250, 1.563859)(0.091667, 1.488910)(0.112083, 1.422840)(0.132500, 1.363945)(0.152917, 1.310960)(0.173333, 1.262923)(0.193750, 1.219085)(0.214167, 1.178850)(0.234583, 1.141740)(0.255000, 1.107362)(0.275417, 1.075391)(0.295833, 1.045555)(0.316250, 1.017625)(0.336667, 0.991404)(0.357083, 0.966725)(0.377500, 0.943441)(0.397917, 0.921427)(0.418333, 0.900572)(0.438750, 0.880777)(0.459167, 0.861958)(0.479583, 0.844036)(0.500000, 0.826944)
      };
      \addplot[black,densely dashed,smooth,line width=1.5pt] coordinates{
      (0.010000, 0.990099)(0.030417, 0.970481)(0.050833, 0.951626)(0.071250, 0.933489)(0.091667, 0.916031)(0.112083, 0.899213)(0.132500, 0.883002)(0.152917, 0.867365)(0.173333, 0.852273)(0.193750, 0.837696)(0.214167, 0.823610)(0.234583, 0.809990)(0.255000, 0.796813)(0.275417, 0.784057)(0.295833, 0.771704)(0.316250, 0.759734)(0.336667, 0.748130)(0.357083, 0.736874)(0.377500, 0.725953)(0.397917, 0.715350)(0.418333, 0.705053)(0.438750, 0.695048)(0.459167, 0.685323)(0.479583, 0.675866)(0.500000, 0.666667)
      };
      \addplot[RED,smooth,line width=1pt] coordinates{
      (0.010000, 1.924474)(0.030417, 1.793120)(0.050833, 1.684501)(0.071250, 1.592409)(0.091667, 1.512847)(0.112083, 1.443092)(0.132500, 1.381206)(0.152917, 1.325764)(0.173333, 1.275685)(0.193750, 1.230134)(0.214167, 1.188453)(0.234583, 1.150110)(0.255000, 1.114677)(0.275417, 1.081796)(0.295833, 1.051172)(0.316250, 1.022556)(0.336667, 0.995736)(0.357083, 0.970531)(0.377500, 0.946785)(0.397917, 0.924363)(0.418333, 0.903146)(0.438750, 0.883030)(0.459167, 0.863924)(0.479583, 0.845747)(0.500000, 0.828427)
      };
    \end{axis}
  \end{tikzpicture}
\end{tabular}
\caption{ Quadratic forms $\mathbf{a}^\T \Q_{\hat \C}(\lambda) \mathbf{a}$ as a function of $\lambda$, for $p=512$, $n=10p$ \textbf{(left)} and $n = 2p$ \textbf{(right)}. Empirical results displayed in {\BLUE \bf blue} circles; population predictions $\mathbf{a}^\T (\C + \lambda \I_p)^{-1} \mathbf{a}$ (assuming $n \to \infty$ alone with $p$ fixed) in {\bf black} dashed lines; and RMT prediction from \eqref{eq:MP} in {\RED \bf red} solid lines. Results obtained by averaging over $50$ runs.}
\label{fig:MP-DE}
\end{figure}

\subsection{Our Main Contributions}

We consider the RFF model in the more realistic large $n,p,N$ limit.
While, in this setting, the RFF empirical Gram matrix does \emph{not} converge to the Gaussian kernel matrix, we can characterize the Gram matrix behavior as $n,p,N\rightarrow\infty$ and provide \emph{asymptotic performance guarantees} for RFF on large-scale problems.
We also identify a phase transition as a function of the ratio $N/n$, including the corresponding double descent phenomenon.
In more detail, our contributions are the~following.
\begin{enumerate}
  \item
  We provide a \emph{precise} characterization of the asymptotics of the RFF empirical Gram matrix, in the large $n,p,N$ limit (Theorem~\ref{theo:asy-behavior-E[Q]}).
  This is accomplished by constructing a deterministic equivalent for the resolvent of the RFF Gram matrix.
  Based on this, the asymptotic behavior of the RFF model is accessible through a fixed-point equation, that can be interpreted in terms of an angle-like correction induced by the non-trivial large $n,p,N$ limit (relative to the $N\rightarrow\infty$ alone limit). 
  \item
  We derive the asymptotic training and test mean squared errors (MSEs) of RFF ridge regression, as a function of the ratio $N/n$, the regularization penalty $\lambda$, and the training as well as test sets (Theorem~\ref{theo:asy-training-MSE}~and~\ref{theo:asy-test-MSE}, respectively).
  We identify precisely the under- to over-parameterization phase transition, as a function of the relative model complexity $N/n$; and we prove the existence of a ``singular'' peak of test error at the $N/n = 1/2$ boundary that characterizes the \emph{double descent} behavior. 
  Importantly, our result is valid \emph{with almost no specific assumption} on the data distribution. 
  This is a significant improvement over existing double descent analyses, which fundamentally rely on the knowledge of the data distribution (often assumed to be Gaussian for simplicity)~\cite{hastie2019surprises,mei2019generalization}.
  \item
  We provide a detailed empirical evaluation of our theoretical results, demonstrating that the theory closely matches empirical results on a range of real-world data sets (Section~\ref{sec:empirical_main} and~\ref{sec:empirical_additional}).
  This includes demonstrating the correction due to the large $n,p,N$ limit, sharp transitions (as a function of $N/n$) in angle-like quantities that disappear as the regularization increases, and the corresponding double descent. 
  This also includes an evaluation of the impact of training-test similarity and the effect of different data sets, thus confirming that (unlike in prior work) the phase transition and double descent hold generally without specific assumption on the data distribution.
\end{enumerate}

\subsection{Related Work}

Here, we provide a brief review of related previous efforts. 

\paragraph{Random features and limiting kernels.} 
In most RFF work~\cite{rahimi2009weighted,bach2017equivalence,avron2017random,rudi2017generalization}, non-asymptotic bounds are given, on the number of random features $N$ needed for a predefined approximation error, for a given kernel matrix with fixed $n,p$. 
A more recent line of work  \cite{allen2019convergence,du2019gradient,jacot2018neural,chizat2019lazy} has focused on the over-parameterized $N \to \infty$ limit of large neural networks by studying the corresponding \emph{neural tangent kernels}.
Here, we position ourselves in the more practical regime where $n,p,N $ are all large and comparable, and we provide \emph{asymptotic performance guarantees} that better fit large-scale problems compared to the large-$N$ analysis.

\paragraph{Random matrix theory.} 
From a random matrix theory perspective, nonlinear Gram matrices of the type $\bSigma_\X^\T \bSigma_\X$ have recently received an unprecedented research interests, due to their close connection to neural networks \cite{pennington2017nonlinear,pennington2017resurrecting,benigni2019eigenvalue,pastur2020random}, with a particular focus on the associated eigenvalue distribution. Here we propose a deterministic equivalent \cite{couillet2011random,hachem2007deterministic} analysis for the resolvent matrix that provides access, not only to the eigenvalue distribution, but also to the regression error of central interest in this article. While most existing deterministic equivalent analyses are performed on linear models, here we focus on the \emph{nonlinear} RFF model. From a technical perspective, the most relevant (random matrix theory) work is \cite{louart2018random,mei2019generalization}. 
We improve their results by considering \emph{generic} data model on the nonlinear RFF model.

\paragraph{Statistical mechanics of learning.} 
There exits a long history of connections between statistical mechanics and machine learning models (such as neural networks), including a range of techniques to establish generalization bounds~\cite{SST92,WRB93,DKST96,EB01_BOOK}, and recently there has been renewed interest~\cite{MM17_TR,MM19_HTSR_ICML,MM19_KDD,MM20_SDM,BKPx20}.
The relevance of this work to our results lies in the use of the thermodynamic limit (akin to the large $n,p,N$ limit), rather than the classical limits more commonly used in statistical learning theory, where uniform convergence bounds and related techniques can be applied. 

\paragraph{Double descent in large-scale learning systems.} 
The large $n,N$ asymptotics of statistical models has received considerable research interests in machine learning \cite{pennington2018emergence,hastie2019surprises}, resulting in a counterintuitive phenomenon referred to as the ``double descent.''
Instead of focusing on different ``phases of learning''~\cite{SST92,WRB93,DKST96,EB01_BOOK,MM17_TR},
the ``double descent'' phenomenon focuses on an empirical manifestation of the phase transition, and it refers to the empirical observations about the form of the test error curve as a function of the model complexity, which differs from the usual textbook description of the bias-variance tradeoff~\cite{advani2020high,belkin2019reconciling,friedman2001elements}.
Theoretical investigation into this phenomenon mainly focuses on the generalization property of various regression models \cite{dobriban2018high,bartlett2020benign,deng2019model,liang2020just,hastie2019surprises,mei2019generalization}.
In most cases, quite specific (and rather strong) assumptions are imposed on the input data distribution. In this respect, our work extends the analysis in \cite{mei2019generalization} to handle the RFF model and its phase structure \emph{on real-world data sets}.

\subsection{Organization of the Paper}
Throughout this article, we follow the convention of denoting scalars by lowercase, vectors by lowercase boldface, and matrices by uppercase boldface letters.  In addition, the notation $(\cdot)^\T$ denotes the transpose operator; the norm $\| \cdot \| $ is the Euclidean norm for vectors and the spectral or operator norm for matrices; and $\asto$ stands for almost sure convergence of random variables.

Our main results on the asymptotic behavior of the RFF resolvent matrix, as well as of the training MSE and testing MSE of RFF ridge regression are presented in Section~\ref{sec:main}, with detailed proofs deferred to the Appendix. 
In Section~\ref{sec:empirical_main}, we provide a detailed empirical evaluation of our main results; and
in Section~\ref{sec:empirical_additional}, we provide additional empirical evaluation on real-world data, illustrating the practical effectiveness of the proposed analysis. 
Concluding remarks are placed in Section~\ref{sec:conclusion}.

\section{Main Technical Results}
\label{sec:main}

In this section, we present our main theoretical results.
To investigate the large $n,p,N$ asymptotics of the RFF model, we shall technically position ourselves under the following assumption.
\begin{Assumption}
\label{ass:high-dim}
As $n \to \infty$, we have
\begin{enumerate}
  \item $ 0 < \liminf_n \min\{\frac{p}n, \frac{N}n\} \le \limsup_n \max\{ \frac{p}n, \frac{N}n \} < \infty$; or, practically speaking, the ratios $p/n$ and $N/n$ are only moderately large or moderately small. 
  \item $ \limsup_n \| \X \| < \infty$ and $\limsup_n \| \y \|_\infty < \infty$, i.e., they are both normalized with respect to $n$. 
\end{enumerate}
\end{Assumption}

\noindent
Under Assumption~\ref{ass:high-dim}, we consider the RFF regression model as in Figure~\ref{fig:RFF-regression}.

\begin{figure}[!hbt]
\centering
\begin{minipage}{0.7\columnwidth}
\centering
\begin{tikzpicture}[node distance = 0.04\linewidth, auto]
\tikzstyle{neuron} = [circle, draw=white, fill=blue!20!white, minimum height=0.03\linewidth, inner sep=0pt]
    \draw [decorate,decoration={brace,amplitude=10pt}] (-0.015\linewidth,-0.26\linewidth) -- (-0.015\linewidth,0.02\linewidth) node [black,midway] {};
    \node [neuron] (neuron 11) {};
    \node [neuron, below of=neuron 11] (neuron 12) {};
    \node [neuron, below of=neuron 12] (neuron 13) {};
    \node [neuron, below of=neuron 13] (neuron 14) {};
    \node [neuron, below of=neuron 14] (neuron 15) {};
    \node [neuron, below of=neuron 15] (neuron 16) {};
    \node [neuron, below of=neuron 16] (neuron 17) {};
    \node [below of=neuron 14, yshift=-0.15\linewidth]{\makecell{$\X \in \RR^{p \times n}$\\ $\hat \X \in \RR^{p \times \hat n}$}};
    \draw [decorate,decoration={brace,mirror,amplitude=10pt}] (0.015\linewidth,-0.26\linewidth) -- (0.015\linewidth,0.02\linewidth) node [black,midway] {};
    \node [neuron, left of=neuron 11, xshift=0.42 \linewidth, yshift=-0.02\linewidth] (neuron 21) {$\sin$};
    \node [neuron, below of=neuron 21,yshift=-0.02\linewidth] (neuron 22) {$\sin$};
    \node [below of=neuron 22] (neuron 23) {};
    \node [neuron, below of=neuron 23] (neuron 24) {$\cos$};
    \node [neuron, below of=neuron 24,yshift=-0.02\linewidth] (neuron 25) {$\cos$};
    \node [below of=neuron 23, yshift=-0.15\linewidth]{ \makecell{$\bSigma_\X^\T =  [\cos(\W \X)^\T,~ \sin(\W\X)^\T]$\\ $\bSigma_{\hat \X}^\T = [\cos(\W \hat \X)^\T,~\sin(\W\hat \X)^\T]$} };
    \node [above of=neuron 21,yshift=0.04\linewidth] {random Fourier features};
    \draw [decorate,decoration={brace,amplitude=10pt}] (0.36\linewidth,-0.24\linewidth) -- (0.36\linewidth,0) node [black,midway] {};
    \draw [decorate,decoration={brace,mirror,amplitude=10pt}] (0.4\linewidth,-0.24\linewidth) -- (0.4\linewidth,0) node [black,midway] {};
    \node [left of=neuron 23, xshift=0.35\linewidth, yshift=0.04\linewidth] (neuron 41) {};
    \node [neuron, below of=neuron 41] (neuron 42) {};
    \draw [->] (-0.08\linewidth,-0.12\linewidth) -- node[left] {$\X$ or $\hat \X$} (-0.05\linewidth, -0.12\linewidth);
    \draw [->] (0.06\linewidth,-0.12\linewidth) -- node[above] {$\W \in \RR^{N \times p}$} (0.32\linewidth,-0.12\linewidth);
    \draw [->] (0.44\linewidth,-0.12\linewidth) -- node[above] {$\bbeta \in \RR^{2N}$ in \eqref{eq:def-beta} } (neuron 42);
\end{tikzpicture}
\caption{Illustration of random Fourier features regression model.
}  
\label{fig:RFF-regression}
\end{minipage}
\end{figure}
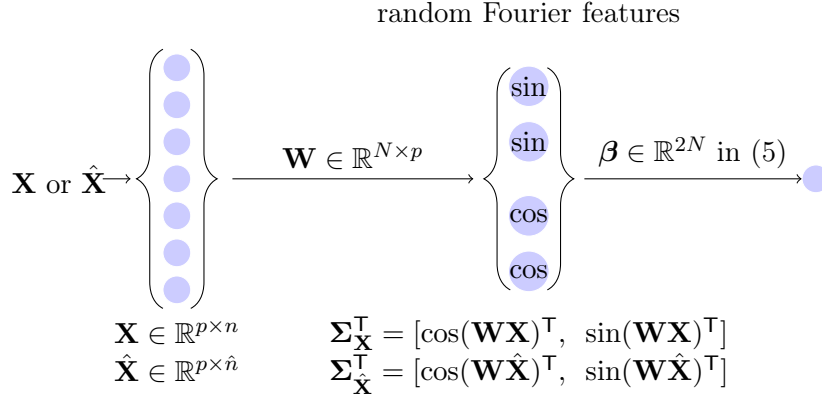

For training data $\X \in \RR^{p \times n}$ of size $n$, the associated random Fourier features, $\bSigma_\X \in \RR^{2N \times n}$, are obtained by computing $\W \X \in\RR^{N \times n}$ for standard Gaussian random matrix $\W \in \RR^{N \times p}$, and then applying entry-wise cosine and sine nonlinearities on $\W \X$, i.e.,
\[
    \bSigma_\X^\T  = \begin{bmatrix} \cos(\W \X)^\T & \sin(\W\X)^\T \end{bmatrix}, \quad \W_{ij} \sim \NN(0,1).
\]
Given this setup, the RFF ridge regressor $\bbeta \in \RR^{2N}$ is given by
\begin{equation}\label{eq:def-beta}
    \bbeta \equiv \frac1n \bSigma_\X \left( \frac1n \bSigma_\X^\T \bSigma_\X + \lambda \I_n \right)^{-1}\y \cdot 1_{2N > n} + \left( \frac1n \bSigma_\X \bSigma_\X^\T + \lambda \I_{2N} \right)^{-1} \frac1n \bSigma_\X\,\y \cdot 1_{2N < n}.
\end{equation}
The two forms of $\bbeta$ in (\ref{eq:def-beta}) are equivalent for any $\lambda > 0$ and minimize the (ridge-regularized) squared loss $\frac1n \| \y - \bSigma_\X^\T \bbeta \|^2 + \lambda \| \bbeta\|^2$ on the training set $(\X, \y)$. 
Our objective is to characterize the large $n,p,N$ asymptotics of both the \emph{training MSE}, $E_{\train}$, and the \emph{test MSE}, $E_{\test}$, defined as
\begin{equation}\label{eq:def-MSE}
    E_{\train} = \frac1n \| \y - \bSigma_\X^\T \bbeta \|^2, \quad 
    E_{\test}  = \frac1{\hat n} \| \hat \y - \bSigma_{\hat \X}^\T \bbeta \|^2, 
\end{equation}
with $\bSigma_{\hat \X}^\T \equiv \begin{bmatrix} \cos(\W \hat \X)^\T & \sin(\W \hat \X)^\T \end{bmatrix} \in \RR^{\hat n \times 2N}$ on a test set $(\hat \X, \hat \y)$ of size $\hat n$, and from this to characterize the phase transition behavior (as a function of the model complexity $N/n$) as mentioned in Section~\ref{sec:introduction}.
In the training phase, the random weight matrix $\W$ is drawn once and kept fixed; and the RFF ridge regressor $\bbeta$ is given explicitly as a function of $\W$ and the training set $(\X,\y)$, as per \eqref{eq:def-beta}. 
In the test phase, for $\bbeta$ now fixed, the model takes the test data $\hat \X$ as input, and it outputs $\bSigma_{\hat \X}^\T \bbeta$ that should be compared to the corresponding target~$\hat \y$ to measure the model test performance.

\subsection{Asymptotic Deterministic Equivalent}
\label{subsec:asy-equiv}

To start, we observe that the training MSE, $E_{\train}$, in \eqref{eq:def-MSE}, can be written as 
\begin{equation}\label{eq:E_train_rewrite}
    E_{\train} = \frac{\lambda^2}n \| \Q (\lambda) \y \|^2  = - \frac{\lambda^2}n \y^\T \frac{\partial \Q(\lambda)}{\partial \lambda} \y  ,
\end{equation}
and it depends on the quadratic form $\y^\T \Q(\lambda) \y$ of
\begin{equation}\label{eq:def-Q}
  \Q (\lambda) \equiv \left( \frac1n \bSigma_\X^\T \bSigma_\X + \lambda \I_n \right)^{-1} \in \RR^{n \times n},
\end{equation}
which is the \emph{resolvent} of $\frac1n \bSigma_\X^\T \bSigma_\X$ (also denoted $\Q$ when there is no ambiguity) with $\lambda > 0$. 
To see this, from \eqref{eq:def-MSE} we have $E_{\train} = \frac1n \| \y - \frac1n \bSigma_\X^\T \bSigma_\X ( \frac1n \bSigma_\X^\T \bSigma_\X + \lambda \I_n )^{-1}\y \|^2 = \frac{\lambda^2}n \| \Q(\lambda) \y \|^2 = - \frac{\lambda^2}n \y^\T \frac{\partial \Q(\lambda)}{\partial \lambda} \y$, with $\frac{\partial \Q(\lambda)}{\partial \lambda} = - \Q^2(\lambda)$. 

To assess the asymptotic training MSE, according to our discussion in Section~\ref{subsec:SCM-and-MP}, it suffices to find a deterministic equivalent for $\Q(\lambda)$ (i.e., a \emph{deterministic} matrix that captures the asymptotic behavior of the latter).
One possibility is its expectation $\EE_\W[\Q(\lambda)]$.
Informally, if the training MSE $E_{\train}$ (that is random due to random $\W$) is ``close to'' some deterministic $\bar E_{\train}$, in the large $n,p,N$ limit, then $\bar E_{\train}$ must have the same limit, as $\EE_\W[E_{\train}] = - \frac{\lambda^2}n \frac{\partial \y^\T \EE_\W[\Q(\lambda)] \y}{\partial \lambda}$ for $n,p,N \to \infty$.
However, $\EE_\W[\Q]$ involves integration (with no closed form due to the matrix inverse), and it is not a convenient quantity with which to work. 
Our objective is to find an asymptotic ``alternative'' for $\EE_\W[\Q]$ that is (i) close to $\EE_\W[\Q]$ in the large $n,p,N \to \infty$ limit and (ii) numerically more accessible.

In the following theorem, we introduce an asymptotic equivalent for $\EE_\W[\Q]$.
Instead of being directly related to the Gaussian kernel matrix $\K_\X = \K_{\cos} + \K_{\sin}$ as suggested by \eqref{eq:Gram-large-N} in the large-$N$ limit, $\EE_\W[\Q]$ depends on the two components $\K_{\cos}$ and $\K_{\sin}$ in a more involved manner, where we recall that
\[
    [\K_{\cos}]_{ij} = e^{-\frac12 (\| \x_i \|^2 + \| \x_j \|^2) } \cosh(\x_i^\T \x_j), \quad [\K_{\sin}]_{ij} = e^{-\frac12 (\| \x_i \|^2 + \| \x_j \|^2) } \sinh(\x_i^\T \x_j)
\]
for $\x_i, \x_j \in \RR^p$ the $i$-th and $j$-th column of $\X$, respectively.
Importantly, the proposed equivalent $\bar \Q$ can be numerically evaluated by running simple fixed-point iterations on $\K_{\cos}$ and $\K_{\sin}$.

\begin{Theorem}[Asymptotic equivalent for \texorpdfstring{$\EE_\W[\Q]$}{E[Q]}]\label{theo:asy-behavior-E[Q]}
Under Assumption~\ref{ass:high-dim}, for $\Q$ defined in \eqref{eq:def-Q} and $\lambda >0$, we have, as $n \to \infty$
\[
    \| \EE_\W[\Q] - \bar \Q \| \to 0
\]
for $\bar \Q \equiv \left( \frac{N}n \left(\frac{\K_{\cos} }{1+\delta_{\cos}} +  \frac{\K_{\sin} }{1+\delta_{\sin}} \right) + \lambda \I_n\right)^{-1}$ and $\K_{\cos} \equiv \K_{\cos}(\X,\X), \K_{\sin} \equiv \K_{\sin}(\X,\X) \in \RR^{n \times n}$ with
\begin{equation}\label{eq:def-K}
    [\K_{\cos}(\X,\X')]_{ij} = e^{-\frac12 (\| \x_i \|^2 + \| \x_j' \|^2) } \cosh(\x_i^\T \x_j'),\quad [\K_{\sin}(\X,\X')]_{ij} = e^{-\frac12 (\| \x_i \|^2 + \| \x_j' \|^2) } \sinh(\x_i^\T \x_j')
\end{equation}
where $(\delta_{\cos}, \delta_{\sin})$ is the unique positive solution to
\[
    \delta_{\cos} = \frac1n \tr (\K_{\cos} \bar \Q), \quad \delta_{\sin} = \frac1n \tr (\K_{\sin} \bar \Q).
\]
\end{Theorem}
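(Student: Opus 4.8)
The plan is to adapt the deterministic-equivalent machinery for nonlinear random-feature Gram matrices of \cite{louart2018random} to the two-nonlinearity (cosine/sine) structure of RFF. Write the Gram matrix as a sum of $N$ correlated ``rank-two'' terms, $\tfrac1n\bSigma_\X^\T\bSigma_\X = \tfrac1n\sum_{t=1}^N(\uu_t\uu_t^\T+\vv_t\vv_t^\T)$, with $\uu_t\equiv\cos(\X^\T\w_t)$, $\vv_t\equiv\sin(\X^\T\w_t)$, where $\w_1,\dots,\w_N\sim\NN(\zo,\I_p)$ are the independent rows of $\W$. Product-to-sum identities give $\EE_\w[\uu_t\uu_t^\T]=\K_{\cos}$, $\EE_\w[\vv_t\vv_t^\T]=\K_{\sin}$, and, crucially, $\EE_\w[\uu_t\vv_t^\T]=\zo$ (the cosine--sine cross-kernel vanishes identically, since $\EE_\w[\sin(\w^\T\z)]=0$ for all $\z$ by symmetry of the Gaussian law); moreover $\|\uu_t\|,\|\vv_t\|\le\sqrt n$ and $\|\Q\|\le\lambda^{-1}$ surely. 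The starting point is the resolvent identity
\[
\EE_\W[\Q] - \bar\Q = \EE_\W\!\big[\Q(\bar\Q^{-1}-\Q^{-1})\bar\Q\big] = \EE_\W[\Q]\,\tfrac{N}{n}\Big(\tfrac{\K_{\cos}}{1+\delta_{\cos}}+\tfrac{\K_{\sin}}{1+\delta_{\sin}}\Big)\bar\Q \;-\; \tfrac1n\sum_{t=1}^N \EE_\W\!\big[\Q(\uu_t\uu_t^\T+\vv_t\vv_t^\T)\big]\bar\Q ,
\]
so it suffices to show that $\tfrac1n\sum_t\EE_\W[\Q\,\uu_t\uu_t^\T]\approx\tfrac{N}{n}\tfrac1{1+\delta_{\cos}}\EE_\W[\Q]\K_{\cos}$ in operator norm, and analogously for the sine block.

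For each $t$, introduce the leave-one-block-out resolvent $\Q_{-t}\equiv\big(\tfrac1n\sum_{s\ne t}(\uu_s\uu_s^\T+\vv_s\vv_s^\T)+\lambda\I_n\big)^{-1}$, which is independent of $\w_t$ and satisfies $\|\Q_{-t}\|\le\lambda^{-1}$, and set $\U_t\equiv[\uu_t,\vv_t]\in\RR^{n\times 2}$. The Woodbury identity gives $\Q\U_t=\Q_{-t}\U_t(\I_2+M_t)^{-1}$ with $M_t\equiv\tfrac1n\U_t^\T\Q_{-t}\U_t$. Conditionally on $\{\w_s\}_{s\ne t}$, the entries of $M_t$ have means $\tfrac1n\tr(\Q_{-t}\K_{\cos})$, $\tfrac1n\tr(\Q_{-t}\K_{\sin})$ on the diagonal and $\tfrac1n\tr(\Q_{-t}\EE_\w[\uu_t\vv_t^\T])=0$ off the diagonal; concentration (next paragraph) then forces $M_t\approx\diag(\delta_{\cos},\delta_{\sin})$ and hence $(\I_2+M_t)^{-1}\approx\diag(\tfrac1{1+\delta_{\cos}},\tfrac1{1+\delta_{\sin}})$. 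This decouples the two channels, yielding $\Q\uu_t\approx\tfrac1{1+\delta_{\cos}}\Q_{-t}\uu_t$ and $\Q\vv_t\approx\tfrac1{1+\delta_{\sin}}\Q_{-t}\vv_t$. Taking expectations and using $\Q_{-t}\perp\w_t$, $\EE_\W[\Q\uu_t\uu_t^\T]\approx\tfrac1{1+\delta_{\cos}}\EE_\W[\Q_{-t}\uu_t\uu_t^\T]=\tfrac1{1+\delta_{\cos}}\EE_\W[\Q_{-t}]\K_{\cos}$; since $\Q-\Q_{-t}$ has rank $\le 2$ with $\|\Q-\Q_{-t}\|\le 2\lambda^{-2}$ (so it contributes $O(1/n)$ once traced against a bounded matrix and rescaled by $n$), one may replace $\EE_\W[\Q_{-t}]$ by $\EE_\W[\Q]$. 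Summing over $t$ and feeding this back into the resolvent identity, the two bracketed terms cancel to leading order, giving $\|\EE_\W[\Q]-\bar\Q\|\to 0$ modulo the error terms.

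The quantitative engine behind every ``$\approx$'' is Gaussian concentration of measure. Since $\cos,\sin$ are $1$-Lipschitz, $\w\mapsto\cos(\X^\T\w)$ and $\w\mapsto\sin(\X^\T\w)$ are $\|\X\|$-Lipschitz maps $\RR^p\to\RR^n$; consequently, for any matrix $\mathbf{A}$ independent of $\w_t$ with $\|\mathbf{A}\|\le\lambda^{-1}$, the scalar maps $\w_t\mapsto\tfrac1n\uu_t^\T\mathbf{A}\uu_t$, $\tfrac1n\uu_t^\T\mathbf{A}\vv_t$, $\tfrac1n\vv_t^\T\mathbf{A}\vv_t$ are Lipschitz with constant $O(n^{-1/2})$ (with a constant depending on $\|\X\|$ and $\lambda$), hence concentrate around their conditional means $\tfrac1n\tr(\mathbf{A}\K_{\cos})$, $0$, $\tfrac1n\tr(\mathbf{A}\K_{\sin})$ with Gaussian tails at scale $n^{-1/2}$ (up to logarithmic factors after a union bound over $t$). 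The same Lipschitz property of $\W\mapsto\Q$ gives $\tfrac1n\tr(\Q\K_{\cos})\to\delta_{\cos}$, $\tfrac1n\tr(\Q\K_{\sin})\to\delta_{\sin}$, closing the self-consistent loop. It remains to verify that the system $\delta_{\cos}=\tfrac1n\tr(\K_{\cos}\bar\Q)$, $\delta_{\sin}=\tfrac1n\tr(\K_{\sin}\bar\Q)$ (with $\bar\Q$ depending on both) is well posed: on the box $[0,\|\K_{\cos}\|/\lambda]\times[0,\|\K_{\sin}\|/\lambda]$ one has $\bar\Q\succeq 0$, the map $(\delta_{\cos},\delta_{\sin})\mapsto(\tfrac1n\tr(\K_{\cos}\bar\Q),\tfrac1n\tr(\K_{\sin}\bar\Q))$ sends this compact set continuously into itself (Brouwer yields existence), and uniqueness follows since the map is coordinate-wise decreasing and a Cauchy--Schwarz bound on $\tfrac1n\tr(\K\bar\Q\K'\bar\Q)$ shows the associated iteration is a strict contraction.

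The step I expect to be the main obstacle is the control of the \emph{aggregate} error in \emph{operator} norm: each per-$t$ replacement is accurate only to relative order $n^{-1/2}$ (up to logs), yet $N=O(n)$ such terms are summed, so one must show the errors do not accumulate. This requires uniform-in-$t$ high-probability bounds (hence the union bound), careful treatment of the residual dependence between $\Q$ and $\w_t$ via the leave-one-out device, a passage from in-probability to expectation bounds exploiting $\|\Q\|\le\lambda^{-1}$, and a perturbation argument showing that substituting the empirical proxies $\tfrac1n\tr(\K_{\cos}\Q)$, $\tfrac1n\tr(\K_{\sin}\Q)$ for the true $(\delta_{\cos},\delta_{\sin})$ inside $\bar\Q$ does not amplify the error (here the strict-contraction property of the fixed-point map is essential). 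A secondary point, specific to RFF and absent in the single-nonlinearity analysis of \cite{louart2018random}, is the vanishing of the cross-kernel $\EE_\w[\cos(\X^\T\w)\sin(\w^\T\X)]=\zo$, which is precisely what diagonalizes $M_t$ and makes the cosine and sine channels contribute additively, each with its own correction factor $1/(1+\delta_{\cos})$ and $1/(1+\delta_{\sin})$.
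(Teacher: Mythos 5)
Your proposal follows essentially the same route as the paper: decompose $\tfrac1n\bSigma_\X^\T\bSigma_\X$ into $N$ rank-two blocks $\U_i\U_i^\T$, use the Woodbury identity with the leave-one-out resolvent $\Q_{-t}$, concentrate the $2\times 2$ matrix $\frac1n\U_t^\T\Q_{-t}\U_t$ via the Lipschitz-Gaussian argument (the paper's Lemma~\ref{lem:trace-lemma}), exploit the vanishing cross-kernel $\EE_\w[\cos(\X^\T\w)\sin(\w^\T\X)]=\zo$ to diagonalize this block, and close the self-consistency loop. The one place where your write-up glosses over what the paper makes precise is the circularity you flag at the end: after the Woodbury step, the diagonal of $\frac1n\U_t^\T\Q_{-t}\U_t$ concentrates around $\alpha_\sigma\equiv\frac1n\tr(\K_\sigma\EE[\Q])$, \emph{not} around $\delta_\sigma=\frac1n\tr(\K_\sigma\bar\Q)$, and showing $\alpha_\sigma\to\delta_\sigma$ is a priori equivalent to the target $\|\EE[\Q]-\bar\Q\|\to0$. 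The paper breaks this by explicitly introducing the intermediary $\tilde\Q$ built from $\alpha_{\cos},\alpha_{\sin}$, proving $\|\EE[\Q]-\tilde\Q\|\to0$ without circularity, and then separately establishing $\|\tilde\Q-\bar\Q\|\to0$ by a Cauchy--Schwarz contraction estimate on $|\alpha_\sigma-\delta_\sigma|$ using the identity $\frac{N}{n}\frac{\K_{\cos}\bar\Q}{1+\delta_{\cos}}=\I_n-\frac{N}{n}\frac{\K_{\sin}\bar\Q}{1+\delta_{\sin}}-\lambda\bar\Q$; this is precisely the ``strict-contraction/perturbation argument'' you correctly identify as essential, so you should make it explicit rather than leave it as an aside. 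One small slip: the fixed-point map $(\delta_{\cos},\delta_{\sin})\mapsto\big(\frac1n\tr(\K_{\cos}\bar\Q),\frac1n\tr(\K_{\sin}\bar\Q)\big)$ is coordinate-wise \emph{increasing}, not decreasing (raising $\delta_\sigma$ shrinks $\bar\Q^{-1}$, hence enlarges $\bar\Q$); uniqueness comes from the standard-interference-function framework (monotone and scalable), as the paper notes, not from a decreasing map.
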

\begin{proof}
See Section~\ref{sec:proof-of-theo-E[Q]} of the appendix.
\end{proof}

\begin{Remark}
\normalfont
Since $\frac{\K_{\cos} }{1+\delta_{\cos}} + \frac{\K_{\sin} }{1+\delta_{\sin}} \succeq \frac{\K}{1+\max(\delta_{\cos}, \delta_{\sin})}$, in the positive definite order, for $\K \equiv \K_{\cos} + \K_{\sin}$ the Gaussian kernel (see again Lemma~\ref{lem:expectation}), $\frac{\K_{\cos}}{1+\delta_{\cos}} + \frac{\K_{\sin}}{1+\delta_{\sin}}$ is positive definite, if $\x_1\, \ldots, \x_n$ are all distinct; see~\cite[Theorem~2.18]{scholkopf2001learning}. 
\end{Remark}

\begin{Remark}
\normalfont
Taking $N/n \to \infty$, one has $\delta_{\cos} \to 0$, $\delta_{\sin} \to 0$, so that 
\[
    \frac{\K_{\cos} }{1+\delta_{\cos}} + \frac{\K_{\sin} }{1+\delta_{\sin}} \to \K_{\cos} + \K_{\sin} = \K ,~\textmd{and}~\bar \Q \to \left( \frac{N}n \K + \lambda \I_n \right)^{-1} \sim \frac{n}N \K^{-1}  ,
\]
for $\lambda>0$, in accordance with the large-$N$ asymptotic prediction. 
In this sense, the pair $(\delta_{\cos}, \delta_{\sin})$ introduced in Theorem~\ref{theo:asy-behavior-E[Q]} accounts for the ``correction'' due to the non-trivial $N/n$ limit, as opposed to the $N \to \infty$ alone limit. 
Also, in the $N/n \to \infty$ limit, when the number of features $N$ is large, the regularization effect of $\lambda$ flattens out and $\bar \Q$ behaves like of (a scaled version of) the inverse Gaussian kernel matrix $\K^{-1}$.
\end{Remark}
 
\begin{Remark}
\label{rem:angle}
\normalfont
Since $\bar \Q$ shares the same eigenspace with $\frac{\K_{\cos} }{1+\delta_{\cos}} + \frac{\K_{\sin} }{1+\delta_{\sin}}$, one can geometrically interpret $(\delta_{\cos}, \delta_{\sin})$ as a sort of ``angle'' between the eigenspace of $\K_{\cos}, \K_{\sin} \in \RR^{n \times n}$ and that of $\frac{\K_{\cos} }{1+\delta_{\cos}} + \frac{\K_{\sin} }{1+\delta_{\sin}}$, weighted by the associated eigenvalues. 
For fixed $n$, as $N \to \infty$, we have
\[
    \frac1N \sum_{t=1}^N \cos(\X^\T \w_t) \cos(\w_t^\T \X) \to \K_{\cos}, \quad \frac1N \sum_{t=1}^N \sin(\X^\T \w_t) \sin(\w_t^\T \X) \to \K_{\sin},
\]
the eigenspaces of which are ``orthogonal'' to each other, so that $\delta_{\cos}, \delta_{\sin} \to 0$. On the other hand, as $N,n \to \infty$, the eigenspaces of $\K_{\cos}$ and $\K_{\sin}$ ``intersect'' with each other, captured by the non-trivial correction $(\delta_{\cos}, \delta_{\sin})$. 
\end{Remark}

\subsection{Asymptotic Training Performance}

Theorem~\ref{theo:asy-behavior-E[Q]} provides an asymptotically more tractable approximation of $\EE_\W[\Q]$ under the form of a fixed-point equation.
Together with some additional concentration arguments (e.g., from \cite[Theorem~2]{louart2018random}), this permits us to provide a complete description of (i) bilinear forms $\mathbf{a}^\T \Q \mathbf{b}$, for $\mathbf{a}, \mathbf{b} \in \RR^n$ of bounded norms, with $\mathbf{a}^\T \Q \mathbf{b} - \mathbf{a}^\T \bar \Q \mathbf{b}~\asto~0$, as $n,p,N \to \infty$; and (ii) the (normalized) trace of the type $\frac1n \tr \A \Q - \frac1n \tr \A \bar \Q~\asto~0$, for $\A$ of bounded operator norm.

The item (i), together with \eqref{eq:E_train_rewrite}, leads to the following result on the asymptotic training error. 

\begin{Theorem}[Asymptotic training performance]\label{theo:asy-training-MSE}
Under Assumption~\ref{ass:high-dim}, we have, for training MSE, $E_{\train}$ defined in \eqref{eq:def-MSE}, that, as $n \to \infty$
\begin{equation}
  E_{\train} - \bar E_{\train}~\asto~0, \quad \bar E_{\train} = \frac{\lambda^2}n \| \bar \Q \y \|^2  + \frac{N}n \frac{\lambda^2}n \begin{bmatrix} \frac{ \frac1n \tr (\bar \Q \K_{\cos} \bar \Q)}{ (1+\delta_{\cos})^2 } & \frac{ \frac1n \tr (\bar \Q \K_{\sin} \bar \Q) }{ (1+\delta_{\sin})^2 } \end{bmatrix} \bOmega \begin{bmatrix} \y^\T \bar \Q \K_{\cos} \bar \Q \y \\ \y^\T \bar \Q \K_{\sin} \bar \Q \y \end{bmatrix}
\end{equation}
for $\bar \Q$ defined in Theorem~\ref{theo:asy-behavior-E[Q]} and
\begin{equation}\label{eq:def-Omega}
  \bOmega^{-1} \equiv \I_2 - \frac{N}n \begin{bmatrix} \frac{ \frac1n \tr (\bar \Q \K_{\cos} \bar \Q \K_{\cos}) }{ (1+\delta_{\cos})^2 } & \frac{ \frac1n \tr (\bar \Q \K_{\cos} \bar \Q \K_{\sin}) }{ (1+\delta_{\sin})^2 } \\ \frac{ \frac1n \tr (\bar \Q \K_{\cos} \bar \Q \K_{\sin}) }{ (1+\delta_{\cos})^2 } & \frac{ \frac1n \tr (\bar \Q \K_{\sin} \bar \Q \K_{\sin}) }{ (1+\delta_{\sin})^2 } \end{bmatrix}.
\end{equation}
\end{Theorem}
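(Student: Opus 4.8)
The plan is to obtain $E_{\train}$ from Theorem~\ref{theo:asy-behavior-E[Q]} by a differentiation-in-$\lambda$ argument, so that the only genuinely new work is an implicit differentiation of the fixed-point system defining $\bar\Q$. Starting from \eqref{eq:E_train_rewrite}, $E_{\train} = \frac{\lambda^2}{n}\|\Q(\lambda)\y\|^2 = -\frac{\lambda^2}{n}\y^\T\frac{\partial\Q(\lambda)}{\partial\lambda}\y$, the first step is to record that $\y^\T\Q(\lambda)\y - \y^\T\bar\Q(\lambda)\y \asto 0$. This is exactly item~(i) of the discussion preceding the statement, applied with $\ba=\bb=\y$ (legitimate since $\limsup_n\|\y\|<\infty$ under Assumption~\ref{ass:high-dim}): it combines $\y^\T\EE_\W[\Q]\y - \y^\T\bar\Q\y\to 0$ from Theorem~\ref{theo:asy-behavior-E[Q]} with the Lipschitz concentration of the map $\W\mapsto\y^\T\Q\y$ (e.g.\ \cite[Theorem~2]{louart2018random}), which, via Borel--Cantelli, upgrades the convergence in mean to an almost sure statement.

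The second step is to transfer this convergence to the $\lambda$-derivative. Both $\lambda\mapsto\y^\T\Q(\lambda)\y$ and $\lambda\mapsto\y^\T\bar\Q(\lambda)\y$ extend holomorphically to a complex neighbourhood of $(0,\infty)$ --- the former because $\frac1n\bSigma_\X^\T\bSigma_\X\succeq 0$ keeps $\|\Q(z)\|$ bounded away from $\mathbb{R}_{\le 0}$, the latter because the fixed point $(\delta_{\cos},\delta_{\sin})$ depends analytically on $\lambda$ by the implicit function theorem (using the invertibility of $\bOmega^{-1}$ discussed below) --- and on any compact $K\subset(0,\infty)$ both are bounded by $\|\y\|^2/\min_{K}\lambda$. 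Applying the first step along a countable dense subset of $(0,\infty)$ on a single almost sure event and invoking a normal-families (Vitali) argument then makes the convergence locally uniform, hence differentiable term by term, so that $E_{\train} + \frac{\lambda^2}{n}\frac{\partial}{\partial\lambda}\big(\y^\T\bar\Q(\lambda)\y\big) \asto 0$; it remains to evaluate the deterministic limit $\bar E_{\train} := -\frac{\lambda^2}{n}\frac{\partial}{\partial\lambda}(\y^\T\bar\Q\y)$.

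The third step is the explicit computation, where $\bOmega$ appears. Writing $\bar\Q^{-1} = \frac{N}{n}\big(\frac{\K_{\cos}}{1+\delta_{\cos}}+\frac{\K_{\sin}}{1+\delta_{\sin}}\big)+\lambda\I_n$ and differentiating gives $\frac{\partial\bar\Q}{\partial\lambda} = -\bar\Q^2 + \frac{N}{n}\sum_{a\in\{\cos,\sin\}}\frac{\partial_\lambda\delta_a}{(1+\delta_a)^2}\,\bar\Q\K_a\bar\Q$. Substituting this into $\delta_a = \frac1n\tr(\K_a\bar\Q)$ and differentiating yields the $2\times2$ linear system $\partial_\lambda\delta_a = -\frac1n\tr(\K_a\bar\Q^2) + \frac{N}{n}\sum_b\frac{\frac1n\tr(\bar\Q\K_a\bar\Q\K_b)}{(1+\delta_b)^2}\,\partial_\lambda\delta_b$, whose coefficient matrix is exactly $\bOmega^{-1}$ as in \eqref{eq:def-Omega} (the off-diagonal entries agreeing because $\tr(\bar\Q\K_{\cos}\bar\Q\K_{\sin})=\tr(\bar\Q\K_{\sin}\bar\Q\K_{\cos})$), so that $(\partial_\lambda\delta_{\cos},\partial_\lambda\delta_{\sin})^\T = -\bOmega(\tfrac1n\tr(\K_{\cos}\bar\Q^2),\tfrac1n\tr(\K_{\sin}\bar\Q^2))^\T$. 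Plugging this back into $-\frac{\partial}{\partial\lambda}(\y^\T\bar\Q\y) = \y^\T\bar\Q^2\y - \frac{N}{n}\sum_a\frac{\partial_\lambda\delta_a}{(1+\delta_a)^2}\,\y^\T\bar\Q\K_a\bar\Q\y$ produces the stated $\bar E_{\train}$, with $\frac{\lambda^2}{n}\|\bar\Q\y\|^2$ the leading term and the $\bOmega$-bilinear form the variance correction that would be invisible if one instead differentiated the naive $N\to\infty$ equivalent $(\frac{N}{n}\K+\lambda\I_n)^{-1}$; the asymmetric appearance of that bilinear form in the statement is reconciled with the symmetric computation here by the identity $\bOmega^\T\diag((1+\delta_a)^{-2}) = \diag((1+\delta_a)^{-2})\bOmega$, itself an elementary rearrangement using the symmetry of $\frac1n\tr(\bar\Q\K_a\bar\Q\K_b)$.

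I expect the main obstacles to be twofold. First, the rigorous version of the differentiation step: turning a pointwise-in-$\lambda$ almost sure limit into one that may be differentiated requires the holomorphy plus local-boundedness package above, a simultaneous-over-$\lambda$ formulation, and the $C^1$ dependence of $(\delta_{\cos},\delta_{\sin})$ on $\lambda$. Second, showing $\bOmega$ is well-defined, i.e.\ that the matrix subtracted from $\I_2$ in \eqref{eq:def-Omega} has spectral radius strictly below $1$: this matrix equals $\frac{N}{n}\,\diag((1+\delta_a)^{-2})$ times the Gram matrix $\big(\frac1n\langle\bar\Q^{1/2}\K_a\bar\Q^{1/2},\,\bar\Q^{1/2}\K_b\bar\Q^{1/2}\rangle_F\big)_{a,b}$, hence is similar to a positive semidefinite matrix, and it is, up to this diagonal scaling, the Jacobian at its fixed point of the $(\delta_{\cos},\delta_{\sin})$ map; the strict bound then follows from the contraction/monotonicity properties of that map already used to prove uniqueness in Theorem~\ref{theo:asy-behavior-E[Q]}. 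An alternative route that avoids differentiation altogether is to establish directly a second-order deterministic equivalent for $\EE_\W[\Q\,\y\y^\T\Q]$ via the leave-one-column-out and Stein-type arguments of \cite{louart2018random}; this is essentially equivalent but shifts the delicate bookkeeping onto the cross-terms between the $\cos$ and $\sin$ feature blocks.
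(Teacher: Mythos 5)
Your proposal is correct, and it takes a genuinely different route from the paper. The paper's own proof is essentially the approach you flag at the very end as an ``alternative route'': it first establishes a \emph{second-order} deterministic equivalent for $\EE_\W[\Q\A\Q]$ (Lemma~\ref{lem:asy-behavior-E[QAQ]} in the appendix), proved by the same leave-one-column-out plus Woodbury manipulations as Theorem~\ref{theo:asy-behavior-E[Q]}, and then specializes $\A=\I_n$ in $E_{\train}=\frac{\lambda^2}{n}\y^\T\Q^2\y$ together with concentration of the bilinear form. The two-by-two matrix $\bOmega$ emerges there by solving the self-consistent pair of equations for $\EE[\Q\K_{\cos}\Q]$ and $\EE[\Q\K_{\sin}\Q]$. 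Your route instead differentiates the first-order equivalent $\bar\Q$ in $\lambda$, so that $\bOmega$ arises as the inverse Jacobian of the fixed-point map for $(\delta_{\cos},\delta_{\sin})$; your Step~3 is in fact literally the paper's Lemma~\ref{lem:delta-derivative-lambda} (where $\partial_\lambda\delta=-\bOmega\,\bigl(\tfrac1n\tr(\bar\Q\K_{\cos}\bar\Q),\tfrac1n\tr(\bar\Q\K_{\sin}\bar\Q)\bigr)^\T$ is stated, and whose proof of well-posedness of $\bOmega$ is Lemma~\ref{lem:property-of-Delta}). Your algebra checks out, including the reconciliation of the asymmetric-looking bilinear form via $\diag((1+\delta_a)^{-2})\bOmega$ being symmetric (equivalently $\bOmega^{-1}\diag((1+\delta_a)^{2})$ symmetric, which is immediate from $c_{ab}=\frac1n\tr(\bar\Q\K_a\bar\Q\K_b)$ being symmetric in $a,b$).

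What each route buys: your differentiation argument reuses Theorem~\ref{theo:asy-behavior-E[Q]} with no further stochastic work, but shifts the burden onto the analytic-continuation / normal-families package --- one must extend $\Q(z)$ and $\bar\Q(z)$ holomorphically to a complex neighbourhood of $(0,\infty)$, verify local uniform boundedness, and establish $C^1$ (indeed analytic) dependence of $(\delta_{\cos},\delta_{\sin})$ on $\lambda$ via the implicit function theorem, which in turn requires $\det\bOmega^{-1}\neq 0$. The paper's route avoids this entirely by working at fixed $\lambda>0$ and carrying out the second-order leave-one-out bookkeeping explicitly (which also yields the general-$\A$ version needed in the proof of Theorem~\ref{theo:asy-test-MSE}, so it is not wasted effort). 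Both are standard and valid; the Vitali argument is common in random matrix theory (one applies the almost-sure statement on a countable dense set of $\lambda$'s, intersects the full-measure events, and invokes normality). The one point you should make sharper if writing this up in full is that the paper's Lemma~\ref{lem:property-of-Delta} gives $\det\bOmega^{-1}>0$ only for real $\lambda>0$; to run the implicit function theorem along a complex path one needs this to persist in a neighbourhood, which it does by continuity, but it should be said.
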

\begin{proof}
See Section~\ref{sec:proof-theo-training-MSE} of the appendix.
\end{proof}

\begin{Remark}\label{rem:E-train-large-N}
\normalfont
Since $E_{\train} = \frac{\lambda^2}n \y^\T \Q^2 \y$, we can see in the expression of $\bar E_{\train}$ that there is not only a first-order (large $n,p,N$) correction in the first $\frac{\lambda^2}n \| \bar \Q \y \|^2$ term (which is different than $\frac{\lambda^2}n \| \Q \y \|^2$), but there is also a second-order correction, appearing in the form of $\bar \Q \K_\sigma \bar \Q$ or $\bar \Q \K_\sigma \bar \Q \K_\sigma$ for $\sigma \in \{\cos, \sin\}$, as in the second term.
This has a similar interpretation to Remark~\ref{rem:angle}, where the pair $(\delta_{\cos}, \delta_{\sin})$ in $\bar \Q$ is (geometrically) interpreted as the eigenspace ``intersection'' due to a non-vanishing $n/N$. 
In particular, taking $N/n \to \infty$, we have $ \bar \Q \sim \frac{n}N  \K^{-1}$, $\bOmega \to \I_2$ so that $ \bar E_{\train} = 0$ and the model interpolates the entire training set, as expected.
\end{Remark}

\begin{Remark}
\normalfont
One can show that (i) for a given $n$ and fixed $\lambda > 0$, the error $\bar E_{\train}$ decreases as the model size $N$ increases; and (ii) for a given ratio $N/n$, $\bar E_{\train}$ increases as the regularization $\lambda$ grows large. 
\end{Remark}

\subsection{Asymptotic Test Performance}

Theorem~\ref{theo:asy-training-MSE} holds without any restriction on the training set, $(\X,\y)$, except for Assumption~\ref{ass:high-dim}, since only the randomness of $\W$ is involved, and thus one can simply treat $(\X,\y)$ as known in this result. 
This is no longer the case when the test error is considered. 
Intuitively, the test data $\hat \X$ cannot be chosen arbitrarily (with respect to the training data), and one must ensure that the test data ``behave'' statistically like the training data, in a ``well-controlled'' manner, so that the test MSE is asymptotically deterministic and bounded as $n,\hat n,p,N \to \infty$. 
Following this intuition, we work under the following assumption. 
\begin{Assumption}[Data as concentrated random vectors \cite{louart2018concentration}]
\label{ass:data-concent}
The training data $\x_i \in \RR^p, i \in \{1,\ldots,n\}$, are independently drawn (non-necessarily uniformly) from one of $K>0$ distribution classes\footnote{$K \ge 2$ is included to cover multi-class classification problems; and $K$ should remain fixed as $n,p \to \infty$.} $\mu_1, \ldots, \mu_K$. There exist constants $C, \eta, q > 0$ such that for any $\x_i \sim \mu_k, k \in \{1,\ldots,K\}$ and any $1$-Lipschitz function $f: \RR^p \to \RR$, we have 
\begin{equation}\label{eq:def-concentration}
  \mathbb P \left( \left| f(\x_i) - \EE[f(\x_i)] \right| \ge t \right) \le C e^{-(t/\eta)^q}, \quad t \ge 0.
\end{equation}
The test data $\hat \x_i \sim \mu_k$, $i \in \{1,\ldots,\hat n\}$ are mutually independent, but \emph{may depend on} training data $\X$ and $\| \EE[\sigma(\W \X) - \sigma(\W \hat \X)] \| = O(\sqrt n)$ for $\sigma \in \{ \cos, \sin\}$. 
\end{Assumption}

To facilitate discussion of the phase transition and the double descent, we do not assume independence between training data and test data (but we do assume independence between different columns within $\X$ and $\hat \X$).
In this respect, Assumption~\ref{ass:data-concent} is weaker than the classical i.i.d.~assumption on the training and test samples. 
In particular, under Assumption~\ref{ass:data-concent} we permit $\hat \X = \X$, so that the test MSE coincides with the training MSE, as well as $\hat \X = \X + \boldsymbol{\varepsilon}$, for some independent random noise $\boldsymbol{\varepsilon}$.
This permits us to illustrate the impact of training-test data similarity on the RFF model performance (Section~\ref{subsec:impact-train-test-similarity}).

The simplest example of concentrated random vectors satisfying \eqref{eq:def-concentration} is the standard Gaussian vector $\mathcal N(\zo, \I_p)$ \cite{ledoux2005concentration}. 
Moreover, since the concentration property in \eqref{eq:def-concentration} is stable over Lipschitz transformations \cite{louart2018concentration}, for any $1$-Lipschitz mapping $g: \RR^d \mapsto \RR^p$ and $\z \sim \mathcal N(\zo, \I_d)$,  $g(\z)$ also satisfies \eqref{eq:def-concentration}. 
In this respect, Assumption~\ref{ass:data-concent}, although seemingly quite restrictive, represents a large family of ``generative models'' (including ``fake images'' generated by modern generative adversarial networks (GANs) that are, by construction, Lipschitz transformations of large random Gaussian vectors \cite{goodfellow2014generative,seddik2020random}). 
As such, from a practical consideration, Assumption~\ref{ass:data-concent} can provide a more realistic and flexible statistical model for real-world data.

With Assumption~\ref{ass:data-concent}, we now present the following result on the asymptotic test error.

\begin{Theorem}[Asymptotic test performance]\label{theo:asy-test-MSE}
Under Assumptions~\ref{ass:high-dim}~and~\ref{ass:data-concent}, we have, for $\lambda > 0$, test MSE $E_{\test}$ defined in \eqref{eq:def-MSE} and test data $(\hat \X, \hat \y)$ satisfying $ \limsup_{\hat n} \| \hat \X \| < \infty$, $ \limsup_{\hat n} \| \hat \y \|_\infty < \infty$ with $\hat n/ n \in (0, \infty)$ that, as $n \to \infty$
\begin{equation}
  E_{\test} - \bar E_{\test}~\asto~0, \quad \bar E_{\test} = \frac1{\hat n} \| \hat \y - \frac{N}n \hat \bPhi \bar \Q \y \|^2  + \frac{N^2}{n^2} \frac1{\hat n} \begin{bmatrix} \frac{\Theta_{\cos}}{(1+\delta_{\cos})^2} & \frac{\Theta_{\sin}}{(1+\delta_{\sin})^2} \end{bmatrix} \bOmega \begin{bmatrix} \y^\T \bar \Q \K_{\cos} \bar \Q \y \\ \y^\T \bar \Q \K_{\sin} \bar \Q \y \end{bmatrix}, 
\end{equation}
for $\bOmega$ in \eqref{eq:def-Omega},
\begin{equation}\label{eq:def-Theta}
    \Theta_\sigma = \frac1N \tr \K_\sigma (\hat \X, \hat \X) + \frac{N}n \frac1n \tr \bar \Q \hat \bPhi^\T \hat \bPhi \bar \Q \K_\sigma - \frac2n \tr \bar \Q \hat \bPhi^\T \K_\sigma (\hat \X, \X), \quad \sigma \in \{\cos, \sin\}, 
\end{equation}
and
\[
  \bPhi \equiv \frac{\K_{\cos}}{1+\delta_{\cos}} + \frac{\K_{\sin}}{1+\delta_{\sin}},~\hat \bPhi \equiv \frac{\K_{\cos}(\hat \X, \X)}{1+\delta_{\cos}} + \frac{\K_{\sin}(\hat \X, \X)}{1+\delta_{\sin}}  ,
\]
with $\K_{\cos}(\hat \X, \X), \K_{\sin}(\hat \X, \X) \in \RR^{\hat n \times n}$ and  $\K_{\cos}(\hat \X, \hat \X), \K_{\sin}(\hat \X, \hat \X) \in \RR^{\hat n \times \hat n}$ defined in \eqref{eq:def-K}.
\end{Theorem}

\begin{proof}
See Section~\ref{sec:proof-theo-test-MSE} of the appendix.
\end{proof}

\begin{Remark}
\normalfont
Similar to Theorem~\ref{theo:asy-training-MSE} on $\bar E_{\train}$, here the expression for $\bar E_{\test}$ is also given as the sum of first- and second-order corrections. 
To see this, one can confirm, by taking $(\hat \X, \hat \y) = (\X,\y)$, that the first term in $\bar E_{\test}$ becomes
\[
    \frac1{\hat n} \| \hat \y - \frac{N}n \hat \bPhi \bar \Q \y \|^2 = \frac1{n} \| \y - \frac{N}n \bPhi \bar \Q \y \|^2 = \frac{\lambda^2}n \| \bar \Q \y \|^2
\]
and is equal to the first term in $\bar E_{\train}$, where we used the fact that $\frac{N}n \bPhi \bar \Q = \I_n  - \lambda \bar \Q$. 
The same also holds for the second term, so that one obtains $\bar E_{\test} = \bar E_{\train}$, with $(\hat \X, \hat \y) = (\X,\y)$, as expected. 
From this perspective, Theorem~\ref{theo:asy-test-MSE} can be seen as an extension of Theorem~\ref{theo:asy-training-MSE}, with the ``interaction'' between training and test data (e.g., test-versus-test $\K_\sigma(\hat \X, \hat \X)$ and test-versus-train $\K_\sigma(\hat \X, \X)$ interaction matrices) summarized in the scalar parameter $\Theta_\sigma$ defined in \eqref{eq:def-Theta}, for $\sigma \in \{\cos, \sin \}$.
\end{Remark}

\begin{Remark}
\normalfont
By taking $N/n \to \infty$, we have that $\bar \Q \sim \frac{n}N \K^{-1}$, $\Theta_\sigma \sim N^{-1}$, $\bOmega \to \I_2$, and consequently
\[
    \lim_{N/n \to \infty} \bar E_{\test} =  \frac1{\hat n} \| \hat \y - \K(\hat \X, \X) \K^{-1} \y \|^2   .
\]
This is the test MSE of classical Gaussian kernel regression, with $\K(\hat \X,\X)\equiv \K_{\cos}(\hat \X, \X) + \K_{\sin}(\hat \X, \X) \in \RR^{\hat n \times n}$ the test-versus-train Gaussian kernel matrix. 
As opposed to the training MSE discussed in Remark~\ref{rem:E-train-large-N}, here $\bar E_{\test}$ generally has a non-zero limit (that is, however, \emph{independent} of $\lambda$) as $N/n \to \infty$.
\end{Remark}

\section{Empirical Performance and Practical Implications}
\label{sec:empirical_main}

In this section, we provide a detailed empirical evaluation, including a discussion of the behavior of the fixed point equation in Theorem~\ref{theo:asy-behavior-E[Q]}, and its consequences in Theorem~\ref{theo:asy-training-MSE} and Theorem~\ref{theo:asy-test-MSE}.
In particular, we describe the behavior of the pair $(\delta_{\cos}, \delta_{\sin})$ that characterizes the necessary correction in the large $n,p,N$ regime, as a function of the regularization penalty $\lambda$ and the ratio $N/n$. 
This explains: (i) the mismatch between empirical regression errors from the Gaussian kernel prediction (Figure~\ref{fig:compare-kernel-RMT});
(ii) the behavior of $(\delta_{\cos}, \delta_{\sin})$ as a function of $\lambda$ (Figure~\ref{fig:trend-delta-lambda});
(iii) the behavior of $(\delta_{\cos}, \delta_{\sin})$ as a function of $N/n$, which clearly indicates two phases of learning and the transition between them (Figure~\ref{fig:trend-delta-N/n});
and (iv) the corresponding double descent test error curves (Figure~\ref{fig:double-descent-regularization-2}). 

\subsection{Correction due to the Large $n,p,N$ Regime}
\label{subsec:discuss-delta-versus-lambda}

The nonlinear Gram matrix $\frac1n \bSigma_\X^\T \bSigma_\X$ is \emph{not} close to the classical Gaussian kernel matrix $\K$ in the large $n,p,N$ regime and, as a consequence, its resolvent $\Q$, as well the training and test MSE, $E_{\train}$ and $E_{\test}$ (that are functions of $\Q$), behave quite differently from the Gaussian kernel predictions. 
Indeed, for $\lambda > 0$, the following equation determines the pair $(\delta_{\cos}, \delta_{\sin})$ that characterizes the correction when considering $n,p,N$ all large, compared to the large-$N$ only asymptotic regime:
\begin{equation}\label{eq:fixed-point-delta}
    \delta_{\cos} = \frac1n \tr \K_{\cos} \bar \Q, \quad \delta_{\sin} = \frac1n \tr \K_{\sin} \bar \Q, \quad \bar \Q = \left( \frac{N}n \left(\frac{\K_{\cos} }{1+\delta_{\cos}} +  \frac{\K_{\sin} }{1+\delta_{\sin}} \right) + \lambda \I_n \right)^{-1}  .
\end{equation}

To start, Figure~\ref{fig:compare-kernel-RMT} compares the training MSE of RFF ridge regression to the predictions from Gaussian kernel regression and to the predictions from our Theorem~\ref{theo:asy-training-MSE}, on the popular MNIST data set \cite{lecun1998gradient}. 
Observe that there is a significant gap for training errors between empirical results and the classical Gaussian kernel predictions, especially when $N/n < 1$, while our predictions \emph{consistently} fit empirical observations almost~perfectly.

Next, from \eqref{eq:fixed-point-delta}, we know that both $\delta_{\cos}$ and $\delta_{\sin}$ are decreasing function of $\lambda$.
(See Lemma~\ref{lem:delta-derivative-lambda} in Appendix~\ref{sec:detail-section-double-descent} for a proof of this fact.)
Figure~\ref{fig:trend-delta-lambda} shows that: 
(i) over a range of different $N/n$, both $\delta_{\cos}$ and $\delta_{\sin}$ decrease monotonically as $\lambda$ increases; 
(ii) the behavior for $N/n <1$, which is decreasing from an initial value of $\delta \gg 1$, is very different than the behavior for $N/n \gtrsim 1$, with an initially flat region where $\delta <1$ for all values of $\lambda$; and 
(iii) the impact of regularization $\lambda$ becomes less significant as the ratio $N/n$ becomes large.
This is in accordance with the limiting behavior of $\bar \Q \sim \frac{n}N \K^{-1}$ that is \emph{independent} of $\lambda$ as $N/n \to \infty$ in Remark~\ref{rem:E-train-large-N}.

\begin{figure}[t] 
\vskip 0.1in
\begin{center}
\begin{minipage}[b]{0.24\columnwidth}%
  \begin{tikzpicture}[font=\scriptsize]
    \pgfplotsset{every major grid/.style={style=densely dashed}}
    \begin{loglogaxis}[
      width=1.15\linewidth,
      xmin=1e-4,
      xmax=1e2,
      ymin=1e-6,
      ymax=1,
      grid=major,
      scaled ticks=true,
      xlabel={ $\lambda$},
      ylabel={ $E_{\train}$ },
      legend style = {at={(0.02,0.98)}, anchor=north west, font=\scriptsize}
      ]
      \addplot[BLUE,only marks,mark=o,line width=.5pt] coordinates{
          (0.000100,0.013769)(0.000178,0.014453)(0.000316,0.013338)(0.000562,0.013595)(0.001000,0.014577)(0.001778,0.013156)(0.003162,0.014088)(0.005623,0.015177)(0.010000,0.015686)(0.017783,0.018574)(0.031623,0.020165)(0.056234,0.023947)(0.100000,0.028299)(0.177828,0.034164)(0.316228,0.042351)(0.562341,0.053571)(1.000000,0.066200)(1.778279,0.083076)(3.162278,0.109571)(5.623413,0.147856)(10.000000,0.202520)(17.782794,0.303887)(31.622777,0.431313)(56.234133,0.580624)(100.000000,0.702857)
      };
      \addplot[black,densely dashed,smooth,line width=1.5pt] coordinates{
          (0.000100,0.000001)(0.000178,0.000003)(0.000316,0.000008)(0.000562,0.000022)(0.001000,0.000063)(0.001778,0.000145)(0.003162,0.000365)(0.005623,0.000794)(0.010000,0.001584)(0.017783,0.003403)(0.031623,0.005795)(0.056234,0.009471)(0.100000,0.014700)(0.177828,0.022086)(0.316228,0.030253)(0.562341,0.042542)(1.000000,0.057042)(1.778279,0.075512)(3.162278,0.101243)(5.623413,0.139996)(10.000000,0.198620)(17.782794,0.298390)(31.622777,0.428876)(56.234133,0.576972)(100.000000,0.705101)
      };
      \addplot[RED,smooth,line width=1pt] coordinates{
          (0.000100,0.013630)(0.000178,0.013974)(0.000316,0.013182)(0.000562,0.013512)(0.001000,0.014120)(0.001778,0.013493)(0.003162,0.014084)(0.005623,0.014739)(0.010000,0.015250)(0.017783,0.018365)(0.031623,0.020085)(0.056234,0.023310)(0.100000,0.027776)(0.177828,0.034376)(0.316228,0.041439)(0.562341,0.052913)(1.000000,0.066630)(1.778279,0.084426)(3.162278,0.109764)(5.623413,0.148265)(10.000000,0.206256)(17.782794,0.304771)(31.622777,0.433377)(56.234133,0.579565)(100.000000,0.706364)
      };
      \node[draw] at (axis cs:3,0.00001) { $N/n=1/4$ };
    \end{loglogaxis}
  \end{tikzpicture}
\end{minipage}
\hfill{}
\begin{minipage}[b]{0.24\columnwidth}%
  \begin{tikzpicture}[font=\scriptsize]
    \pgfplotsset{every major grid/.style={style=densely dashed}}
    \begin{loglogaxis}[
      width=1.15\linewidth,
      xmin=1e-4,
      xmax=1e2,
      ymin=1e-6,
      ymax=1,
      grid=major,
      scaled ticks=true,
      xlabel={ $\lambda$},
      ylabel=\empty,
      legend style = {at={(0.02,0.98)}, anchor=north west, font=\scriptsize}
      ]
      \addplot[BLUE,only marks,mark=o,line width=.5pt] coordinates{
          (0.000100,0.000329)(0.000178,0.000443)(0.000316,0.000611)(0.000562,0.000818)(0.001000,0.001130)(0.001778,0.001641)(0.003162,0.002103)(0.005623,0.003017)(0.010000,0.004223)(0.017783,0.005733)(0.031623,0.008210)(0.056234,0.010849)(0.100000,0.014646)(0.177828,0.020296)(0.316228,0.026155)(0.562341,0.034948)(1.000000,0.044086)(1.778279,0.057289)(3.162278,0.075305)(5.623413,0.099155)(10.000000,0.132538)(17.782794,0.184752)(31.622777,0.280802)(56.234133,0.403896)(100.000000,0.552536) 
      };
      \addplot[black,densely dashed,smooth,line width=1.5pt] coordinates{
          (0.000100,0.000000)(0.000178,0.000001)(0.000316,0.000002)(0.000562,0.000006)(0.001000,0.000019)(0.001778,0.000053)(0.003162,0.000117)(0.005623,0.000302)(0.010000,0.000701)(0.017783,0.001384)(0.031623,0.002934)(0.056234,0.005036)(0.100000,0.008292)(0.177828,0.014020)(0.316228,0.019879)(0.562341,0.028832)(1.000000,0.039238)(1.778279,0.052319)(3.162278,0.071141)(5.623413,0.093997)(10.000000,0.127910)(17.782794,0.184153)(31.622777,0.276933)(56.234133,0.403245)(100.000000,0.548810)
      };
      \addplot[RED,smooth,line width=1pt] coordinates{
          (0.000100,0.000311)(0.000178,0.000412)(0.000316,0.000575)(0.000562,0.000778)(0.001000,0.001094)(0.001778,0.001580)(0.003162,0.002038)(0.005623,0.002932)(0.010000,0.004204)(0.017783,0.005534)(0.031623,0.008062)(0.056234,0.010678)(0.100000,0.014287)(0.177828,0.020240)(0.316228,0.025673)(0.562341,0.034390)(1.000000,0.044376)(1.778279,0.057052)(3.162278,0.075610)(5.623413,0.098272)(10.000000,0.131987)(17.782794,0.188044)(31.622777,0.280295)(56.234133,0.405705)(100.000000,0.550283)
      };
      \node[draw] at (axis cs:3,0.00001) { $N/n=1/2$ };
    \end{loglogaxis}
  \end{tikzpicture}
\end{minipage}
\hfill{}
\begin{minipage}[b]{0.24\columnwidth}%
  \begin{tikzpicture}[font=\scriptsize]
    \pgfplotsset{every major grid/.style={style=densely dashed}}
    \begin{loglogaxis}[
      width=1.15\linewidth,
      xmin=1e-4,
      xmax=1e2,
      ymin=1e-6,
      ymax=1,
      grid=major,
      scaled ticks=true,
      xlabel={ $\lambda$},
      ylabel=\empty,
      legend style = {at={(0.02,0.98)}, anchor=north west, font=\scriptsize}
      ]
      \addplot[BLUE,only marks,mark=o,line width=.5pt] coordinates{
          (0.000100,0.000001)(0.000178,0.000002)(0.000316,0.000006)(0.000562,0.000014)(0.001000,0.000038)(0.001778,0.000097)(0.003162,0.000220)(0.005623,0.000459)(0.010000,0.000838)(0.017783,0.001529)(0.031623,0.002768)(0.056234,0.004509)(0.100000,0.006809)(0.177828,0.010368)(0.316228,0.015788)(0.562341,0.021664)(1.000000,0.029330)(1.778279,0.039828)(3.162278,0.052944)(5.623413,0.070049)(10.000000,0.089943)(17.782794,0.122746)(31.622777,0.174428)(56.234133,0.254073)(100.000000,0.370963)
      };
      \addplot[black,densely dashed,smooth,line width=1.5pt] coordinates{
          (0.000100,0.000000)(0.000178,0.000000)(0.000316,0.000001)(0.000562,0.000002)(0.001000,0.000005)(0.001778,0.000015)(0.003162,0.000041)(0.005623,0.000115)(0.010000,0.000262)(0.017783,0.000576)(0.031623,0.001288)(0.056234,0.002505)(0.100000,0.004383)(0.177828,0.007580)(0.316228,0.012667)(0.562341,0.018681)(1.000000,0.026442)(1.778279,0.037151)(3.162278,0.050265)(5.623413,0.067510)(10.000000,0.087702)(17.782794,0.120447)(31.622777,0.171216)(56.234133,0.253271)(100.000000,0.371138)
      };
      \addplot[RED,smooth,line width=1pt] coordinates{
          (0.000100,0.000001)(0.000178,0.000002)(0.000316,0.000005)(0.000562,0.000014)(0.001000,0.000036)(0.001778,0.000089)(0.003162,0.000200)(0.005623,0.000435)(0.010000,0.000811)(0.017783,0.001478)(0.031623,0.002685)(0.056234,0.004403)(0.100000,0.006704)(0.177828,0.010272)(0.316228,0.015603)(0.562341,0.021574)(1.000000,0.029177)(1.778279,0.039714)(3.162278,0.052664)(5.623413,0.069745)(10.000000,0.089850)(17.782794,0.122521)(31.622777,0.173196)(56.234133,0.255019)(100.000000,0.372472)
      };
      \node[draw] at (axis cs:3,0.00001) { $N/n=1$ };
    \end{loglogaxis}
  \end{tikzpicture}
\end{minipage}
\hfill{}
\begin{minipage}[b]{0.24\columnwidth}%
  \begin{tikzpicture}[font=\scriptsize]
    \pgfplotsset{every major grid/.style={style=densely dashed}}
    \begin{loglogaxis}[
      width=1.15\linewidth,
      xmin=1e-4,
      xmax=1e2,
      ymin=1e-6,
      ymax=1,
      grid=major,
      scaled ticks=true,
      xlabel={ $\lambda$},
      ylabel=\empty,
      legend style = {at={(0.02,0.98)}, anchor=north west, font=\scriptsize}
      ]
      \addplot[BLUE,only marks,mark=o,line width=.5pt] coordinates{
          (0.000100,0.000000)(0.000178,0.000000)(0.000316,0.000000)(0.000562,0.000001)(0.001000,0.000004)(0.001778,0.000011)(0.003162,0.000030)(0.005623,0.000074)(0.010000,0.000169)(0.017783,0.000397)(0.031623,0.000770)(0.056234,0.001697)(0.100000,0.002941)(0.177828,0.005157)(0.316228,0.008412)(0.562341,0.012754)(1.000000,0.018967)(1.778279,0.026155)(3.162278,0.036115)(5.623413,0.048119)(10.000000,0.064174)(17.782794,0.082394)(31.622777,0.114578)(56.234133,0.160737)(100.000000,0.232891)  
      };
      \addplot[black,densely dashed,smooth,line width=1.5pt] coordinates{
          (0.000100,0.000000)(0.000178,0.000000)(0.000316,0.000000)(0.000562,0.000000)(0.001000,0.000001)(0.001778,0.000004)(0.003162,0.000012)(0.005623,0.000033)(0.010000,0.000082)(0.017783,0.000213)(0.031623,0.000460)(0.056234,0.001142)(0.100000,0.002142)(0.177828,0.004086)(0.316228,0.007051)(0.562341,0.011227)(1.000000,0.017516)(1.778279,0.024737)(3.162278,0.034659)(5.623413,0.046711)(10.000000,0.063097)(17.782794,0.081341)(31.622777,0.112937)(56.234133,0.160678)(100.000000,0.232717)
      };
      \addplot[RED,smooth,line width=1pt] coordinates{
          (0.000100,0.000000)(0.000178,0.000000)(0.000316,0.000000)(0.000562,0.000001)(0.001000,0.000004)(0.001778,0.000010)(0.003162,0.000028)(0.005623,0.000071)(0.010000,0.000162)(0.017783,0.000381)(0.031623,0.000754)(0.056234,0.001669)(0.100000,0.002907)(0.177828,0.005129)(0.316228,0.008313)(0.562341,0.012596)(1.000000,0.018945)(1.778279,0.026107)(3.162278,0.035963)(5.623413,0.047913)(10.000000,0.064239)(17.782794,0.082418)(31.622777,0.113979)(56.234133,0.161679)(100.000000,0.233623)
      };
      \node[draw] at (axis cs:3,0.00001) { $N/n=2$ };
    \end{loglogaxis}
  \end{tikzpicture}
\end{minipage}
\end{center}
\caption{ Training MSEs of RFF ridge regression on MNIST data (class $3$ versus $7$), as a function of regression parameter $\lambda$, for $p=784$, $n=1\,000$, $N=250, 500, 1\,000, 2\,000$. Empirical results displayed in {\BLUE \bf blue} circles; Gaussian kernel predictions (assuming $N \to \infty$ alone) in {\bf black} dashed lines; and our predictions from Theorems~\ref{theo:asy-training-MSE}~and~\ref{theo:asy-test-MSE} in {\RED \bf red} solid lines. Results obtained by averaging over $30$ runs.}
\label{fig:compare-kernel-RMT}
\end{figure}
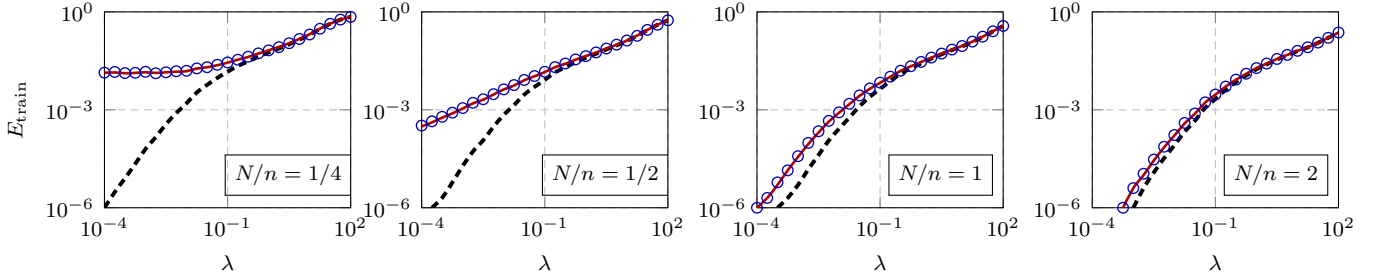

Note also that, while $\delta_{\cos}$ and $\delta_{\sin}$ can be geometrically interpreted as a sort of weighted ``angle'' between different kernel matrices, and therefore one might expect to have $\delta \in [0,1]$, this is not the case for the leftmost plot with $N/n = 1/4$. 
There, for small values of $\lambda$ (say $\lambda \lesssim 0.1$), both $\delta_{\cos}$ and $\delta_{\sin}$ scale like $\lambda^{-1}$, while they are observed to saturate to a fixed $O(1)$ value for $N/n=1,4,16$. 
This corresponds to two different phases of learning in the ``ridgeless'' $\lambda \to 0$ case.
As we shall see in more detail later in Section~\ref{subsec:two-regime}, depending on whether we are in the ``under-parameterized'' ($2N<n$) or the ``over-parameterized'' ($2N>n$) regime, the model behaves fundamentally differently.

\begin{figure}[t] 
\vskip 0.1in
\begin{center}
\begin{minipage}[b]{0.24\columnwidth}%
  \begin{tikzpicture}[font=\scriptsize]
    \pgfplotsset{every major grid/.style={style=densely dashed}}
    \begin{loglogaxis}[
      width=1.15\linewidth,
      xmin=1e-4,
      xmax=1e2,
      ymin=1e-3,
      ymax=1e3,
      grid=major,
      xlabel={ $\lambda$},
      ylabel={ $\delta$ },
      legend style = {at={(0.02,0.02)}, anchor=south west, font=\footnotesize}
      ]
      \addplot+[RED,only marks,mark=x,line width=0.5pt] coordinates{
          (0.000100,343.697222)(0.000178,193.940121)(0.000316,109.715969)(0.000562,62.337118)(0.001000,35.667722)(0.001778,20.629805)(0.003162,12.115505)(0.005623,7.253357)(0.010000,4.434916)(0.017783,2.765472)(0.031623,1.750869)(0.056234,1.118397)(0.100000,0.715918)(0.177828,0.456515)(0.316228,0.288737)(0.562341,0.180783)(1.000000,0.112135)(1.778279,0.069151)(3.162278,0.042667)(5.623413,0.026582)(10.000000,0.016905)(17.782794,0.011078)(31.622777,0.007493)(56.234133,0.005170)(100.000000,0.003555)
      };
      \addlegendentry{ {$ \delta_{\cos} $} }; %
      \addplot+[BLUE,only marks,mark=x,line width=0.5pt] coordinates{
          (0.000100,292.006201)(0.000178,164.900818)(0.000316,93.415480)(0.000562,53.201555)(0.001000,30.563475)(0.001778,17.795790)(0.003162,10.562498)(0.005623,6.425739)(0.010000,4.019884)(0.017783,2.585534)(0.031623,1.703720)(0.056234,1.143743)(0.100000,0.777370)(0.177828,0.531744)(0.316228,0.364101)(0.562341,0.248353)(1.000000,0.167970)(1.778279,0.112141)(3.162278,0.073600)(5.623413,0.047322)(10.000000,0.029736)(17.782794,0.018243)(31.622777,0.010938)(56.234133,0.006429)(100.000000,0.003721)
      };
      \addlegendentry{ {$ \delta_{\sin} $} }; %
      \node[draw] at (axis cs:1,200) { $N/n=1/4$ };
    \end{loglogaxis}
  \end{tikzpicture}
\end{minipage}
\hfill{}
\begin{minipage}[b]{0.24\columnwidth}%
  \begin{tikzpicture}[font=\scriptsize]
    \pgfplotsset{every major grid/.style={style=densely dashed}}
    \begin{loglogaxis}[
      width=1.15\linewidth,
      xmin=1e-4,
      xmax=1e2,
      ymin=1e-3,
      ymax=2,
      grid=major,
      scaled ticks=true,
      xlabel={ $\lambda$},
      ylabel=\empty,
      legend style = {at={(0.02,0.02)}, anchor=south west, font=\footnotesize}
      ]
      \addplot+[RED,only marks,mark=x,line width=0.5pt] coordinates{
          (0.000100,1.157958)(0.000178,1.152719)(0.000316,1.143631)(0.000562,1.128146)(0.001000,1.102514)(0.001778,1.061921)(0.003162,1.001529)(0.005623,0.918570)(0.010000,0.814527)(0.017783,0.695695)(0.031623,0.571447)(0.056234,0.451412)(0.100000,0.343196)(0.177828,0.251397)(0.316228,0.177684)(0.562341,0.121427)(1.000000,0.080485)(1.778279,0.051969)(3.162278,0.032871)(5.623413,0.020505)(10.000000,0.012716)(17.782794,0.007916)(31.622777,0.005003)(56.234133,0.003245)(100.000000,0.002173)
      };
      \addlegendentry{ {$ \delta_{\cos} $} }; %
      \addplot+[BLUE,only marks,mark=x,line width=0.5pt] coordinates{
          (0.000100,0.854035)(0.000178,0.850636)(0.000316,0.844738)(0.000562,0.834679)(0.001000,0.818003)(0.001778,0.791529)(0.003162,0.751985)(0.005623,0.697335)(0.010000,0.628193)(0.017783,0.548257)(0.031623,0.463306)(0.056234,0.379478)(0.100000,0.301823)(0.177828,0.233635)(0.316228,0.176432)(0.562341,0.130290)(1.000000,0.094288)(1.778279,0.066960)(3.162278,0.046675)(5.623413,0.031905)(10.000000,0.021349)(17.782794,0.013958)(31.622777,0.008903)(56.234133,0.005535)(100.000000,0.003357)
      };
      \addlegendentry{ {$ \delta_{\sin} $} }; %
      \node[draw] at (axis cs:1,1) { $N/n=1$ };
    \end{loglogaxis}
  \end{tikzpicture}
\end{minipage}
\hfill{}
\begin{minipage}[b]{0.24\columnwidth}%
  \begin{tikzpicture}[font=\scriptsize]
    \pgfplotsset{every major grid/.style={style=densely dashed}}
    \begin{loglogaxis}[
      width=1.15\linewidth,
      xmin=1e-4,
      xmax=1e2,
      ymin=1e-3,
      ymax=2,
      grid=major,
      scaled ticks=true,
      xlabel={ $\lambda$},
      ylabel=\empty,
      legend style = {at={(0.02,0.02)}, anchor=south west, font=\footnotesize}
      ]
      \addplot+[RED,only marks,mark=x,line width=0.5pt] coordinates{
          (0.000100,0.162912)(0.000178,0.162851)(0.000316,0.162743)(0.000562,0.162551)(0.001000,0.162213)(0.001778,0.161617)(0.003162,0.160580)(0.005623,0.158796)(0.010000,0.155802)(0.017783,0.150953)(0.031623,0.143502)(0.056234,0.132821)(0.100000,0.118738)(0.177828,0.101811)(0.316228,0.083326)(0.562341,0.064952)(1.000000,0.048243)(1.778279,0.034242)(3.162278,0.023339)(5.623413,0.015373)(10.000000,0.009857)(17.782794,0.006201)(31.622777,0.003859)(56.234133,0.002399)(100.000000,0.001506)
      };
      \addlegendentry{ {$ \delta_{\cos} $} }; %
      \addplot+[BLUE,only marks,mark=x,line width=0.5pt] coordinates{
          (0.000100,0.123360)(0.000178,0.123321)(0.000316,0.123253)(0.000562,0.123132)(0.001000,0.122918)(0.001778,0.122541)(0.003162,0.121884)(0.005623,0.120754)(0.010000,0.118852)(0.017783,0.115762)(0.031623,0.110986)(0.056234,0.104080)(0.100000,0.094858)(0.177828,0.083576)(0.316228,0.070957)(0.562341,0.058012)(1.000000,0.045749)(1.778279,0.034920)(3.162278,0.025908)(5.623413,0.018755)(10.000000,0.013280)(17.782794,0.009205)(31.622777,0.006241)(56.234133,0.004133)(100.000000,0.002669)
      };
      \addlegendentry{ {$ \delta_{\sin} $} }; %
      \node[draw] at (axis cs:1,1) { $N/n=4$ };
    \end{loglogaxis}
  \end{tikzpicture}
\end{minipage}
\hfill{}
\begin{minipage}[b]{0.24\columnwidth}%
  \begin{tikzpicture}[font=\scriptsize]
    \pgfplotsset{every major grid/.style={style=densely dashed}}
    \begin{loglogaxis}[
      width=1.15\linewidth,
      xmin=1e-4,
      xmax=1e2,
      ymin=1e-3,
      ymax=2,
      grid=major,
      scaled ticks=true,
      xlabel={ $\lambda$},
      ylabel=\empty,
      legend style = {at={(0.02,0.02)}, anchor=south west, font=\footnotesize}
      ]
      \addplot+[RED,only marks,mark=x,line width=0.5pt] coordinates{
          (0.000100,0.036516)(0.000178,0.036513)(0.000316,0.036508)(0.000562,0.036499)(0.001000,0.036483)(0.001778,0.036455)(0.003162,0.036406)(0.005623,0.036319)(0.010000,0.036167)(0.017783,0.035902)(0.031623,0.035449)(0.056234,0.034695)(0.100000,0.033486)(0.177828,0.031655)(0.316228,0.029070)(0.562341,0.025720)(1.000000,0.021765)(1.778279,0.017532)(3.162278,0.013422)(5.623413,0.009781)(10.000000,0.006814)(17.782794,0.004568)(31.622777,0.002968)(56.234133,0.001884)(100.000000,0.001178)
      };
      \addlegendentry{ {$ \delta_{\cos} $} }; %
      \addplot+[BLUE,only marks,mark=x,line width=0.5pt] coordinates{
          (0.000100,0.028030)(0.000178,0.028028)(0.000316,0.028025)(0.000562,0.028019)(0.001000,0.028009)(0.001778,0.027992)(0.003162,0.027960)(0.005623,0.027905)(0.010000,0.027809)(0.017783,0.027641)(0.031623,0.027353)(0.056234,0.026871)(0.100000,0.026097)(0.177828,0.024915)(0.316228,0.023228)(0.562341,0.021007)(1.000000,0.018330)(1.778279,0.015384)(3.162278,0.012417)(5.623413,0.009662)(10.000000,0.007278)(17.782794,0.005332)(31.622777,0.003813)(56.234133,0.002667)(100.000000,0.001825)
      };
      \addlegendentry{ {$ \delta_{\sin} $} }; %
      \node[draw] at (axis cs:1,1) { $N/n=16$ };
    \end{loglogaxis}
  \end{tikzpicture}
\end{minipage}
\end{center}
\caption{ Behavior of $(\delta_{\cos}, \delta_{\sin})$ in \eqref{eq:fixed-point-delta} on MNIST data set (class $3$ versus $7$), as a function of the regularization parameter $\lambda$, for $p=784$, $n=1\,000$, $N = 250, 1\,000, 4\,000, 16\,000$. }
\label{fig:trend-delta-lambda}
\end{figure}
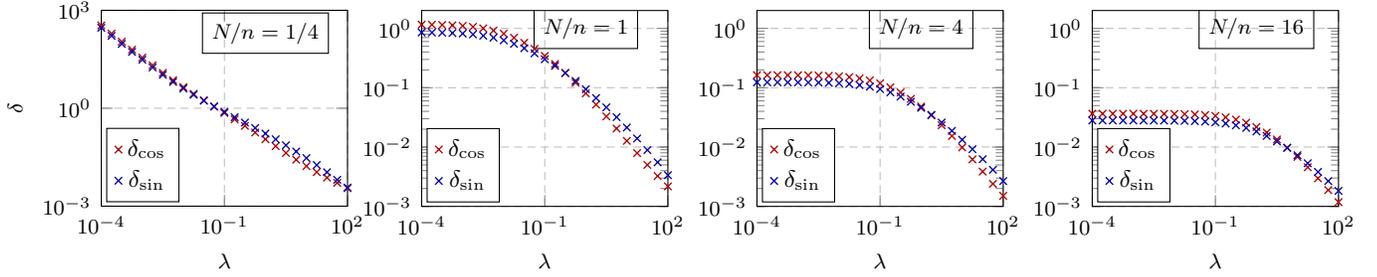

\subsection{Phase Transition and Corresponding Double Descent}
\label{subsec:discuss-delta-versus-N/n}

Both $\delta_{\cos}$ and $\delta_{\sin}$ in \eqref{eq:fixed-point-delta} are decreasing functions of $N$, as depicted in Figure~\ref{fig:trend-delta-N/n}. 
(See Lemma~\ref{lem:delta-derivative-N} in Appendix~\ref{sec:detail-section-double-descent} for a proof.)
More importantly, Figure~\ref{fig:trend-delta-N/n} also illustrates that $\delta_{\cos}$ and $\delta_{\sin}$ exhibit qualitatively different behavior, depending on the ratio $N/n$.
For $\lambda$ not too small ($\lambda = 1$ or $10$), we observe a rather ``smooth'' behavior, as a function of the ratio $N/n$, and they both decrease smoothly, as $N/n$ grows large. 
However, for $\lambda$ relatively small ($\lambda=10^{-3}$ and $10^{-7}$), we observe a \emph{sharp} ``phase transition'' on two sides of the interpolation threshold $2 N =n$. 
(Note that the scale of the y-axis is very different in different subfigures.)
More precisely, in the leftmost plot with $\lambda = 10^{-7}$, the values of $\delta_{\cos}$ and $\delta_{\sin}$ ``jumps'' from order $O(1)$ (when $2N>n$) to much higher values of the order of $\lambda^{-1}$ (when $2N<n$). A similar behavior is also observed for $\lambda = 10^{-3}$.

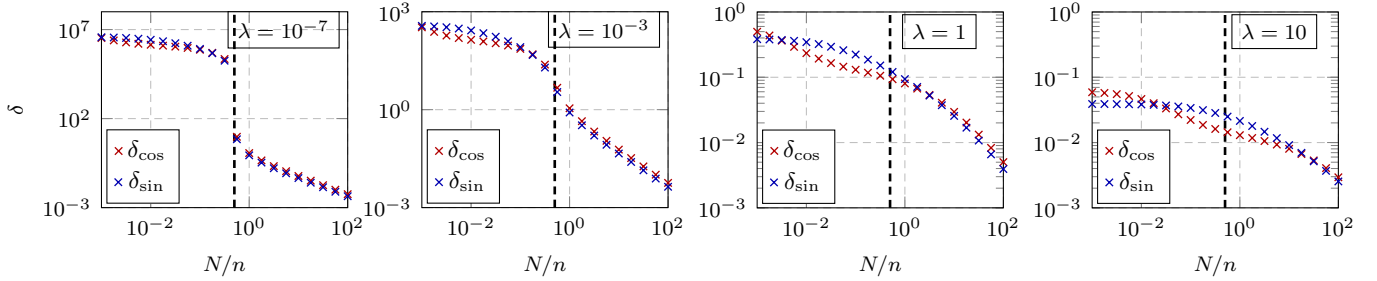
\begin{figure}[t] 
\vskip 0.1in
\begin{center}
\begin{minipage}[b]{0.24\columnwidth}%
  \begin{tikzpicture}[font=\scriptsize]
    \pgfplotsset{every major grid/.style={style=densely dashed}}
    \begin{loglogaxis}[
      width=1.15\linewidth,
      xmin=1e-3,
      xmax=1e2,
      ymin=1e-3,
      ymax=1e8,
      grid=major,
      xlabel={ $N/n$},
      ylabel={ $\delta$ },
      legend style = {at={(0.02,0.02)}, anchor=south west, font=\footnotesize}
      ]
      \addplot+[RED,only marks,mark=x,line width=.5pt] coordinates{
          (0.001000,3354028.559413)(0.001778,2453601.178573)(0.003162,1896644.556645)(0.005623,1593529.574719)(0.010000,1402793.261451)(0.017783,1249867.601372)(0.031623,1098822.350742)(0.056234,930642.995753)(0.100000,732919.785102)(0.177828,496546.639860)(0.316228,222076.130795)(0.562341,9.535147)(1.000000,1.160396)(1.778279,0.448714)(3.162278,0.214196)(5.623413,0.110959)(10.000000,0.059740)(17.782794,0.032807)(31.622777,0.018209)(56.234133,0.010165)(100.000000,0.005693)
      };
      \addlegendentry{ {$ \delta_{\cos} $} }; %
      \addplot+[BLUE,only marks,mark=x,line width=.5pt] coordinates{
          (0.001000,3716724.609243)(0.001778,3583613.894463)(0.003162,3373241.740777)(0.005623,3069157.608946)(0.010000,2672021.017968)(0.017783,2205932.064173)(0.031623,1713229.512816)(0.056234,1240166.262690)(0.100000,821018.005189)(0.177828,467894.587296)(0.316228,178327.521016)(0.562341,6.884026)(1.000000,0.861765)(1.778279,0.337986)(3.162278,0.162544)(5.623413,0.084541)(10.000000,0.045618)(17.782794,0.025083)(31.622777,0.013931)(56.234133,0.007780)(100.000000,0.004358)
      };
      \addlegendentry{ {$ \delta_{\sin} $} }; %
      \addplot[densely dashed,black,line width=1pt] coordinates{(0.5,0.001)(0.5,100000000)};
      \node[draw] at (axis cs:5,1e7) { $\lambda = 10^{-7}$ };
    \end{loglogaxis}
  \end{tikzpicture}
\end{minipage}
\hfill{}
\begin{minipage}[b]{0.24\columnwidth}%
  \begin{tikzpicture}[font=\scriptsize]
    \pgfplotsset{every major grid/.style={style=densely dashed}}
    \begin{loglogaxis}[
      width=1.15\linewidth,
      xmin=1e-3,
      xmax=1e2,
      ymin=1e-3,
      ymax=1e3,
      grid=major,
      scaled ticks=true,
      xlabel={ $N/n$},
      ylabel=\empty,
      legend style = {at={(0.02,0.02)}, anchor=south west, font=\footnotesize}
      ]
      \addplot+[RED,only marks,mark=x,line width=.5pt] coordinates{
          (0.001000,332.382501)(0.001778,241.374833)(0.003162,185.999500)(0.005623,156.279326)(0.010000,137.782179)(0.017783,123.035487)(0.031623,108.442159)(0.056234,92.103284)(0.100000,72.823428)(0.177828,49.867971)(0.316228,24.032581)(0.562341,4.690642)(1.000000,1.103588)(1.778279,0.443367)(3.162278,0.213539)(5.623413,0.110976)(10.000000,0.059835)(17.782794,0.032883)(31.622777,0.018258)(56.234133,0.010194)(100.000000,0.005710)};
          \addlegendentry{ {$ \delta_{\cos} $} }; %
      \addplot+[BLUE,only marks,mark=x,line width=.5pt] coordinates{
          (0.001000,368.656974)(0.001778,355.671461)(0.003162,335.221129)(0.005623,305.591156)(0.010000,266.663062)(0.017783,220.636705)(0.031623,171.641623)(0.056234,124.355627)(0.100000,82.391632)(0.177828,47.296396)(0.316228,19.462217)(0.562341,3.462995)(1.000000,0.817874)(1.778279,0.332095)(3.162278,0.160988)(5.623413,0.083973)(10.000000,0.045369)(17.782794,0.024962)(31.622777,0.013869)(56.234133,0.007747)(100.000000,0.004340)
      };
      \addlegendentry{ {$ \delta_{\sin} $} }; %
      \addplot[densely dashed,black,line width=1pt] coordinates{(0.5,0.001)(0.5,100000000)};
      \node[draw] at (axis cs:5,300) { $\lambda = 10^{-3}$ };
    \end{loglogaxis}
  \end{tikzpicture}
\end{minipage}
\hfill{}
\begin{minipage}[b]{0.24\columnwidth}%
  \begin{tikzpicture}[font=\scriptsize]
    \pgfplotsset{every major grid/.style={style=densely dashed}}
    \begin{loglogaxis}[
      width=1.15\linewidth,
      xmin=1e-3,
      xmax=1e2,
      ymin=1e-3,
      ymax=1,
      grid=major,
      scaled ticks=true,
      xlabel={ $N/n$},
      ylabel=\empty,
      legend style = {at={(0.02,0.02)}, anchor=south west, font=\footnotesize}
      ]
      \addplot+[RED,only marks,mark=x,line width=.5pt] coordinates{
          (0.001000,0.497800)(0.001778,0.436647)(0.003162,0.363688)(0.005623,0.292138)(0.010000,0.234057)(0.017783,0.192693)(0.031623,0.164703)(0.056234,0.145243)(0.100000,0.130430)(0.177828,0.117729)(0.316228,0.105633)(0.562341,0.093311)(1.000000,0.080411)(1.778279,0.067005)(3.162278,0.053554)(5.623413,0.040791)(10.000000,0.029498)(17.782794,0.020247)(31.622777,0.013234)(56.234133,0.008291)(100.000000,0.005019)
      };
      \addlegendentry{ {$ \delta_{\cos} $} }; %
      \addplot+[BLUE,only marks,mark=x,line width=.5pt] coordinates{
          (0.001000,0.383235)(0.001778,0.379128)(0.003162,0.372321)(0.005623,0.361486)(0.010000,0.345203)(0.017783,0.322512)(0.031623,0.293525)(0.056234,0.259602)(0.100000,0.223006)(0.177828,0.186297)(0.316228,0.151686)(0.562341,0.120622)(1.000000,0.093736)(1.778279,0.071063)(3.162278,0.052356)(5.623413,0.037303)(10.000000,0.025597)(17.782794,0.016886)(31.622777,0.010724)(56.234133,0.006584)(100.000000,0.003932)
      };
      \addlegendentry{ {$ \delta_{\sin} $} }; %
      \addplot[densely dashed,black,line width=1pt] coordinates{(0.5,0.001)(0.5,100000000)};
      \node[draw] at (axis cs:5,0.5) { $\lambda = 1$ };
    \end{loglogaxis}
  \end{tikzpicture}
\end{minipage}
\hfill{}
\begin{minipage}[b]{0.24\columnwidth}%
  \begin{tikzpicture}[font=\scriptsize]
    \pgfplotsset{every major grid/.style={style=densely dashed}}
    \begin{loglogaxis}[
      width=1.15\linewidth,
      xmin=1e-3,
      xmax=1e2,
      ymin=1e-3,
      ymax=1,
      grid=major,
      scaled ticks=true,
      xlabel={ $N/n$},
      ylabel=\empty,
      legend style = {at={(0.02,0.02)}, anchor=south west, font=\footnotesize}
      ]
      \addplot+[RED,only marks,mark=x,line width=.5pt] coordinates{
          (0.001000,0.058980)(0.001778,0.057541)(0.003162,0.055192)(0.005623,0.051567)(0.010000,0.046452)(0.017783,0.040078)(0.031623,0.033263)(0.056234,0.027060)(0.100000,0.022148)(0.177828,0.018608)(0.316228,0.016143)(0.562341,0.014369)(1.000000,0.012972)(1.778279,0.011741)(3.162278,0.010544)(5.623413,0.009310)(10.000000,0.008014)(17.782794,0.006670)(31.622777,0.005328)(56.234133,0.004059)(100.000000,0.002938)
      };
      \addlegendentry{ {$ \delta_{\cos} $} }; %
      \addplot+[BLUE,only marks,mark=x,line width=.5pt] coordinates{
          (0.001000,0.038958)(0.001778,0.038900)(0.003162,0.038799)(0.005623,0.038622)(0.010000,0.038319)(0.017783,0.037811)(0.031623,0.036987)(0.056234,0.035713)(0.100000,0.033869)(0.177828,0.031405)(0.316228,0.028389)(0.562341,0.024990)(1.000000,0.021428)(1.778279,0.017921)(3.162278,0.014642)(5.623413,0.011699)(10.000000,0.009138)(17.782794,0.006962)(31.622777,0.005153)(56.234133,0.003686)(100.000000,0.002538)
      };
      \addlegendentry{ {$ \delta_{\sin} $} }; %
      \addplot[densely dashed,black,line width=1pt] coordinates{(0.5,0.001)(0.5,100000000)};
      \node[draw] at (axis cs:5,0.5) { $\lambda = 10$ };
    \end{loglogaxis}
  \end{tikzpicture}
\end{minipage}
\end{center}
\caption{ Behavior of $(\delta_{\cos}, \delta_{\sin})$ in \eqref{eq:fixed-point-delta} on MNIST data set (class $3$ versus $7$), as a function of the ratio $N/n$, for $p=784$, $n=1\,000$, $\lambda = 10^{-7}, 10^{-3}, 1, 10$. The {\bf black} dashed line represents the interpolation threshold $2 N =n$. }
\label{fig:trend-delta-N/n}
\end{figure}

As a consequence of this phase transition, different behaviors are expected for training and test MSEs in the $2N < n$ and $2N > n$ regime. 
Figure~\ref{fig:double-descent-regularization-2} depicts the empirical and theoretical test MSEs with different regularization penalty $\lambda$. 
In particular, for $\lambda = 10^{-7}$ and $\lambda = 10^{-3}$, a double descent-type behavior is observed, with a singularity at $2 N =n$, while for larger values of $\lambda$ ($\lambda = 0.2, 10$), a smoother and monotonically decreasing curve for test error is observed, as a function of $N/n$. 
Figure~\ref{fig:double-descent-regularization-2} also illustrates that: (i) for a fixed regularization $\lambda > 0$, the minimum test error is always obtained in the over-parametrization $2N>n$ regime; and (ii) the global optimal design (over $N$ and $\lambda$) is achieved by highly over-parametrized system with a (problem-dependent) non-vanishing $\lambda$.
This is in accordance with the observations in \cite{mei2019generalization} for Gaussian data.

\begin{figure}[t] 
\begin{center}
\begin{minipage}[b]{0.24\columnwidth}%
  \begin{tikzpicture}[font=\footnotesize]
    \pgfplotsset{every major grid/.style={style=densely dashed}}
    \begin{axis}[
      width=1.15\linewidth,
      xmin=0,
      xmax=5.00,
      ymin=0,
      ymax=1,
      symbolic x coords={0,0.05,0.10,0.15,0.20,0.25,0.30,0.35,0.40,0.45,0.5,0.55,0.60,0.65,0.70,0.75,0.80,0.85,0.90,0.95,1,1.50,2.00,2.50,3.00,3.5,4.00,4.50,5.00},
      ytick={0,0.5,1},
      grid=major,
      ymajorgrids=false,
      scaled ticks=true,
      xlabel={ $N/n$ },
      ylabel={ Test MSE },
      legend style = {at={(0.98,0.98)}, anchor=north east, font=\footnotesize}
      ]
      \addplot[RED,densely dashed,line width=1pt] coordinates{
      (0,1.000000)(0.05,0.320586)(0.10,0.248547)(0.15,0.234888)(0.20,0.235489)(0.25,0.257586)(0.30,0.290233)(0.35,0.357254)(0.40,0.478638)(0.45,0.861248)(0.5,3.572740)(0.55,0.856188)(0.60,0.451331)(0.65,0.329333)(0.70,0.265796)(0.75,0.221444)(0.80,0.204296)(0.85,0.179956)(0.90,0.157613)(0.95,0.156390)(1,0.149693)(1.50,0.107046)(2.00,0.096170)(2.50,0.088525)(3.00,0.084455)(3.5,0.085498)(4.00,0.080577)(4.50,0.079445)(5.00,0.076547)
      };
      \addplot+[only marks,mark=x,BLUE,line width=1pt] coordinates{
      (0,1.000000)(0.05,0.324289)(0.10,0.259979)(0.15,0.225615)(0.20,0.228612)(0.25,0.253655)(0.30,0.281658)(0.35,0.343953)(0.40,0.485780)(0.45,0.847222)(0.5,59.935998)(0.55,0.890565)(0.60,0.454359)(0.65,0.321645)(0.70,0.273403)(0.75,0.218586)(0.80,0.205166)(0.85,0.181495)(0.90,0.171261)(0.95,0.162471)(1,0.147626)(1.50,0.111626)(2.00,0.095726)(2.50,0.090401)(3.00,0.086514)(3.5,0.082154)(4.00,0.084921)(4.50,0.078919)(5.00,0.075109)
      };
      \addplot[densely dashed,black,line width=1pt] coordinates{(0.5,0)(0.5,1)};
      \node[draw] at (axis cs:1,.8) { $\lambda = 10^{-7}$ };
    \end{axis}
    \end{tikzpicture}
\end{minipage}
\hfill{}
\begin{minipage}[b]{0.24\columnwidth}%
  \begin{tikzpicture}[font=\footnotesize]
    \pgfplotsset{every major grid/.style={style=densely dashed}}
    \begin{axis}[
      width=1.15\linewidth,
      xmin=0,
      xmax=5.00,
      ymin=0,
      ymax=1,
      symbolic x coords={0,0.05,0.10,0.15,0.20,0.25,0.30,0.35,0.40,0.45,0.5,0.55,0.60,0.65,0.70,0.75,0.80,0.85,0.90,0.95,1,1.50,2.00,2.50,3.00,3.5,4.00,4.50,5.00},
      ytick={0,0.5,1},
      grid=major,
      ymajorgrids=false,
      scaled ticks=true,
      xlabel={ $N/n$ },
      ylabel= \empty,
      legend style = {at={(0.98,0.98)}, anchor=north east, font=\footnotesize}
      ]
      \addplot[RED,densely dashed,line width=1pt] coordinates{
      (0,1.000000)(0.05,0.323522)(0.10,0.246915)(0.15,0.232208)(0.20,0.232504)(0.25,0.246767)(0.30,0.275426)(0.35,0.323682)(0.40,0.405126)(0.45,0.487596)(0.5,0.539986)(0.55,0.461931)(0.60,0.365846)(0.65,0.295414)(0.70,0.248013)(0.75,0.212174)(0.80,0.190551)(0.85,0.177480)(0.90,0.164274)(0.95,0.154285)(1,0.145114)(1.50,0.107579)(2.00,0.097592)(2.50,0.091346)(3.00,0.083665)(3.5,0.080192)(4.00,0.078272)(4.50,0.078403)(5.00,0.079177)
      };
      \addplot+[only marks,mark=x,BLUE,line width=1pt] coordinates{
      (0,1.000000)(0.05,0.322168)(0.10,0.239250)(0.15,0.233509)(0.20,0.239558)(0.25,0.244649)(0.30,0.278792)(0.35,0.326610)(0.40,0.398159)(0.45,0.493695)(0.5,0.526170)(0.55,0.444608)(0.60,0.365679)(0.65,0.291070)(0.70,0.244749)(0.75,0.213911)(0.80,0.186056)(0.85,0.179926)(0.90,0.164709)(0.95,0.154558)(1,0.148462)(1.50,0.108911)(2.00,0.096805)(2.50,0.090360)(3.00,0.088080)(3.5,0.080751)(4.00,0.079636)(4.50,0.082188)(5.00,0.074142)
      };
      \addplot[densely dashed,black,line width=1pt] coordinates{(0.5,0)(0.5,1)};
      \node[draw] at (axis cs:1,.8) { $\lambda = 10^{-3}$ };
    \end{axis}
    \end{tikzpicture}
\end{minipage}
\hfill{}
\begin{minipage}[b]{0.24\columnwidth}%
  \begin{tikzpicture}[font=\footnotesize]
    \pgfplotsset{every major grid/.style={style=densely dashed}}
    \begin{axis}[
      width=1.15\linewidth,
      xmin=0,
      xmax=5.00,
      ymin=0,
      ymax=1,
      symbolic x coords={0,0.05,0.10,0.15,0.20,0.25,0.30,0.35,0.40,0.45,0.5,0.55,0.60,0.65,0.70,0.75,0.80,0.85,0.90,0.95,1,1.50,2.00,2.50,3.00,3.5,4.00,4.50,5.00},
      ytick={0,0.5,1},
      grid=major,
      ymajorgrids=false,
      scaled ticks=true,
      xlabel={ $N/n$ },
      ylabel= \empty,
      legend style = {at={(0.98,0.98)}, anchor=north east, font=\footnotesize}
      ]
      \addplot[RED,densely dashed,line width=1pt] coordinates{
      (0,1)(0.05,0.321489)(0.10,0.223752)(0.15,0.191018)(0.20,0.168997)(0.25,0.160045)(0.30,0.148337)(0.35,0.139562)(0.40,0.134957)(0.45,0.128903)(0.5,0.125498)(0.55,0.116367)(0.60,0.118209)(0.65,0.116252)(0.70,0.111674)(0.75,0.114472)(0.80,0.110044)(0.85,0.106574)(0.90,0.106508)(0.95,0.105378)(1,0.104841)(1.50,0.094813)(2.00,0.092217)(2.50,0.087950)(3.00,0.087601)(3.5,0.079538)(4.00,0.080164)(4.50,0.079918)(5.00,0.077566)
      };
      \addplot+[only marks,mark=x,BLUE,line width=1pt] coordinates{
      (0,1.000000)(0.05,0.300005)(0.10,0.225684)(0.15,0.194063)(0.20,0.166734)(0.25,0.153290)(0.30,0.144291)(0.35,0.141326)(0.40,0.132308)(0.45,0.128224)(0.5,0.125239)(0.55,0.119556)(0.60,0.121836)(0.65,0.115089)(0.70,0.112291)(0.75,0.107167)(0.80,0.111574)(0.85,0.108285)(0.90,0.103312)(0.95,0.106415)(1,0.103499)(1.50,0.095257)(2.00,0.088071)(2.50,0.085968)(3.00,0.081872)(3.5,0.085500)(4.00,0.078181)(4.50,0.078020)(5.00,0.079003)
      };
      \addplot[densely dashed,black,line width=1pt] coordinates{(0.5,0)(0.5,1)};
      \node[draw] at (axis cs:1,.8) { $\lambda_{opt} = 0.2$ };
    \end{axis}
    \end{tikzpicture}
\end{minipage}
\hfill{}
\begin{minipage}[b]{0.24\columnwidth}%
  \begin{tikzpicture}[font=\footnotesize]
    \pgfplotsset{every major grid/.style={style=densely dashed}}
    \begin{axis}[
      width=1.15\linewidth,
      xmin=0,
      xmax=5.00,
      ymin=0,
      ymax=1,
      symbolic x coords={0,0.05,0.10,0.15,0.20,0.25,0.30,0.35,0.40,0.45,0.5,0.55,0.60,0.65,0.70,0.75,0.80,0.85,0.90,0.95,1,1.50,2.00,2.50,3.00,3.5,4.00,4.50,5.00},
      ytick={0,0.5,1},
      grid=major,
      ymajorgrids=false,
      scaled ticks=true,
      xlabel={ $N/n$ },
      ylabel= \empty,
      legend style = {at={(0.98,0.98)}, anchor=north east, font=\footnotesize}
      ]
      \addplot[RED,densely dashed,line width=1pt] coordinates{
      (0,1.000000)(0.05,0.798016)(0.10,0.668310)(0.15,0.561214)(0.20,0.499226)(0.25,0.450972)(0.30,0.405281)(0.35,0.370093)(0.40,0.351104)(0.45,0.324571)(0.5,0.313120)(0.55,0.293542)(0.60,0.279269)(0.65,0.272947)(0.70,0.258665)(0.75,0.247095)(0.80,0.241979)(0.85,0.238583)(0.90,0.229134)(0.95,0.223694)(1,0.210455)(1.50,0.183700)(2.00,0.159485)(2.50,0.148746)(3.00,0.142476)(3.5,0.134409)(4.00,0.125272)(4.50,0.121265)(5.00,0.117717)
      };
      \addplot+[only marks,mark=x,BLUE,line width=1pt] coordinates{
      (0,1.000000)(0.05,0.800628)(0.10,0.669346)(0.15,0.572246)(0.20,0.505067)(0.25,0.453947)(0.30,0.411620)(0.35,0.378967)(0.40,0.354003)(0.45,0.333199)(0.5,0.310479)(0.55,0.298592)(0.60,0.280434)(0.65,0.270641)(0.70,0.259722)(0.75,0.253286)(0.80,0.239215)(0.85,0.229615)(0.90,0.228431)(0.95,0.223308)(1,0.215833)(1.50,0.179224)(2.00,0.161857)(2.50,0.150000)(3.00,0.140210)(3.5,0.134244)(4.00,0.124540)(4.50,0.122380)(5.00,0.121325)
      };
      \addplot[densely dashed,black,line width=1pt] coordinates{(0.5,0)(0.5,1)};
      \node[draw] at (axis cs:1,.8) { $\lambda = 10$ };
    \end{axis}
    \end{tikzpicture}
\end{minipage}
\end{center}
\caption{ Empirical (\textbf{\BLUE blue} crosses) and theoretical (\textbf{\RED red} dashed lines) test error of RFF regression as a function of the ratio $N/n$ on MNIST data (class $3$ versus $7$), for $p=784$, $n=500$, $\lambda = 10^{-7}, 10^{-3}, 0.2, 10$. The {\bf black} dashed line represents the interpolation threshold $2 N =n$. 
}
\label{fig:double-descent-regularization-2}
\end{figure}
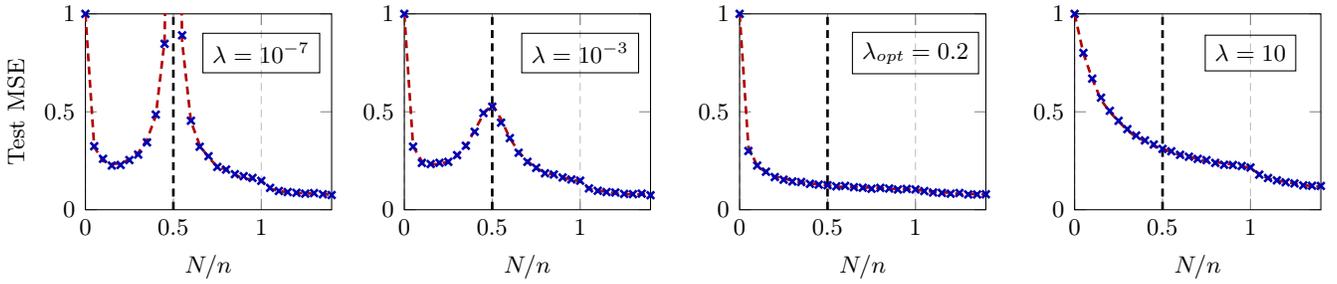

\begin{Remark}\label{rem:ridge-reg}
\normalfont
Performing ridge regularization (with $\lambda$ as a control parameter and choosing $\lambda > 0$) is known to help alleviate the sharp performance drop around $2N = n$~\cite{hastie2019surprises,mei2019generalization}. 
Our Theorem~\ref{theo:asy-test-MSE} can serve as a convenient alternative to evaluate the effect of small $\lambda$ around $2N= n$, as well as to determine an optimal $\lambda$, for not-too-small $n,p,N$.
In the setup of Figure~\ref{fig:double-descent-regularization-2}, a grid search can be used to find the regularization that minimizes $\bar E_{\test}$.
For this choice of $\lambda$ ($\lambda_{opt} \approx 0.2$), no singular peak at $2N = n$ is observed.
\end{Remark}

\begin{Remark}
\normalfont
While the double descent phenomenon has received considerable attention recently, our analysis makes it clear that in this model (and presumably many others) it is a natural consequence of the phase transition between two qualitatively different phases of learning~\cite{MM17_TR}.
\end{Remark}

\section{Additional Discussion and Results}
\label{sec:empirical_additional}

In this section, we provide additional discussions and empirical results, to complement and extend those of Section~\ref{sec:empirical_main}.
We start, in Section~\ref{subsec:two-regime}, by discussing in more detail the two different phases of learning for $2 N < n$ and $2 N > n$, including the \emph{sharp} phase transition at $2N = n$, for $(\delta_{\cos}, \delta_{\sin})$, as well as the (asymptotic) test MSE, in the ridgeless $\lambda \to 0$ case.
Then, in Section~\ref{subsec:impact-train-test-similarity}, we discuss the impact of training-test similarly on the test MSE by considering the example of test data $\hat \X$ as an additive perturbation of the training data $\X$. 
Finally, in Section~\ref{sec:num}, we present empirical results on additional real-world data sets to demonstrate the wide applicability of our results.

\subsection{Two Different Learning Regimes in the Ridgeless Limit}
\label{subsec:two-regime}

We chose to present our theoretical results in Section~\ref{sec:main} (Theorems~\ref{theo:asy-behavior-E[Q]}-\ref{theo:asy-test-MSE}) in the same form, regardless of whether $2N > n$ or $2N < n$. 
This comes at the cost of requiring a strictly positive ridge regularization $\lambda > 0$, as $n,p,N \to \infty$.
As discussed in Section~\ref{sec:empirical_main}, for small values of $\lambda$, depending on the sign of $2 N - n$, we observe totally different behaviors for $(\delta_{\cos}, \delta_{\sin})$ and thus for the key resolvent $\bar \Q$. As a matter of fact, for $\lambda= 0$ and $2N < n$, the (random) resolvent $\Q(\lambda=0)$ in \eqref{eq:def-Q} is simply undefined, as it involves inverting a singular matrix $\bSigma_\X^\T \bSigma_\X \in \RR^{n \times n}$ that is of rank at most $2N < n$. As a consequence, we expect to see $\bar \Q \sim \lambda^{-1}$ as $\lambda \to 0$ for $2N < n$, while for $2N > n$ this is no longer the case.

These two phases of learning can be theoretically justified by considering the \emph{ridgeless} $\lambda \to 0$ limit in Theorem~\ref{theo:asy-behavior-E[Q]}, with the unified variables $\gamma_{\cos}$ and $\gamma_{\sin}$.
\begin{enumerate}
\item 
For $2N < n$ and $\lambda \to 0$, we obtain
\begin{equation}\label{eq:def-theta}
  \begin{cases}
    \lambda \delta_{\cos} \to \gamma_{\cos} \equiv \frac1n \tr \K_{\cos} \left( \frac{N}n \left(\frac{\K_{\cos} }{\gamma_{\cos}} + \frac{\K_{\sin} }{\gamma_{\sin}} \right) + \I_n\right)^{-1} \\ 
    \lambda \delta_{\sin} \to \gamma_{\sin} \equiv \frac1n \tr \K_{\sin} \left( \frac{N}n \left(\frac{\K_{\cos} }{\gamma_{\cos}} + \frac{\K_{\sin} }{\gamma_{\sin}} \right) + \I_n\right)^{-1}
  \end{cases}  ,
\end{equation}
in such as way that $\delta_{\cos}$, $\delta_{\sin}$ and $ \bar \Q$ scale like $\lambda^{-1}$. We have in particular $\EE[\lambda \Q] \sim \lambda \bar \Q \sim \left( \frac{N}n \left(\frac{\K_{\cos} }{\gamma_{\cos}} + \frac{\K_{\sin} }{\gamma_{\sin}} \right) + \I_n\right)^{-1}$ with $(\gamma_{\cos},\gamma_{\sin})$ of order $O(1)$. 
\item
For $2N > n$ and $\lambda \to 0$, we obtain
\begin{equation}\label{eq:classi-fixed-point}
  \begin{cases}
    \delta_{\cos} \to \gamma_{\cos} = \frac1N \tr \K_{\cos} \left(\frac{\K_{\cos} }{1+\gamma_{\cos}} + \frac{\K_{\sin} }{1+\gamma_{\sin}} \right)^{-1} \\ 
    \delta_{\sin} \to \gamma_{\cos} = \frac1N \tr \K_{\sin} \left(\frac{\K_{\cos} }{1+\gamma_{\cos}} + \frac{\K_{\sin} }{1+\gamma_{\sin}} \right)^{-1}
  \end{cases}  ,
\end{equation}
by taking directly $\lambda \to 0$ in Theorem~\ref{theo:asy-behavior-E[Q]}.
\end{enumerate}
Note that the expressions in (\ref{eq:def-theta}) and (\ref{eq:classi-fixed-point}) only hold in the $\lambda \to 0$ limit. For $\lambda > 0$ the expression in \eqref{eq:fixed-point-delta} should be used instead.

\medskip

As a consequence, \emph{in the ridgeless limit $\lambda \to 0$}, Theorem~\ref{theo:asy-behavior-E[Q]} exhibits the following \emph{two learning phases}:
\begin{enumerate}
  \item 
  \emph{Under-parameterized phase}: with $2N < n$.
  Here, $\Q$ is not well defined (indeed $ \Q \sim \lambda^{-1}$) and one must consider instead the properly scaled $\gamma_{\cos},\gamma_{\sin}$ and $\lambda \bar \Q $ in \eqref{eq:def-theta}.
  Like $\delta_{\cos}$ and $\delta_{\sin}$, $\gamma_{\cos}$ and $\gamma_{\sin}$ also decrease as $N/n$ grows large.
  In particular, one has $\gamma_{\cos}, \gamma_{\sin}, \| \lambda \bar \Q \| \to 0$ as $2N - n \uparrow 0$.
  \item 
  \emph{Over-parameterized phase}: with $2N>n$.
  Here, one can consider $\delta_{\cos}, \delta_{\sin}$ and $\| \bar \Q \|$.
  One has particularly that $\delta_{\cos}, \delta_{\sin}, \| \bar \Q \| \to \infty$ as $2N - n \downarrow 0$ and tend to zero as $N/n \to \infty$.
\end{enumerate}
With this discussion on the two phases of learning, we now understand why:
\begin{itemize}
  \item 
  in the leftmost plot of Figure~\ref{fig:trend-delta-lambda} with $2N < n$, $\delta_{\cos}$ and $\delta_{\sin}$ behave rather differently from other plots and approximately scale as $\lambda^{-1}$ for small values of $\lambda$; and
  \item 
  in the first and second leftmost plots of Figure~\ref{fig:trend-delta-N/n}, a ``jump'' in the values of $\delta$ occurs at the transition point $2N = n$, and the $\delta$'s are numerically of the same order of $\lambda^{-1}$ for $2N < n$, as predicted.
\end{itemize}
To characterize the phase transition from \eqref{eq:def-theta} and \eqref{eq:classi-fixed-point} in the $\lambda \to 0$ setting, we consider the scaled~variables
\begin{equation}\label{eq:def-gamma}
    \begin{cases}
        \gamma_\sigma = \lambda \delta_\sigma & \textmd{for $2 N < n$}\\ 
        \gamma_\sigma = \delta_\sigma & \textmd{for $2 N > n$}\\ 
    \end{cases}, \quad \sigma \in \{\cos, \sin\}.
\end{equation}
An advantage of using these scaled variables is that they are of order $O(1)$ as $n,p,N \to \infty$ and $\lambda \to 0$.
The behavior of $(\gamma_{\cos}, \gamma_{\sin})$ is reported in Figure~\ref{fig:trend-delta-N/n-ridgeless}, in the same setting as Figure~\ref{fig:trend-delta-N/n}. 
Observe the \emph{sharp} transition between the $2N < n$ and $2N > n$ regime, in particular for $\lambda = 10^{-7}$ and $\lambda=10^{-3}$, and that this transition is smoothed out for $\lambda = 1$.
(A ``transition'' is also seen for $\lambda = 10$, but this is potentially misleading.  It is true that $\gamma_{\cos}$ and $\gamma_{\sin}$ do change in this way, as a function of $N/n$, but unless $\lambda \approx 0$, these quantities are \emph{not} solutions of the corresponding fixed point equations.)

On account of these two different phases of learning (under- and over-parameterized, in \eqref{eq:def-theta}~and~\eqref{eq:classi-fixed-point}, respectively) and the sharp transition of $(\gamma_{\cos}, \gamma_{\sin})$ in Figure~\ref{fig:trend-delta-N/n-ridgeless}, it is not surprising to observe a ``singular'' behavior at $2N=n$ , when no regularization is applied. 
We next examine the asymptotic training and test error in more detail.

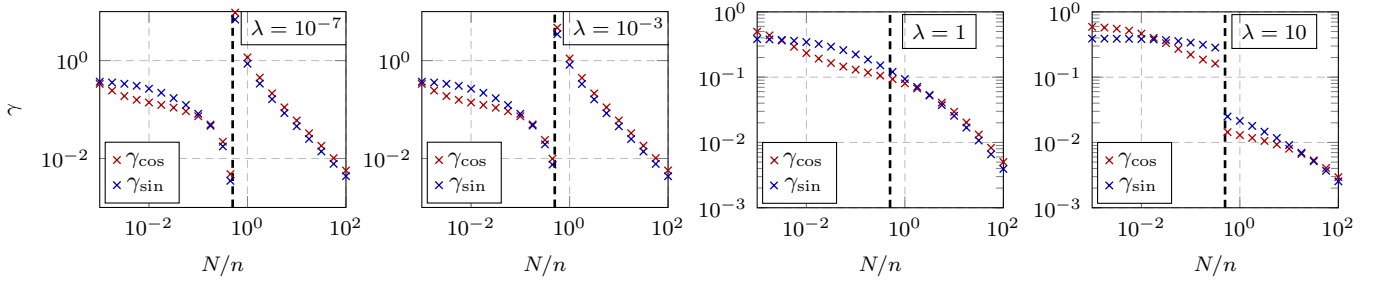
\begin{figure}[t] 
\begin{center}
\begin{minipage}[b]{0.24\columnwidth}%
  \begin{tikzpicture}[font=\scriptsize]
    \pgfplotsset{every major grid/.style={style=densely dashed}}
    \begin{loglogaxis}[
      width=1.15\linewidth,
      xmin=1e-3,
      xmax=1e2,
      ymin=1e-3,
      ymax=10,
      grid=major,
      xlabel={ $N/n$},
      ylabel={ $\gamma$ },
      legend style = {at={(0.02,0.02)}, anchor=south west, font=\footnotesize}
      ]
      \addplot+[RED,only marks,mark=x,line width=.5pt] coordinates{
          (0.001000,0.333957)(0.001778,0.243580)(0.003162,0.188070)(0.005623,0.158015)(0.010000,0.139172)(0.017783,0.124102)(0.031623,0.109219)(0.056234,0.092596)(0.100000,0.072960)(0.177828,0.049392)(0.316228,0.021976)(0.45,0.0048)(0.562341,9.535147)(1.000000,1.160396)(1.778279,0.448714)(3.162278,0.214196)(5.623413,0.110959)(10.000000,0.059740)(17.782794,0.032807)(31.622777,0.018209)(56.234133,0.010165)(100.000000,0.005693)
      };
      \addlegendentry{ {$ \gamma_{\cos} $} }; %
      \addplot+[BLUE,only marks,mark=x,line width=.5pt] coordinates{
          (0.001000,0.370038)(0.001778,0.356667)(0.003162,0.335688)(0.005623,0.305482)(0.010000,0.266118)(0.017783,0.219956)(0.031623,0.171100)(0.056234,0.124011)(0.100000,0.082085)(0.177828,0.046671)(0.316228,0.017665)(0.45,0.0035)(0.562341,6.884026)(1.000000,0.861765)(1.778279,0.337986)(3.162278,0.162544)(5.623413,0.084541)(10.000000,0.045618)(17.782794,0.025083)(31.622777,0.013931)(56.234133,0.007780)(100.000000,0.004358)
      };
      \addlegendentry{ {$ \gamma_{\sin} $} }; %
      \addplot[densely dashed,black,line width=1pt] coordinates{(0.5,0.001)(0.5,10)};
      \node[draw] at (axis cs:10,5) { $\lambda = 10^{-7}$ };
    \end{loglogaxis}
  \end{tikzpicture}
\end{minipage}
\hfill{}
\begin{minipage}[b]{0.24\columnwidth}%
  \begin{tikzpicture}[font=\scriptsize]
    \pgfplotsset{every major grid/.style={style=densely dashed}}
    \begin{loglogaxis}[
      width=1.15\linewidth,
      xmin=1e-3,
      xmax=1e2,
      ymin=1e-3,
      ymax=10,
      grid=major,
      scaled ticks=true,
      xlabel={ $N/n$},
      ylabel=\empty,
      legend style = {at={(0.02,0.02)}, anchor=south west, font=\footnotesize}
      ]
      \addplot+[RED,only marks,mark=x,line width=.5pt] coordinates{
          (0.001000,0.334168)(0.001778,0.243661)(0.003162,0.187962)(0.005623,0.157774)(0.010000,0.138851)(0.017783,0.123736)(0.031623,0.108837)(0.056234,0.092260)(0.100000,0.072821)(0.177828,0.049778)(0.316228,0.023902)(0.45,0.0099)(0.562341,4.690642)(1.000000,1.103588)(1.778279,0.443367)(3.162278,0.213539)(5.623413,0.110976)(10.000000,0.059835)(17.782794,0.032883)(31.622777,0.018258)(56.234133,0.010194)(100.000000,0.005710)
      };
      \addlegendentry{ {$ \gamma_{\cos} $} }; %
      \addplot+[BLUE,only marks,mark=x,line width=.5pt] coordinates{
          (0.001000,0.369888)(0.001778,0.356409)(0.003162,0.335294)(0.005623,0.305006)(0.010000,0.265644)(0.017783,0.219483)(0.031623,0.170581)(0.056234,0.123543)(0.100000,0.081929)(0.177828,0.047156)(0.316228,0.019443)(0.45,0.0075)(0.562341,3.462995)(1.000000,0.817874)(1.778279,0.332095)(3.162278,0.160988)(5.623413,0.083973)(10.000000,0.045369)(17.782794,0.024962)(31.622777,0.013869)(56.234133,0.007747)(100.000000,0.004340)
      };
      \addlegendentry{ {$ \gamma_{\sin} $} }; %
      \addplot[densely dashed,black,line width=1pt] coordinates{(0.5,0.001)(0.5,100)};
      \node[draw] at (axis cs:10,5) { $\lambda = 10^{-3}$ };
    \end{loglogaxis}
  \end{tikzpicture}
\end{minipage}
\hfill{}
\begin{minipage}[b]{0.24\columnwidth}%
  \begin{tikzpicture}[font=\scriptsize]
    \pgfplotsset{every major grid/.style={style=densely dashed}}
    \begin{loglogaxis}[
      width=1.15\linewidth,
      xmin=1e-3,
      xmax=1e2,
      ymin=1e-3,
      ymax=1,
      grid=major,
      scaled ticks=true,
      xlabel={ $N/n$},
      ylabel=\empty,
      legend style = {at={(0.02,0.02)}, anchor=south west, font=\footnotesize}
      ]
      \addplot+[RED,only marks,mark=x,line width=.5pt] coordinates{
          (0.001000,0.497800)(0.001778,0.436647)(0.003162,0.363688)(0.005623,0.292138)(0.010000,0.234057)(0.017783,0.192693)(0.031623,0.164703)(0.056234,0.145243)(0.100000,0.130430)(0.177828,0.117729)(0.316228,0.105633)(0.562341,0.093311)(1.000000,0.080411)(1.778279,0.067005)(3.162278,0.053554)(5.623413,0.040791)(10.000000,0.029498)(17.782794,0.020247)(31.622777,0.013234)(56.234133,0.008291)(100.000000,0.005019)
      };
      \addlegendentry{ {$ \gamma_{\cos} $} }; %
      \addplot+[BLUE,only marks,mark=x,line width=.5pt] coordinates{
          (0.001000,0.383235)(0.001778,0.379128)(0.003162,0.372321)(0.005623,0.361486)(0.010000,0.345203)(0.017783,0.322512)(0.031623,0.293525)(0.056234,0.259602)(0.100000,0.223006)(0.177828,0.186297)(0.316228,0.151686)(0.562341,0.120622)(1.000000,0.093736)(1.778279,0.071063)(3.162278,0.052356)(5.623413,0.037303)(10.000000,0.025597)(17.782794,0.016886)(31.622777,0.010724)(56.234133,0.006584)(100.000000,0.003932)
      };
      \addlegendentry{ {$ \gamma_{\sin} $} }; %
      \addplot[densely dashed,black,line width=1pt] coordinates{(0.5,0.001)(0.5,100000000)};
      \node[draw] at (axis cs:5,0.5) { $\lambda = 1$ };
    \end{loglogaxis}
  \end{tikzpicture}
\end{minipage}
\hfill{}
\begin{minipage}[b]{0.24\columnwidth}%
  \begin{tikzpicture}[font=\scriptsize]
    \pgfplotsset{every major grid/.style={style=densely dashed}}
    \begin{loglogaxis}[
      width=1.15\linewidth,
      xmin=1e-3,
      xmax=1e2,
      ymin=1e-3,
      ymax=1,
      grid=major,
      scaled ticks=true,
      xlabel={ $N/n$},
      ylabel=\empty,
      legend style = {at={(0.02,0.02)}, anchor=south west, font=\footnotesize}
      ]
      \addplot+[RED,only marks,mark=x,line width=.5pt] coordinates{
          (0.001000,0.58980)(0.001778,0.57541)(0.003162,0.55192)(0.005623,0.51567)(0.010000,0.46452)(0.017783,0.40078)(0.031623,0.33263)(0.056234,0.27060)(0.100000,0.22148)(0.177828,0.18608)(0.316228,0.16143)(0.562341,0.014369)(1.000000,0.012972)(1.778279,0.011741)(3.162278,0.010544)(5.623413,0.009310)(10.000000,0.008014)(17.782794,0.006670)(31.622777,0.005328)(56.234133,0.004059)(100.000000,0.002938)
      };
      \addlegendentry{ {$ \gamma_{\cos} $} }; %
      \addplot+[BLUE,only marks,mark=x,line width=.5pt] coordinates{
          (0.001000,0.38958)(0.001778,0.38900)(0.003162,0.38799)(0.005623,0.38622)(0.010000,0.38319)(0.017783,0.37811)(0.031623,0.36987)(0.056234,0.35713)(0.100000,0.33869)(0.177828,0.31405)(0.316228,0.28389)(0.562341,0.024990)(1.000000,0.021428)(1.778279,0.017921)(3.162278,0.014642)(5.623413,0.011699)(10.000000,0.009138)(17.782794,0.006962)(31.622777,0.005153)(56.234133,0.003686)(100.000000,0.002538)
      };
      \addlegendentry{ {$ \gamma_{\sin} $} }; %
      \addplot[densely dashed,black,line width=1pt] coordinates{(0.5,0.001)(0.5,100000000)};
      \node[draw] at (axis cs:5,0.5) { $\lambda = 10$ };
    \end{loglogaxis}
  \end{tikzpicture}
\end{minipage}
\end{center}
\caption{ Behavior of $(\gamma_{\cos}, \gamma_{\sin})$ in \eqref{eq:fixed-point-delta} on MNIST data set (class $3$ versus $7$), as a function of the ratio $N/n$, for $p=784$, $n=1\,000$, $\lambda = 10^{-7}, 10^{-3}, 1, 10$. The {\bf black} dashed line represents the interpolation threshold $2 N =n$. }
\label{fig:trend-delta-N/n-ridgeless}
\end{figure}

\paragraph{Asymptotic training MSE as $\lambda \to 0$.}
In the under-parameterized regime with $2N<n$, combining \eqref{eq:def-theta}  we have that both $\lambda \bar \Q$ and $\frac{\bar \Q}{1 + \delta_\sigma} \sim \frac{\lambda \bar \Q}{\gamma_\sigma}, \sigma \in \{ \cos, \sin \}$ are well-behaved and are generally not zero. 
As a consequence, by Theorem~\ref{theo:asy-training-MSE}, the asymptotic training error $\bar E_{\train}$ tends to a nonzero limit as $\lambda \to 0$, measuring the residual information in the training set that is not captured by the regressor $\bbeta \in \RR^{2N}$. 
As $2N - n \uparrow 0$, we have $\gamma_{\cos}, \gamma_{\sin} \to 0$ and $\| \lambda \bar \Q \| \to 0$ so that $\bar E_{\train} \to 0$ and $\bbeta$ interpolates the entire training set. 
On the other hand, in the over-parameterized $2 N > n$ regime, one \emph{always} has $\bar E_{\train} = 0$.
This particularly implies the training error is ``continuous'' around the point $2N = n$.

\paragraph{Asymptotic test MSE as $\lambda \to 0$.}
Again, in the under-parameterized regime with $2N<n$, now consider the more involved asymptotic test error in Theorem~\ref{theo:asy-test-MSE}.
In particular, we will focus here on the case $\hat \X \neq \X$ (or, more precisely, they are sufficiently different from each other in such a way that $\| \X - \hat \X \| \not \to 0$ as $n,p,N \to \infty$ and $\lambda \to 0$; see further discussion below in Section~\ref{subsec:impact-train-test-similarity}) so that $\K_{\sigma}(\X,\X) \neq \K_{\sigma}(\hat \X,\X)$ and $\frac{N}n \hat \bPhi \bar \Q \neq \I_n  - \lambda \bar \Q$. 
In this case, the two-by-two matrix $ \bOmega$ in $\bar E_{\test}$ diverges to infinity at $2N = n$ in the $\lambda \to 0$ limit. 
(Indeed, the determinant $\det(\bOmega^{-1})$ scales as $\lambda$, per Lemma~\ref{lem:property-of-Delta}.) 
As a consequence, we have $\bar E_{\test} \to \infty$ as $2N \rightarrow n$, resulting in a sharp deterioration of the test performance around $2N = n$.
(Of course, this holds if no additional regularization is applied as discussed in Remark~\ref{rem:ridge-reg}.)
It is also interesting to note that, while $\bOmega$ also appears in $\bar E_{\train}$, we still obtain (asymptotically) zero training MSE at $2N = n$, despite the divergence of $\bOmega$, again due to the prefactor $\lambda^2$ in $\bar E_{\train}$.
If $\lambda \gtrsim 1$, then $\det(\bOmega^{-1})$ exhibits much more regular properties (Figure~\ref{fig:trend-inv-Omega}), as one would~expect.

\begin{figure}[t] 
\vskip 0.1in
\begin{center}
\begin{minipage}[b]{0.24\columnwidth}%
  \begin{tikzpicture}[font=\footnotesize]
    \pgfplotsset{every major grid/.style={style=densely dashed}}
    \begin{axis}[
      width=1.15\linewidth,
      xmin=0,
      xmax=1.5,
      ymin=0,
      ymax=1,
      ytick={0,0.5,1},
      grid=major,
      ymajorgrids=false,
      scaled ticks=true,
      xlabel={ $N/n$ },
      ylabel={ $\det(\bOmega^{-1})$ },
      legend style = {at={(0.98,0.98)}, anchor=north east, font=\footnotesize}
      ]
      \addplot[RED,densely dashed,line width=1.5pt] coordinates{
          (0.000000,1.000000)(0.050000,0.471290)(0.100000,0.380608)(0.150000,0.322382)(0.200000,0.278170)(0.250000,0.239452)(0.300000,0.201133)(0.350000,0.160290)(0.400000,0.115081)(0.410000,0.105361)(0.420000,0.095371)(0.430000,0.085087)(0.440000,0.074479)(0.450000,0.063509)(0.460000,0.052132)(0.470000,0.040297)(0.480000,0.027967)(0.490000,0.015285)(0.510000,0.015773)(0.520000,0.029031)(0.530000,0.042205)(0.540000,0.055028)(0.550000,0.067466)(0.560000,0.079528)(0.570000,0.091231)(0.580000,0.102595)(0.590000,0.113639)(0.600000,0.124381)(0.700000,0.218208)(0.800000,0.293633)(0.900000,0.356036)(1.000000,0.408587)(1.100000,0.453429)(1.200000,0.492111)(1.300000,0.525794)(1.400000,0.555370)(1.500000,0.581533)
      };
      \addplot[densely dashed,black,line width=1pt] coordinates{(0.5,0.001)(0.5,10)};
      \node[draw] at (axis cs:1,.8) { $\lambda = 10^{-7}$ };
    \end{axis}
    \end{tikzpicture}
\end{minipage}
\hfill{}
\begin{minipage}[b]{0.24\columnwidth}%
  \begin{tikzpicture}[font=\footnotesize]
    \pgfplotsset{every major grid/.style={style=densely dashed}}
    \begin{axis}[
      width=1.15\linewidth,
      xmin=0,
      xmax=1.5,
      ymin=0,
      ymax=1,
      ytick={0,0.5,1},
      grid=major,
      ymajorgrids=false,
      scaled ticks=true,
      xlabel={ $N/n$ },
      ylabel= \empty,
      legend style = {at={(0.98,0.98)}, anchor=north east, font=\footnotesize}
      ]
      \addplot[RED,densely dashed,line width=1.5pt] coordinates{
          (0.000000,1.000000)(0.050000,0.588413)(0.100000,0.505249)(0.150000,0.443079)(0.200000,0.388819)(0.250000,0.339492)(0.300000,0.294874)(0.350000,0.256345)(0.400000,0.226875)(0.410000,0.222429)(0.420000,0.218548)(0.430000,0.215263)(0.440000,0.212603)(0.450000,0.210588)(0.460000,0.209234)(0.470000,0.208547)(0.480000,0.208525)(0.490000,0.209156)(0.510000,0.212291)(0.520000,0.214732)(0.530000,0.217704)(0.540000,0.221165)(0.550000,0.225068)(0.560000,0.229368)(0.570000,0.234020)(0.580000,0.238978)(0.590000,0.244203)(0.600000,0.249653)(0.700000,0.310166)(0.800000,0.370209)(0.900000,0.423850)(1.000000,0.470470)(1.100000,0.510801)(1.200000,0.545797)(1.300000,0.576339)(1.400000,0.603167)(1.500000,0.626887)
      };
      \addplot[densely dashed,black,line width=1pt] coordinates{(0.5,0)(0.5,1)};
      \node[draw] at (axis cs:1,.8) { $\lambda = 10^{-3}$ };
    \end{axis}
    \end{tikzpicture}
\end{minipage}
\hfill{}
\begin{minipage}[b]{0.24\columnwidth}%
  \begin{tikzpicture}[font=\footnotesize]
    \pgfplotsset{every major grid/.style={style=densely dashed}}
    \begin{axis}[
      width=1.15\linewidth,
      xmin=0,
      xmax=1.5,
      ymin=0,
      ymax=1,
      ytick={0,0.5,1},
      grid=major,
      ymajorgrids=false,
      scaled ticks=true,
      xlabel={ $N/n$ },
      ylabel= \empty,
      legend style = {at={(0.98,0.98)}, anchor=north east, font=\footnotesize}
      ]
      \addplot[RED,densely dashed,line width=1.5pt] coordinates{
          (0.000000,1.000000)(0.050000,0.939052)(0.100000,0.942547)(0.150000,0.944446)(0.200000,0.945943)(0.250000,0.947219)(0.300000,0.948337)(0.350000,0.949331)(0.400000,0.950226)(0.410000,0.950395)(0.420000,0.950560)(0.430000,0.950723)(0.440000,0.950882)(0.450000,0.951038)(0.460000,0.951192)(0.470000,0.951343)(0.480000,0.951492)(0.490000,0.951638)(0.510000,0.951922)(0.520000,0.952061)(0.530000,0.952198)(0.540000,0.952333)(0.550000,0.952465)(0.560000,0.952596)(0.570000,0.952724)(0.580000,0.952851)(0.590000,0.952975)(0.600000,0.953098)(0.700000,0.954239)(0.800000,0.955244)(0.900000,0.956143)(1.000000,0.956956)(1.100000,0.957700)(1.200000,0.958385)(1.300000,0.959020)(1.400000,0.959614)(1.500000,0.960171)
      };
      \addplot[densely dashed,black,line width=1pt] coordinates{(0.5,0)(0.5,1)};
      \node[draw] at (axis cs:1,.8) { $\lambda = 1$ };
    \end{axis}
    \end{tikzpicture}
\end{minipage}
\hfill{}
\begin{minipage}[b]{0.24\columnwidth}%
  \begin{tikzpicture}[font=\footnotesize]
    \pgfplotsset{every major grid/.style={style=densely dashed}}
    \begin{axis}[
      width=1.15\linewidth,
      xmin=0,
      xmax=1.5,
      ymin=0,
      ymax=1,
      ytick={0,0.5,1},
      grid=major,
      ymajorgrids=false,
      scaled ticks=true,
      xlabel={ $N/n$ },
      ylabel= \empty,
      legend style = {at={(0.98,0.98)}, anchor=north east, font=\footnotesize}
      ]
      \addplot[RED,densely dashed,line width=1.5pt] coordinates{
          (0.000000,1.000000)(0.050000,0.988070)(0.100000,0.989846)(0.150000,0.990814)(0.200000,0.991382)(0.250000,0.991754)(0.300000,0.992019)(0.350000,0.992219)(0.400000,0.992379)(0.410000,0.992407)(0.420000,0.992435)(0.430000,0.992461)(0.440000,0.992486)(0.450000,0.992511)(0.460000,0.992535)(0.470000,0.992558)(0.480000,0.992580)(0.490000,0.992602)(0.510000,0.992644)(0.520000,0.992664)(0.530000,0.992683)(0.540000,0.992702)(0.550000,0.992721)(0.560000,0.992739)(0.570000,0.992757)(0.580000,0.992774)(0.590000,0.992791)(0.600000,0.992807)(0.700000,0.992956)(0.800000,0.993081)(0.900000,0.993190)(1.000000,0.993287)(1.100000,0.993374)(1.200000,0.993454)(1.300000,0.993528)(1.400000,0.993596)(1.500000,0.993660)
      };
      \addplot[densely dashed,black,line width=1pt] coordinates{(0.5,0)(0.5,1)};
      \node[draw] at (axis cs:1,.8) { $\lambda = 10$ };
    \end{axis}
    \end{tikzpicture}
\end{minipage}
\end{center}
\caption{ Behavior of $\det(\bOmega^{-1})$ on MNIST data set (class $3$ versus $7$), as a function of $N/n$, for $p=784$, $n=1\,000$ and $\lambda = 10^{-7}, 10^{-3}, 1, 10$. The {\bf black} dashed line represents the interpolation threshold $2 N =n$. 
}
\label{fig:trend-inv-Omega}
\end{figure}

\subsection{Impact of Training-test Similarity}
\label{subsec:impact-train-test-similarity}

Continuing our discussion of the RFF performance in the large $n,p,N$ limit, we can see that the (asymptotic) test error behaves entirely differently, depending on whether $\hat \X $ is ``close to'' $ \X$ or not. 
For $\hat \X = \X$, one has $\bar E_{\test} = \bar E_{\train}$ that decreases monotonically as $N$ grows large; while for $\hat \X$ ``sufficiently'' different from $\X$, $\bar E_{\test}$ diverges to infinity at $2N= n$. 
To have a more quantitative assessment of the influence of training-test data similarity on the test error, consider the special case $\hat n = n$ and $\hat \y = \y$.
In this case, it follows from Theorem~\ref{theo:asy-test-MSE} that
\begin{align*}
  &\Theta_\sigma  = \frac1N \tr (\K_\sigma + \K_\sigma(\hat \X, \hat \X) - 2 \K_\sigma(\hat \X, \X)) + \frac2n \tr \bar \Q \Delta \bPhi^\T \Delta \K_\sigma \\ 
  & + \frac{N}n \frac1n \tr \bar \Q \Delta \bPhi^\T \Delta \bPhi \bar \Q \K_\sigma + \frac{n}N \frac{\lambda^2}n \tr \bar \Q \K_\sigma \bar \Q - \frac{2\lambda}N \tr \bar \Q \Delta \K_\sigma - \frac{2\lambda}n \tr \bar \Q \Delta \bPhi^\T \bar \Q \K_\sigma ,
\end{align*}
for $\sigma \in \{\cos,\sin \}$, $\Delta \K_\sigma = \K_\sigma - \K_\sigma (\hat \X, \X)$ and $\Delta \bPhi \equiv \hat \bPhi - \bPhi$. 
Since in the ridgeless $\lambda \to 0$ limit the entries of $\bOmega$ scale as $\lambda^{-1}$, one must scale $\Theta_\sigma$ with $\lambda$ so that $\bar E_{\test}$ does not diverge at $2N= n$ as $\lambda \to 0$. 
One example is the case where the test data is a small (additive) perturbation of the training data such that, in the kernel feature space
\[
  \K_\sigma - \K_\sigma(\hat \X ,\X) = \lambda \bXi_\sigma,~\K_\sigma (\hat \X, \hat \X) - \K_\sigma(\hat \X ,\X) = \lambda \hat \bXi_\sigma
\]
for $\bXi_\sigma, \hat \bXi_\sigma \in \RR^{n \times n}$ of bounded spectral norms. 
In this setting, we have $\Theta_\sigma = \frac{\lambda}N \tr (\bXi_\sigma + \hat \bXi_\sigma) + O(\lambda^2)$ so that the asymptotic test error does not diverge to infinity at $2N = n$ as $\lambda \to 0$. 
This is supported by Figure~\ref{fig:training-test-similarity}, where the test data are generated by adding Gaussian white noise of variance $\sigma^2$ to the training data, i.e.,
\begin{equation}\label{eq:similarity-model}
    \hat \x_i = \x_i + \sigma \boldsymbol{\varepsilon}_i
\end{equation}
for independent $\boldsymbol{\varepsilon}_i \sim \mathcal N(\zo, \I_p/p )$. 
In Figure~\ref{fig:training-test-similarity}, we observe that (i) below the threshold $\sigma^2 = \lambda$, test error coincides with the training error and both are close to zero; and (ii) as soon as $\sigma^2 > \lambda$, the test error diverges from the training error and grows large (but linearly in $\sigma^2$) as the noise level increases. Note also from the two rightmost plots of Figure~\ref{fig:training-test-similarity} that, the training-to-test ``transition'' at $\sigma^2 \sim \lambda$ is \emph{sharp} only for relatively small values of $\lambda$, as predicted by our theory.

\begin{figure}[t] 
\vskip 0.1in
\begin{center}
\begin{minipage}[b]{0.24\columnwidth}%
  \begin{tikzpicture}[font=\scriptsize]
    \pgfplotsset{every major grid/.style={style=densely dashed}}
    \begin{loglogaxis}[
      width=1.2\linewidth,
      xmin=1e-10,
      xmax=1e-2,
      xtick={1e-10,1e-7,1e-4},
    grid=major,
    ymajorgrids=false,
    scaled ticks=true,
    xlabel={ Noise variance $\sigma^2$ },
    ylabel={ MSE },
    legend style = {at={(0.02,0.75)}, anchor=north west, font=\scriptsize}
      ]
      \addplot+[BLUE,only marks,mark=x,line width=.5pt] coordinates{
      (0.000000000100,0.000026)(0.000000000158,0.000033)(0.000000000251,0.000039)(0.000000000398,0.000039)(0.000000000631,0.000029)(0.000000001000,0.000026)(0.000000001585,0.000032)(0.000000002512,0.000023)(0.000000003981,0.000032)(0.000000006310,0.000053)(0.000000010000,0.000030)(0.000000015849,0.000028)(0.000000025119,0.000051)(0.000000039811,0.000042)(0.000000063096,0.000040)(0.000000100000,0.000053)(0.000000158489,0.000045)(0.000000251189,0.000063)(0.000000398107,0.000085)(0.000000630957,0.000105)(0.000001000000,0.000174)(0.000001584893,0.000261)(0.000002511886,0.000431)(0.000003981072,0.000515)(0.000006309573,0.000685)(0.000010000000,0.001104)(0.000015848932,0.002788)(0.000025118864,0.003005)(0.000039810717,0.004328)(0.000063095734,0.008595)(0.000100000000,0.012543)(0.000158489319,0.018436)(0.000251188643,0.032821)(0.000398107171,0.041604)(0.000630957344,0.070976)(0.001000000000,0.130489)(0.001584893192,0.216937)(0.002511886432,0.288067)(0.003981071706,0.560017)(0.006309573445,0.728798)(0.010000000000,1.502515)
      };
      \addlegendentry{ {$ E_{\test} $} }; %
      \addplot+[RED,only marks,mark=o,line width=.5pt,mark size=1.5pt] coordinates{
      (0.000000000100,0.000026)(0.000000000158,0.000033)(0.000000000251,0.000039)(0.000000000398,0.000039)(0.000000000631,0.000029)(0.000000001000,0.000026)(0.000000001585,0.000032)(0.000000002512,0.000022)(0.000000003981,0.000031)(0.000000006310,0.000053)(0.000000010000,0.000029)(0.000000015849,0.000026)(0.000000025119,0.000047)(0.000000039811,0.000037)(0.000000063096,0.000031)(0.000000100000,0.000040)(0.000000158489,0.000027)(0.000000251189,0.000035)(0.000000398107,0.000033)(0.000000630957,0.000033)(0.000001000000,0.000026)(0.000001584893,0.000044)(0.000002511886,0.000063)(0.000003981072,0.000027)(0.000006309573,0.000030)(0.000010000000,0.000036)(0.000015848932,0.000030)(0.000025118864,0.000024)(0.000039810717,0.000027)(0.000063095734,0.000055)(0.000100000000,0.000042)(0.000158489319,0.000032)(0.000251188643,0.000030)(0.000398107171,0.000033)(0.000630957344,0.000023)(0.001000000000,0.000039)(0.001584893192,0.000027)(0.002511886432,0.000031)(0.003981071706,0.000030)(0.006309573445,0.000042)(0.010000000000,0.000033)
      };
      \addlegendentry{ {$ E_{\train} $} }; %
      \node[draw] at (axis cs:1e-8,1) {$\lambda = 10^{-7}$ };
    \end{loglogaxis}
  \end{tikzpicture}
\end{minipage}
\hfill{}
\begin{minipage}[b]{0.24\columnwidth}%
  \begin{tikzpicture}[font=\scriptsize]
    \pgfplotsset{every major grid/.style={style=densely dashed}}
    \begin{loglogaxis}[
      width=1.2\linewidth,
      xmin=1e-6,
      xmax=1,
      xtick={1e-5,1e-3,1e-1},
      grid=major,
      ymajorgrids=false,
      scaled ticks=true,
      xlabel={ Noise variance $\sigma^2$ },
      ylabel=\empty,
      legend style = {at={(0.02,0.02)}, anchor=south west, font=\footnotesize}
      ]
      \addplot+[BLUE,only marks,mark=x,line width=.5pt] coordinates{
      (0.000001000000,0.002735)(0.000001584893,0.002788)(0.000002511886,0.002718)(0.000003981072,0.002630)(0.000006309573,0.002877)(0.000010000000,0.002701)(0.000015848932,0.002683)(0.000025118864,0.002643)(0.000039810717,0.002673)(0.000063095734,0.002963)(0.000100000000,0.002794)(0.000158489319,0.003021)(0.000251188643,0.003079)(0.000398107171,0.003203)(0.000630957344,0.003356)(0.001000000000,0.003913)(0.001584893192,0.004627)(0.002511886432,0.005953)(0.003981071706,0.007795)(0.006309573445,0.010345)(0.010000000000,0.015188)(0.015848931925,0.022537)(0.025118864315,0.033272)(0.039810717055,0.052432)(0.063095734448,0.081200)(0.100000000000,0.120379)(0.158489319246,0.193727)(0.251188643151,0.300936)(0.398107170553,0.480772)(0.630957344480,0.658030)(1.000000000000,0.986512)
      };
      \addplot+[RED,only marks,mark=o,line width=.5pt,mark size=1.5pt] coordinates{
      (0.000001000000,0.002734)(0.000001584893,0.002786)(0.000002511886,0.002715)(0.000003981072,0.002626)(0.000006309573,0.002868)(0.000010000000,0.002691)(0.000015848932,0.002666)(0.000025118864,0.002608)(0.000039810717,0.002625)(0.000063095734,0.002889)(0.000100000000,0.002664)(0.000158489319,0.002817)(0.000251188643,0.002749)(0.000398107171,0.002702)(0.000630957344,0.002595)(0.001000000000,0.002672)(0.001584893192,0.002694)(0.002511886432,0.002752)(0.003981071706,0.002753)(0.006309573445,0.002642)(0.010000000000,0.002815)(0.015848931925,0.002826)(0.025118864315,0.002679)(0.039810717055,0.002743)(0.063095734448,0.002746)(0.100000000000,0.002671)(0.158489319246,0.002887)(0.251188643151,0.002761)(0.398107170553,0.002906)(0.630957344480,0.002742)(1.000000000000,0.002765)
      };
     \node[draw] at (axis cs:5e-5,0.8) {$\lambda = 10^{-3}$ };
    \end{loglogaxis}
  \end{tikzpicture}
\end{minipage}
\hfill{}
\begin{minipage}[b]{0.24\columnwidth}%
  \begin{tikzpicture}[font=\scriptsize]
    \pgfplotsset{every major grid/.style={style=densely dashed}}
    \begin{loglogaxis}[
      width=1.2\linewidth,
      xmin=1e-6,
      xmax=10,
      ymax=5,
      xtick={1e-5,1e-3,1e-1},
      grid=major,
      ymajorgrids=false,
      scaled ticks=true,
      xlabel={ Noise variance $\sigma^2$ },
      ylabel=\empty,
      legend style = {at={(0.02,0.02)}, anchor=south west, font=\footnotesize}
      ]
      \addplot+[BLUE,only marks,mark=x,line width=.5pt] coordinates{
      (0.000001000000,0.077177)(0.000001584893,0.076451)(0.000002511886,0.078551)(0.000003981072,0.077435)(0.000006309573,0.076806)(0.000010000000,0.077717)(0.000015848932,0.077647)(0.000025118864,0.077255)(0.000039810717,0.078225)(0.000063095734,0.078181)(0.000100000000,0.077839)(0.000158489319,0.078192)(0.000251188643,0.079879)(0.000398107171,0.078948)(0.000630957344,0.077592)(0.001000000000,0.078543)(0.001584893192,0.076577)(0.002511886432,0.077477)(0.003981071706,0.078227)(0.006309573445,0.078301)(0.010000000000,0.079202)(0.015848931925,0.078072)(0.025118864315,0.080334)(0.039810717055,0.079789)(0.063095734448,0.083742)(0.100000000000,0.088536)(0.158489319246,0.096122)(0.251188643151,0.106642)(0.398107170553,0.135957)(0.630957344480,0.184268)(1.000000000000,0.267565)(1.584893192461,0.403337)(2.511886431510,0.599158)(3.981071705535,0.804987)(6.309573444802,0.951182)(10.000000000000,1.013940)(15.848931924611,1.025010)(25.118864315096,1.028581)(39.810717055350,1.031739)(63.095734448019,1.028122)(100.000000000000,1.025809)
      };
      \addplot+[RED,only marks,mark=o,line width=.5pt,mark size=1.5pt] coordinates{
      (0.000001000000,0.077178)(0.000001584893,0.076451)(0.000002511886,0.078551)(0.000003981072,0.077436)(0.000006309573,0.076804)(0.000010000000,0.077717)(0.000015848932,0.077643)(0.000025118864,0.077252)(0.000039810717,0.078227)(0.000063095734,0.078171)(0.000100000000,0.077831)(0.000158489319,0.078178)(0.000251188643,0.079853)(0.000398107171,0.078914)(0.000630957344,0.077527)(0.001000000000,0.078425)(0.001584893192,0.076470)(0.002511886432,0.077256)(0.003981071706,0.077893)(0.006309573445,0.077817)(0.010000000000,0.078222)(0.015848931925,0.076688)(0.025118864315,0.077927)(0.039810717055,0.076049)(0.063095734448,0.077619)(0.100000000000,0.078139)(0.158489319246,0.078051)(0.251188643151,0.075862)(0.398107170553,0.078358)(0.630957344480,0.080272)(1.000000000000,0.078647)(1.584893192461,0.077522)(2.511886431510,0.079564)(3.981071705535,0.078292)(6.309573444802,0.076867)(10.000000000000,0.078048)(15.848931924611,0.077146)(25.118864315096,0.077366)(39.810717055350,0.079470)(63.095734448019,0.076871)(100.000000000000,0.077142)
      };
     \node[draw] at (axis cs:3e-5,3) {$\lambda = 1$ };
    \end{loglogaxis}
  \end{tikzpicture}
\end{minipage}
\hfill{}
\begin{minipage}[b]{0.24\columnwidth}%
  \begin{tikzpicture}[font=\scriptsize]
    \pgfplotsset{every major grid/.style={style=densely dashed}}
    \begin{loglogaxis}[
      width=1.2\linewidth,
      xmin=1e-6,
      xmax=10,
      ymax=5,
      ymin=1e-1,
      xtick={1e-5,1e-3,1e-1},
      grid=major,
      ymajorgrids=false,
      scaled ticks=true,
      xlabel={ Noise variance $\sigma^2$ },
      ylabel=\empty,
      legend style = {at={(0.02,0.02)}, anchor=south west, font=\footnotesize}
      ]
      \addplot+[BLUE,only marks,mark=x,line width=.5pt] coordinates{
      (0.000001000000,0.198615)(0.000001584893,0.199663)(0.000002511886,0.196371)(0.000003981072,0.194391)(0.000006309573,0.196825)(0.000010000000,0.194874)(0.000015848932,0.197561)(0.000025118864,0.195060)(0.000039810717,0.194077)(0.000063095734,0.196349)(0.000100000000,0.195351)(0.000158489319,0.192463)(0.000251188643,0.196159)(0.000398107171,0.196978)(0.000630957344,0.197578)(0.001000000000,0.196525)(0.001584893192,0.199955)(0.002511886432,0.196484)(0.003981071706,0.191418)(0.006309573445,0.196083)(0.010000000000,0.197796)(0.015848931925,0.199932)(0.025118864315,0.199614)(0.039810717055,0.202861)(0.063095734448,0.206464)(0.100000000000,0.212631)(0.158489319246,0.224206)(0.251188643151,0.242882)(0.398107170553,0.269523)(0.630957344480,0.321153)(1.000000000000,0.400175)(1.584893192461,0.521788)(2.511886431510,0.679246)(3.981071705535,0.842081)(6.309573444802,0.953523)(10.000000000000,0.999080)(15.848931924611,1.007154)(25.118864315096,1.008755)(39.810717055350,1.006763)(63.095734448019,1.007870)(100.000000000000,1.007030)
      };
      \addplot+[RED,only marks,mark=o,line width=.5pt,mark size=1.5pt] coordinates{
      (0.000001000000,0.198615)(0.000001584893,0.199662)(0.000002511886,0.196369)(0.000003981072,0.194392)(0.000006309573,0.196824)(0.000010000000,0.194873)(0.000015848932,0.197555)(0.000025118864,0.195056)(0.000039810717,0.194068)(0.000063095734,0.196336)(0.000100000000,0.195333)(0.000158489319,0.192451)(0.000251188643,0.196126)(0.000398107171,0.196915)(0.000630957344,0.197477)(0.001000000000,0.196347)(0.001584893192,0.199703)(0.002511886432,0.196003)(0.003981071706,0.190747)(0.006309573445,0.195103)(0.010000000000,0.196144)(0.015848931925,0.197312)(0.025118864315,0.195291)(0.039810717055,0.196135)(0.063095734448,0.195809)(0.100000000000,0.195104)(0.158489319246,0.195677)(0.251188643151,0.196274)(0.398107170553,0.193571)(0.630957344480,0.195957)(1.000000000000,0.196849)(1.584893192461,0.195940)(2.511886431510,0.196917)(3.981071705535,0.197598)(6.309573444802,0.200675)(10.000000000000,0.194670)(15.848931924611,0.195564)(25.118864315096,0.196093)(39.810717055350,0.197776)(63.095734448019,0.196146)(100.000000000000,0.197109)
      };
     \node[draw] at (axis cs:3e-5,3) {$\lambda = 10$ };
    \end{loglogaxis}
  \end{tikzpicture}
\end{minipage}
\end{center}
\caption{ { Empirical training and test errors of RFF ridgeless regression on MNIST data (class $3$ versus $7$), when modeling training-test similarity as $\hat \X = \X + \sigma \boldsymbol{\varepsilon}$, with $\boldsymbol{\varepsilon}$ having i.i.d~$\mathcal{N}(0,1/p)$ entries, as a function of the noise level $\sigma^2$, for $N = 512$, $p=784$, $n = \hat n = 1\,024 = 2N$, $\lambda = 10^{-7}, 10^{-3}, 1, 10$. Results obtained by averaging over $30$~runs.  } }
\label{fig:training-test-similarity}
\end{figure}
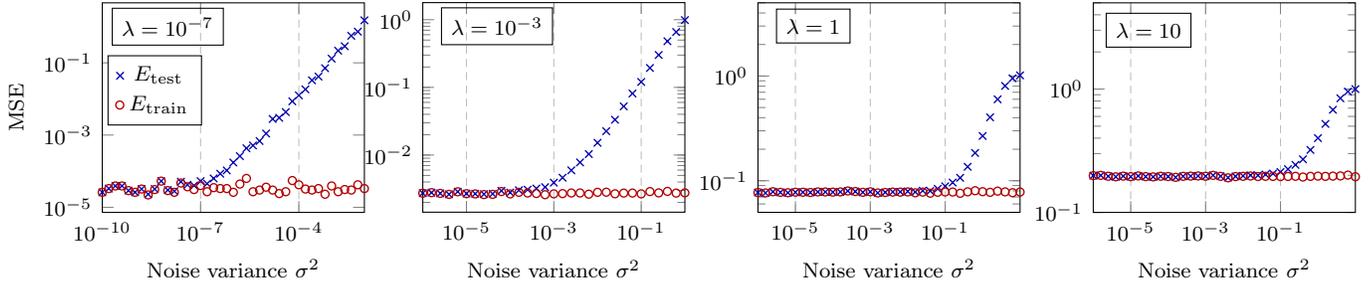

\subsection{Additional Real-world Data sets}
\label{sec:num}

So far, we have presented results in detail for one particular real-world data set, but we have extensive empirical results demonstrating that similar conclusions hold  more broadly.
As an example of these additional results, here we present a numerical evaluation of our results on several other real-world image data sets. 
We consider the classification task on another two MNIST-like data sets composed of $28 \times 28$ grayscale images: the Fashion-MNIST \cite{xiao2017fashion} and the Kannada-MNIST \cite{prabhu2019kannada} data sets. 
Each image is represented as a $p = 784$-dimensional vector and the output targets $\y, \hat \y$ are taken to have $-1,+1$ entries depending on the image class. 
As a consequence, both the training and test MSEs in \eqref{eq:def-MSE} are approximately $1$ for $N=0$ and significantly small $\lambda$, as observed in Figure~\ref{fig:double-descent-regularization-2} (and Figure~\ref{fig:other-MNIST-double-descent} below). 
For each data set, images were jointly centered and scaled so to fall close to the setting of Assumption~\ref{ass:high-dim} on $\X$ and $\hat \X$.

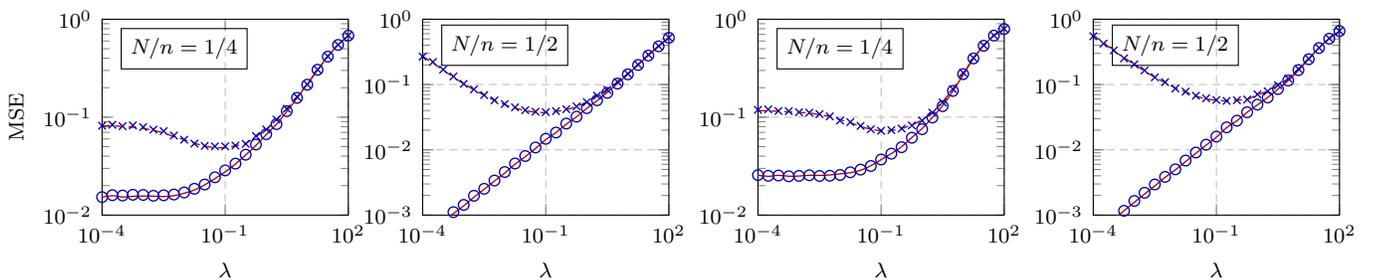
\begin{figure}[t] 
\vskip 0.1in
\begin{center}
\begin{minipage}[b]{0.24\columnwidth}%
  \begin{tikzpicture}[font=\scriptsize]
    \pgfplotsset{every major grid/.style={style=densely dashed}}
    \begin{loglogaxis}[
      width=1.15\linewidth,
      xmin=1e-4,
    xmax=1e2,
    ymin=1e-2,
    ymax=1,
    grid=major,
    ymajorgrids=false,
    scaled ticks=true,
    xlabel={ $\lambda$},
    ylabel={ MSE },
    legend style = {at={(0.02,0.98)}, anchor=north west, font=\footnotesize}
    ]
    \addplot+[BLUE,only marks,mark=o,line width=.5pt] coordinates{
    (0.000100,0.015266)(0.000178,0.015982)(0.000316,0.015850)(0.000562,0.016137)(0.001000,0.016070)(0.001778,0.015764)(0.003162,0.015945)(0.005623,0.016219)(0.010000,0.017028)(0.017783,0.018574)(0.031623,0.020527)(0.056234,0.024321)(0.100000,0.028576)(0.177828,0.033548)(0.316228,0.041378)(0.562341,0.052590)(1.000000,0.067051)(1.778279,0.084936)(3.162278,0.115226)(5.623413,0.157541)(10.000000,0.214581)(17.782794,0.304958)(31.622777,0.415423)(56.234133,0.546439)(100.000000,0.681059)
    };
    \addplot+[BLUE,only marks,mark=x,line width=.5pt] coordinates{
    (0.000100,0.081918)(0.000178,0.083620)(0.000316,0.080917)(0.000562,0.082466)(0.001000,0.079324)(0.001778,0.074606)(0.003162,0.072067)(0.005623,0.064991)(0.010000,0.058864)(0.017783,0.053763)(0.031623,0.050941)(0.056234,0.050338)(0.100000,0.050352)(0.177828,0.051324)(0.316228,0.053453)(0.562341,0.064233)(1.000000,0.074931)(1.778279,0.094641)(3.162278,0.121760)(5.623413,0.161369)(10.000000,0.216030)(17.782794,0.306309)(31.622777,0.417084)(56.234133,0.548762)(100.000000,0.682293)
    };
    \addplot[RED,smooth,line width=.5pt] coordinates{
    (0.000100,0.015214)(0.000178,0.015799)(0.000316,0.015304)(0.000562,0.015557)(0.001000,0.015666)(0.001778,0.015613)(0.003162,0.015482)(0.005623,0.015786)(0.010000,0.016725)(0.017783,0.018063)(0.031623,0.020341)(0.056234,0.023607)(0.100000,0.027668)(0.177828,0.033216)(0.316228,0.041137)(0.562341,0.051532)(1.000000,0.066275)(1.778279,0.084833)(3.162278,0.113978)(5.623413,0.156749)(10.000000,0.217284)(17.782794,0.301552)(31.622777,0.417221)(56.234133,0.548305)(100.000000,0.681746)
    }; 
    \addplot[RED,densely dashed,line width=.5pt] coordinates{
    (0.000100,0.081155)(0.000178,0.082218)(0.000316,0.080035)(0.000562,0.079085)(0.001000,0.076932)(0.001778,0.073367)(0.003162,0.069392)(0.005623,0.062969)(0.010000,0.059066)(0.017783,0.054188)(0.031623,0.050557)(0.056234,0.048753)(0.100000,0.048795)(0.177828,0.050987)(0.316228,0.053675)(0.562341,0.063462)(1.000000,0.074341)(1.778279,0.094317)(3.162278,0.120721)(5.623413,0.160857)(10.000000,0.219440)(17.782794,0.303123)(31.622777,0.418753)(56.234133,0.550608)(100.000000,0.682692)
    };
     \node[draw] at (axis cs:1e-2,0.5) {$N/n=1/4$ };
    \end{loglogaxis}
  \end{tikzpicture}
\end{minipage}
\hfill{}
\begin{minipage}[b]{0.24\columnwidth}%
  \begin{tikzpicture}[font=\scriptsize]
    \pgfplotsset{every major grid/.style={style=densely dashed}}
    \begin{loglogaxis}[
      width=1.15\linewidth,
      xmin=1e-4,
    xmax=1e2,
    ymin=1e-3,
    ymax=1,
      grid=major,
      scaled ticks=true,
      xlabel={ $\lambda$},
      ylabel=\empty,
      legend style = {at={(0.02,0.02)}, anchor=south west, font=\footnotesize}
      ]
      \addplot+[BLUE,only marks,mark=o,line width=.5pt] coordinates{
    (0.000100,0.000430)(0.000178,0.000648)(0.000316,0.000824)(0.000562,0.001110)(0.001000,0.001433)(0.001778,0.001978)(0.003162,0.002530)(0.005623,0.003385)(0.010000,0.004569)(0.017783,0.006172)(0.031623,0.008033)(0.056234,0.010854)(0.100000,0.014752)(0.177828,0.018396)(0.316228,0.025226)(0.562341,0.032204)(1.000000,0.043497)(1.778279,0.057439)(3.162278,0.074575)(5.623413,0.103397)(10.000000,0.143416)(17.782794,0.199256)(31.622777,0.278325)(56.234133,0.384921)(100.000000,0.520389)
    };
    \addplot+[BLUE,only marks,mark=x,line width=.5pt] coordinates{
    (0.000100,0.267531)(0.000178,0.221936)(0.000316,0.169568)(0.000562,0.132901)(0.001000,0.102452)(0.001778,0.084269)(0.003162,0.068954)(0.005623,0.057311)(0.010000,0.050300)(0.017783,0.044720)(0.031623,0.040523)(0.056234,0.038342)(0.100000,0.037768)(0.177828,0.039207)(0.316228,0.041877)(0.562341,0.046265)(1.000000,0.053738)(1.778279,0.067646)(3.162278,0.082918)(5.623413,0.109567)(10.000000,0.146241)(17.782794,0.201937)(31.622777,0.286400)(56.234133,0.389542)(100.000000,0.522865)
    };
    \addplot[RED,smooth,line width=.5pt] coordinates{
    (0.000100,0.000434)(0.000178,0.000594)(0.000316,0.000809)(0.000562,0.001058)(0.001000,0.001384)(0.001778,0.001882)(0.003162,0.002493)(0.005623,0.003333)(0.010000,0.004399)(0.017783,0.006024)(0.031623,0.007850)(0.056234,0.010494)(0.100000,0.014350)(0.177828,0.018408)(0.316228,0.024707)(0.562341,0.031994)(1.000000,0.043231)(1.778279,0.056450)(3.162278,0.076121)(5.623413,0.103830)(10.000000,0.143776)(17.782794,0.200797)(31.622777,0.278746)(56.234133,0.387949)(100.000000,0.521108)
    }; 
    \addplot[RED,densely dashed,line width=.5pt] coordinates{
    (0.000100,0.272188)(0.000178,0.212874)(0.000316,0.166990)(0.000562,0.129311)(0.001000,0.102404)(0.001778,0.082804)(0.003162,0.068916)(0.005623,0.056969)(0.010000,0.049077)(0.017783,0.043541)(0.031623,0.039706)(0.056234,0.037626)(0.100000,0.037053)(0.177828,0.038781)(0.316228,0.040854)(0.562341,0.046114)(1.000000,0.053602)(1.778279,0.067067)(3.162278,0.084511)(5.623413,0.109989)(10.000000,0.147119)(17.782794,0.203118)(31.622777,0.286812)(56.234133,0.391970)(100.000000,0.523480)
    };
     \node[draw] at (axis cs:1e-2,0.4) {$N/n=1/2$ };
    \end{loglogaxis}
  \end{tikzpicture}
\end{minipage}
\hfill{}
\begin{minipage}[b]{0.24\columnwidth}%
  \begin{tikzpicture}[font=\scriptsize]
    \pgfplotsset{every major grid/.style={style=densely dashed}}
    \begin{loglogaxis}[
      width=1.15\linewidth,
      xmin=1e-4,
    xmax=1e2,
    ymin=1e-2,
    ymax=1,
      grid=major,
      scaled ticks=true,
      xlabel={ $\lambda$},
      ylabel=\empty,
      legend style = {at={(0.02,0.02)}, anchor=south west, font=\footnotesize}
      ]
      \addplot+[BLUE,only marks,mark=o,line width=.5pt] coordinates{
    (0.000100,0.025632)(0.000178,0.025196)(0.000316,0.025372)(0.000562,0.024903)(0.001000,0.025142)(0.001778,0.025516)(0.003162,0.025097)(0.005623,0.025218)(0.010000,0.025700)(0.017783,0.027373)(0.031623,0.029150)(0.056234,0.031746)(0.100000,0.036971)(0.177828,0.042379)(0.316228,0.049359)(0.562341,0.061651)(1.000000,0.075314)(1.778279,0.099279)(3.162278,0.129558)(5.623413,0.185451)(10.000000,0.275128)(17.782794,0.397962)(31.622777,0.536327)(56.234133,0.680788)(100.000000,0.794677)
    };
    \addplot+[BLUE,only marks,mark=x,line width=.5pt] coordinates{
    (0.000100,0.118873)(0.000178,0.120163)(0.000316,0.115656)(0.000562,0.115344)(0.001000,0.112286)(0.001778,0.111173)(0.003162,0.107513)(0.005623,0.101916)(0.010000,0.092429)(0.017783,0.089112)(0.031623,0.081438)(0.056234,0.075307)(0.100000,0.072658)(0.177828,0.073806)(0.316228,0.076450)(0.562341,0.082400)(1.000000,0.092275)(1.778279,0.111102)(3.162278,0.141521)(5.623413,0.193733)(10.000000,0.284174)(17.782794,0.403901)(31.622777,0.541233)(56.234133,0.682713)(100.000000,0.798454)
    };
    \addplot[RED,smooth,line width=.5pt] coordinates{
    (0.000100,0.025551)(0.000178,0.024882)(0.000316,0.025473)(0.000562,0.024951)(0.001000,0.025057)(0.001778,0.025488)(0.003162,0.025426)(0.005623,0.025689)(0.010000,0.026388)(0.017783,0.027113)(0.031623,0.028687)(0.056234,0.031912)(0.100000,0.036821)(0.177828,0.042544)(0.316228,0.049873)(0.562341,0.062156)(1.000000,0.076455)(1.778279,0.099437)(3.162278,0.131391)(5.623413,0.184435)(10.000000,0.270177)(17.782794,0.395434)(31.622777,0.538771)(56.234133,0.681810)(100.000000,0.793475)
    }; 
    \addplot[RED,densely dashed,line width=.5pt] coordinates{
    (0.000100,0.116925)(0.000178,0.115850)(0.000316,0.115204)(0.000562,0.113150)(0.001000,0.111764)(0.001778,0.110667)(0.003162,0.106004)(0.005623,0.102209)(0.010000,0.094336)(0.017783,0.088139)(0.031623,0.079996)(0.056234,0.075360)(0.100000,0.072309)(0.177828,0.072906)(0.316228,0.076499)(0.562341,0.082802)(1.000000,0.093795)(1.778279,0.110762)(3.162278,0.143548)(5.623413,0.192376)(10.000000,0.278769)(17.782794,0.401952)(31.622777,0.543669)(56.234133,0.684236)(100.000000,0.796715)
    };
     \node[draw] at (axis cs:1e-2,0.5) {$N/n=1/4$ };
    \end{loglogaxis}
  \end{tikzpicture}
\end{minipage}
\hfill{}
\begin{minipage}[b]{0.24\columnwidth}%
  \begin{tikzpicture}[font=\scriptsize]
    \pgfplotsset{every major grid/.style={style=densely dashed}}
    \begin{loglogaxis}[
      width=1.15\linewidth,
      xmin=1e-4,
    xmax=1e2,
    ymin=1e-3,
    ymax=1,
      grid=major,
      scaled ticks=true,
      xlabel={ $\lambda$},
      ylabel=\empty,
      legend style = {at={(0.02,0.02)}, anchor=south west, font=\footnotesize}
      ]
      \addplot+[BLUE,only marks,mark=o,line width=.5pt] coordinates{
    (0.000100,0.000490)(0.000178,0.000649)(0.000316,0.000913)(0.000562,0.001163)(0.001000,0.001649)(0.001778,0.002198)(0.003162,0.002946)(0.005623,0.003849)(0.010000,0.005021)(0.017783,0.006847)(0.031623,0.009089)(0.056234,0.011988)(0.100000,0.015941)(0.177828,0.022076)(0.316228,0.027916)(0.562341,0.037600)(1.000000,0.049213)(1.778279,0.063398)(3.162278,0.085376)(5.623413,0.115296)(10.000000,0.166313)(17.782794,0.244029)(31.622777,0.362291)(56.234133,0.503619)(100.000000,0.658570)
    };
    \addplot+[BLUE,only marks,mark=x,line width=.5pt] coordinates{
    (0.000100,0.549335)(0.000178,0.423090)(0.000316,0.333316)(0.000562,0.251528)(0.001000,0.203819)(0.001778,0.164379)(0.003162,0.129057)(0.005623,0.108721)(0.010000,0.086952)(0.017783,0.077913)(0.031623,0.067433)(0.056234,0.061557)(0.100000,0.058028)(0.177828,0.056148)(0.316228,0.057410)(0.562341,0.058929)(1.000000,0.070780)(1.778279,0.078897)(3.162278,0.097464)(5.623413,0.123699)(10.000000,0.173397)(17.782794,0.249112)(31.622777,0.364655)(56.234133,0.509136)(100.000000,0.662081)
    };
    \addplot[RED,smooth,line width=.5pt] coordinates{
    (0.000100,0.000503)(0.000178,0.000674)(0.000316,0.000860)(0.000562,0.001166)(0.001000,0.001571)(0.001778,0.002102)(0.003162,0.002803)(0.005623,0.003791)(0.010000,0.004934)(0.017783,0.006610)(0.031623,0.008852)(0.056234,0.011833)(0.100000,0.015584)(0.177828,0.021870)(0.316228,0.027835)(0.562341,0.037627)(1.000000,0.048763)(1.778279,0.063447)(3.162278,0.085113)(5.623413,0.114717)(10.000000,0.165084)(17.782794,0.244613)(31.622777,0.365779)(56.234133,0.506139)(100.000000,0.653405)
    }; 
    \addplot[RED,densely dashed,line width=.5pt] coordinates{
    (0.000100,0.570018)(0.000178,0.438140)(0.000316,0.325364)(0.000562,0.258014)(0.001000,0.203011)(0.001778,0.160791)(0.003162,0.128753)(0.005623,0.106812)(0.010000,0.088084)(0.017783,0.076624)(0.031623,0.066340)(0.056234,0.060056)(0.100000,0.057426)(0.177828,0.055940)(0.316228,0.057823)(0.562341,0.058736)(1.000000,0.069824)(1.778279,0.078837)(3.162278,0.096962)(5.623413,0.122630)(10.000000,0.172803)(17.782794,0.249011)(31.622777,0.368278)(56.234133,0.511864)(100.000000,0.656849)
    };
     \node[draw] at (axis cs:1e-2,0.4) {$N/n=1/2$ };
    \end{loglogaxis}
  \end{tikzpicture}
\end{minipage}
\end{center}
\caption{ { MSEs of RFF regression on Fashion-MNIST (\textbf{left two}) and Kannada-MNIST (\textbf{right two}) data (class $5$ versus $6$), as a function of regression parameter $\lambda$, for $p=784$, $n = \hat n=1\,024$, $N=256$ and $512$. Empirical results displayed in {\BLUE \textbf{blue}} (circles for training and crosses for test); and the asymptotics from Theorem~\ref{theo:asy-training-MSE}~and~\ref{theo:asy-test-MSE} displayed in {\RED \textbf{red}} (sold lines for training and dashed for test). Results obtained by averaging over $30$ runs.  } }
\label{fig:fashion-and-kannada-MNIST-gamma}
\end{figure}

In Figure~\ref{fig:fashion-and-kannada-MNIST-gamma}, we compare the empirical training and test errors with their limiting behaviors derived in Theorem~\ref{theo:asy-training-MSE}~and~\ref{theo:asy-test-MSE}, as a function of the penalty parameter $\lambda$, on a training set of size $n=1\,024$ ($512$ images from class $5$ and $512$ images from class $6$) with feature dimension $N=256$ and $N=512$, on both data sets. A close fit between theory and practice is observed, for moderately large values of $n,p,N$, demonstrating thus a wide practical applicability of the proposed asymptotic analyses, particularly compared to the (limiting) Gaussian kernel predictions per Figure~\ref{fig:compare-kernel-RMT}.

In Figure~\ref{fig:more-data-trend-delta-N/n}, we report the behavior of the pair $(\delta_{\cos}, \delta_{\sin})$ for small values of $\lambda = 10^{-7} $ and $ 10^{-3}$. 
Similar to the two leftmost plots in Figure~\ref{fig:trend-delta-N/n} for MNIST, a jump from the under- to over-parameterized regime occurs at the interpolation threshold $2N = n$, in both Fashion- and Kannada-MNIST data sets, clearly indicating the two phases of learning and the phase transition between them.

%
%
%
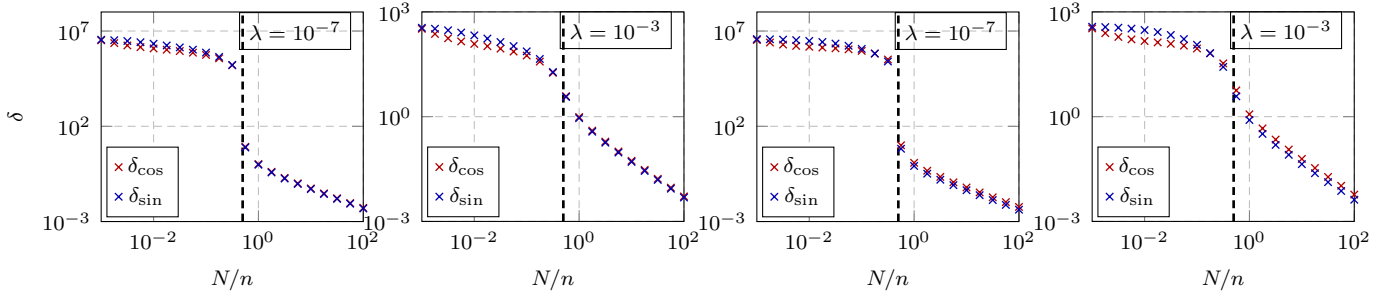
\begin{figure}[t] 
\vskip 0.1in
\begin{center}
\begin{minipage}[b]{0.24\columnwidth}%
  \begin{tikzpicture}[font=\scriptsize]
    \pgfplotsset{every major grid/.style={style=densely dashed}}
    \begin{loglogaxis}[
      width=1.2\linewidth,
      xmin=1e-3,
      xmax=1e2,
      ymin=1e-3,
      ymax=1e8,
      grid=major,
      xlabel={ $N/n$},
      ylabel={ $\delta$ },
      legend style = {at={(0.02,0.02)}, anchor=south west, font=\scriptsize}
      ]
      \addplot+[RED,only marks,mark=x,line width=.5pt] coordinates{
          (0.001000,3273602.056286)(0.001778,2327813.973018)(0.003162,1751244.067339)(0.005623,1429261.532996)(0.010000,1217149.157950)(0.017783,1047389.627170)(0.031623,890551.427939)(0.056234,731518.943278)(0.100000,560163.639364)(0.177828,369112.698758)(0.316228,159460.883742)(0.562341,8.403651)(1.000000,1.043027)(1.778279,0.407041)(3.162278,0.195205)(5.623413,0.101369)(10.000000,0.054650)(17.782794,0.030034)(31.622777,0.016676)(56.234133,0.009312)(100.000000,0.005216)
      };
      \addlegendentry{ {$ \delta_{\cos} $} }; %
      \addplot+[BLUE,only marks,mark=x,line width=.5pt] coordinates{
          (0.001000,3502054.859843)(0.001778,3296268.289882)(0.003162,2988373.118535)(0.005623,2589542.528168)(0.010000,2155856.209041)(0.017783,1748534.988995)(0.031623,1386279.550004)(0.056234,1055811.662417)(0.100000,741380.335537)(0.177828,439408.727822)(0.316228,166054.546595)(0.562341,7.663342)(1.000000,0.958735)(1.778279,0.375614)(3.162278,0.180504)(5.623413,0.093839)(10.000000,0.050621)(17.782794,0.027829)(31.622777,0.015455)(56.234133,0.008631)(100.000000,0.004835)
      };
      \addlegendentry{ {$ \delta_{\sin} $} }; %
      \addplot[densely dashed,black,line width=1pt] coordinates{(0.5,0.001)(0.5,100000000)};
      \node[draw] at (axis cs:5,1e7) { $\lambda = 10^{-7}$ };
    \end{loglogaxis}
  \end{tikzpicture}
\end{minipage}
\hfill{}
\begin{minipage}[b]{0.24\columnwidth}%
  \begin{tikzpicture}[font=\scriptsize]
    \pgfplotsset{every major grid/.style={style=densely dashed}}
    \begin{loglogaxis}[
      width=1.2\linewidth,
      xmin=1e-3,
      xmax=1e2,
      ymin=1e-3,
      ymax=1e3,
      grid=major,
      scaled ticks=true,
      xlabel={ $N/n$},
      ylabel=\empty,
      legend style = {at={(0.02,0.02)}, anchor=south west, font=\scriptsize}
      ]
      \addplot+[RED,only marks,mark=x,line width=.5pt] coordinates{
          (0.001000,327.851114)(0.001778,233.249263)(0.003162,175.147525)(0.005623,142.540330)(0.010000,121.106171)(0.017783,104.128213)(0.031623,88.616715)(0.056234,72.982670)(0.100000,56.204955)(0.177828,37.682125)(0.316228,18.019100)(0.562341,3.860870)(1.000000,0.975173)(1.778279,0.398568)(3.162278,0.193323)(5.623413,0.100820)(10.000000,0.054459)(17.782794,0.029958)(31.622777,0.016643)(56.234133,0.009295)(100.000000,0.005207)
          };
          \addlegendentry{ {$ \delta_{\cos} $} }; %
      \addplot+[BLUE,only marks,mark=x,line width=.5pt] coordinates{
          (0.001000,350.751067)(0.001778,329.404800)(0.003162,297.458263)(0.005623,256.473291)(0.010000,212.754500)(0.017783,172.513309)(0.031623,137.115941)(0.056234,104.877480)(0.100000,74.203323)(0.177828,44.910856)(0.316228,19.000964)(0.562341,3.667369)(1.000000,0.907480)(1.778279,0.370246)(3.162278,0.179604)(5.623413,0.093689)(10.000000,0.050616)(17.782794,0.027848)(31.622777,0.015471)(56.234133,0.008642)(100.000000,0.004841)
      };
      \addlegendentry{ {$ \delta_{\sin} $} }; %
      \addplot[densely dashed,black,line width=1pt] coordinates{(0.5,0.001)(0.5,100000000)};
      \node[draw] at (axis cs:5,300) { $\lambda = 10^{-3}$ };
    \end{loglogaxis}
  \end{tikzpicture}
\end{minipage}
\hfill{}
\begin{minipage}[b]{0.24\columnwidth}%
  \begin{tikzpicture}[font=\scriptsize]
    \pgfplotsset{every major grid/.style={style=densely dashed}}
    \begin{loglogaxis}[
      width=1.2\linewidth,
      xmin=1e-3,
      xmax=1e2,
      ymin=1e-3,
      ymax=1e8,
      grid=major,
      xlabel={ $N/n$},
      ylabel= \empty,
      legend style = {at={(0.02,0.02)}, anchor=south west, font=\scriptsize}
      ]
      \addplot+[RED,only marks,mark=x,line width=.5pt] coordinates{
          (0.001000,3385710.247339)(0.001778,2495579.205811)(0.003162,1947891.170050)(0.005623,1658632.558176)(0.010000,1488633.815742)(0.017783,1362214.069210)(0.031623,1239477.864217)(0.056234,1094940.838922)(0.100000,907401.992917)(0.177828,655487.202567)(0.316228,321076.039854)(0.562341,10.050943)(1.000000,1.207323)(1.778279,0.464203)(3.162278,0.220979)(5.623413,0.114309)(10.000000,0.061497)(17.782794,0.033758)(31.622777,0.018732)(56.234133,0.010456)(100.000000,0.005855)
      };
      \addlegendentry{ {$ \delta_{\cos} $} }; %
      \addplot+[BLUE,only marks,mark=x,line width=.5pt] coordinates{
          (0.001000,3777216.436690)(0.001778,3690256.965848)(0.003162,3549716.635858)(0.005623,3337582.992221)(0.010000,3041312.700198)(0.017783,2658573.014758)(0.031623,2198542.592862)(0.056234,1683958.614535)(0.100000,1154368.124511)(0.177828,661612.584356)(0.316228,251952.097957)(0.562341,6.618747)(1.000000,0.828273)(1.778279,0.325041)(3.162278,0.156394)(5.623413,0.081367)(10.000000,0.043913)(17.782794,0.024148)(31.622777,0.013413)(56.234133,0.007491)(100.000000,0.004196)
      };
      \addlegendentry{ {$ \delta_{\sin} $} }; %
      \addplot[densely dashed,black,line width=1pt] coordinates{(0.5,0.001)(0.5,100000000)};
      \node[draw] at (axis cs:5,1e7) { $\lambda = 10^{-7}$ };
    \end{loglogaxis}
  \end{tikzpicture}
\end{minipage}
\hfill{}
\begin{minipage}[b]{0.24\columnwidth}%
  \begin{tikzpicture}[font=\scriptsize]
    \pgfplotsset{every major grid/.style={style=densely dashed}}
    \begin{loglogaxis}[
      width=1.2\linewidth,
      xmin=1e-3,
      xmax=1e2,
      ymin=1e-3,
      ymax=1e3,
      grid=major,
      scaled ticks=true,
      xlabel={ $N/n$},
      ylabel=\empty,
      legend style = {at={(0.02,0.02)}, anchor=south west, font=\scriptsize}
      ]
      \addplot+[RED,only marks,mark=x,line width=.5pt] coordinates{
          (0.001000,336.418906)(0.001778,246.657959)(0.003162,191.948172)(0.005623,163.255463)(0.010000,146.470610)(0.017783,134.030530)(0.031623,121.970462)(0.056234,107.780563)(0.100000,89.437569)(0.177828,65.024878)(0.316228,33.545280)(0.562341,5.645283)(1.000000,1.168907)(1.778279,0.461038)(3.162278,0.220726)(5.623413,0.114410)(10.000000,0.061606)(17.782794,0.033832)(31.622777,0.018778)(56.234133,0.010483)(100.000000,0.005871)
          };
          \addlegendentry{ {$ \delta_{\cos} $} }; %
      \addplot+[BLUE,only marks,mark=x,line width=.5pt] coordinates{
          (0.001000,375.788719)(0.001778,367.123709)(0.003162,353.124699)(0.005623,331.954016)(0.010000,302.326370)(0.017783,264.024741)(0.031623,218.014452)(0.056234,166.659229)(0.100000,114.111693)(0.177828,65.757348)(0.316228,26.543390)(0.562341,3.822990)(1.000000,0.800661)(1.778279,0.321210)(3.162278,0.155297)(5.623413,0.080935)(10.000000,0.043712)(17.782794,0.024046)(31.622777,0.013359)(56.234133,0.007462)(100.000000,0.004180)
      };
      \addlegendentry{ {$ \delta_{\sin} $} }; %
      \addplot[densely dashed,black,line width=1pt] coordinates{(0.5,0.001)(0.5,100000000)};
      \node[draw] at (axis cs:5,300) { $\lambda = 10^{-3}$ };
    \end{loglogaxis}
  \end{tikzpicture}
\end{minipage}
\end{center}
\caption{ Behavior of $(\delta_{\cos}, \delta_{\sin})$ in \eqref{eq:fixed-point-delta}, on Fashion-MNIST (\textbf{left two}) and Kannada-MNIST (\textbf{right two}) data (class $8$ versus $9$), for $p=784$, $n=1000$, $\lambda = 10^{-7} $ and $ 10^{-3}$. The {\bf black} dashed line represents the interpolation threshold $2 N =n$. }
\label{fig:more-data-trend-delta-N/n}
\end{figure}

In Figure~\ref{fig:other-MNIST-double-descent}, we report the empirical and theoretical test errors as a function of the ratio $N/n$, on a training test of size $n=500$ ($250$ images from class $8$ and $250$ images from class $9$), by varying feature dimension $N$. An exceedingly small regularization $\lambda = 10^{-7}$ is applied to mimic the ``ridgeless'' limiting behavior as $\lambda \to 0$.
On both data sets, the corresponding double descent curve is observed where the test errors goes down and up, with a singular peak around $2N=n$, and then goes down monotonically as $N$ continues to increase when $2N > n$. 

%
%
%
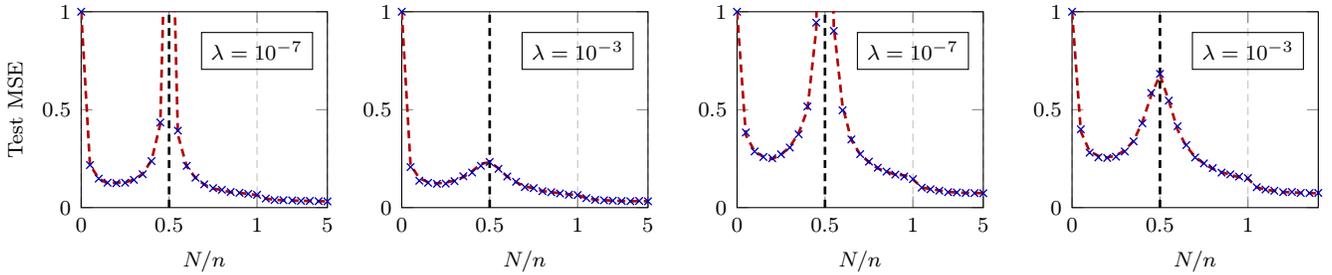
\begin{figure}[t] 
\vskip 0.1in
\begin{center}
\begin{minipage}[b]{0.24\columnwidth}%
  \begin{tikzpicture}[font=\scriptsize]
    \pgfplotsset{every major grid/.style={style=densely dashed}}
    \begin{axis}[
      width=1.15\linewidth,
      xmin=0,
      xmax=5,
      ymin=0,
      ymax=1,
      symbolic x coords={0,0.05,0.10,0.15,0.20,0.25,0.30,0.35,0.40,0.45,0.5,0.55,0.60,0.65,0.70,0.75,0.80,0.85,0.90,0.95,1,1.50,2.00,2.50,3.00,3.50,4.00,4.50,5},
      xtick={0,0.5,1,5},
      ytick={0,0.5,1},
      grid=major,
      ymajorgrids=false,
      scaled ticks=true,
      xlabel={ $N/n$ },
      ylabel={ Test MSE },
      legend style = {at={(0.98,0.98)}, anchor=north east, font=\scriptsize}
      ]
      \addplot[RED,densely dashed,line width=1pt] coordinates{
      (0,1.000000)(0.05,0.217603)(0.10,0.149248)(0.15,0.122806)(0.20,0.127601)(0.25,0.128675)(0.30,0.142880)(0.35,0.172554)(0.40,0.241256)(0.45,0.427845)(0.5,2.199542)(0.55,0.371986)(0.60,0.209942)(0.65,0.152701)(0.70,0.117498)(0.75,0.097239)(0.80,0.089800)(0.85,0.078308)(0.90,0.075203)(0.95,0.070072)(1,0.063157)(1.50,0.046528)(2.00,0.039214)(2.50,0.037958)(3.00,0.036706)(3.50,0.034165)(4.00,0.034731)(4.50,0.032753)(5,0.031741)
      };
      \addplot+[only marks,mark=x,BLUE,line width=0.5pt] coordinates{
      (0,1.000000)(0.05,0.217281)(0.10,0.148038)(0.15,0.127103)(0.20,0.124381)(0.25,0.127174)(0.30,0.141819)(0.35,0.170461)(0.40,0.237573)(0.45,0.434660)(0.5,2.164024)(0.55,0.392980)(0.60,0.213172)(0.65,0.152657)(0.70,0.118650)(0.75,0.099378)(0.80,0.090656)(0.85,0.079823)(0.90,0.075361)(0.95,0.070222)(1,0.065348)(1.50,0.046228)(2.00,0.038707)(2.50,0.038056)(3.00,0.036935)(3.50,0.034370)(4.00,0.034357)(4.50,0.032878)(5,0.031662)
      };
      \addplot[densely dashed,black,line width=1pt] coordinates{(0.5,0)(0.5,1)};
      \node[draw] at (axis cs:1,.8) { $\lambda = 10^{-7}$ };
    \end{axis}
    \end{tikzpicture}
\end{minipage}
\hfill{}
\begin{minipage}[b]{0.24\columnwidth}%
  \begin{tikzpicture}[font=\scriptsize]
    \pgfplotsset{every major grid/.style={style=densely dashed}}
    \begin{axis}[
      width=1.15\linewidth,
      xmin=0,
      xmax=5,
      ymin=0,
      ymax=1,
      symbolic x coords={0,0.05,0.10,0.15,0.20,0.25,0.30,0.35,0.40,0.45,0.5,0.55,0.60,0.65,0.70,0.75,0.80,0.85,0.90,0.95,1,1.50,2.00,2.50,3.00,3.50,4.00,4.50,5},
      xtick={0,0.5,1,5},
      ytick={0,0.5,1},
      grid=major,
      ymajorgrids=false,
      scaled ticks=true,
      xlabel={ $N/n$ },
      ylabel= \empty,
      legend style = {at={(0.98,0.98)}, anchor=north east, font=\scriptsize}
      ]
      \addplot[RED,densely dashed,line width=1pt] coordinates{
      (0,1)(0.05,0.218039)(0.10,0.150056)(0.15,0.129378)(0.20,0.124239)(0.25,0.126771)(0.30,0.139167)(0.35,0.155512)(0.40,0.185706)(0.45,0.223436)(0.5,0.231444)(0.55,0.201166)(0.60,0.157755)(0.65,0.129970)(0.70,0.108630)(0.75,0.098728)(0.80,0.084827)(0.85,0.078923)(0.90,0.073592)(0.95,0.067441)(1,0.064250)(1.50,0.048409)(2.00,0.040271)(2.50,0.037945)(3.00,0.035327)(3.50,0.033754)(4.00,0.032035)(4.50,0.033959)(5,0.032263)
      };
      \addplot+[only marks,mark=x,BLUE,line width=0.5pt] coordinates{
      (0,1.000000)(0.05,0.207101)(0.10,0.137250)(0.15,0.126631)(0.20,0.121395)(0.25,0.122707)(0.30,0.135157)(0.35,0.160125)(0.40,0.179191)(0.45,0.213404)(0.5,0.232725)(0.55,0.197972)(0.60,0.158832)(0.65,0.131745)(0.70,0.104732)(0.75,0.099360)(0.80,0.084006)(0.85,0.078470)(0.90,0.072402)(0.95,0.065702)(1,0.064221)(1.50,0.048472)(2.00,0.039663)(2.50,0.037411)(3.00,0.035543)(3.50,0.033885)(4.00,0.032455)(4.50,0.034319)(5,0.032286)
      };
      \addplot[densely dashed,black,line width=1pt] coordinates{(0.5,0)(0.5,1)};
      \node[draw] at (axis cs:1,.8) { $\lambda = 10^{-3}$ };
    \end{axis}
    \end{tikzpicture}
\end{minipage}
\hfill{}
\begin{minipage}[b]{0.24\columnwidth}%
  \begin{tikzpicture}[font=\scriptsize]
    \pgfplotsset{every major grid/.style={style=densely dashed}}
    \begin{axis}[
      width=1.15\linewidth,
      xmin=0,
      xmax=5,
      ymin=0,
      ymax=1,
      symbolic x coords={0,0.05,0.10,0.15,0.20,0.25,0.30,0.35,0.40,0.45,0.5,0.55,0.60,0.65,0.70,0.75,0.80,0.85,0.90,0.95,1,1.50,2.00,2.50,3.00,3.50,4.00,4.50,5},
      xtick={0,0.5,1,5},
      ytick={0,0.5,1},
      grid=major,
      ymajorgrids=false,
      scaled ticks=true,
      xlabel={ $N/n$ },
      ylabel= \empty,
      legend style = {at={(0.98,0.98)}, anchor=north east, font=\scriptsize}
      ]
      \addplot[RED,densely dashed,line width=1pt] coordinates{
      (0,1.000000)(0.05,0.370481)(0.10,0.286793)(0.15,0.259282)(0.20,0.248530)(0.25,0.269872)(0.30,0.310367)(0.35,0.379618)(0.40,0.498560)(0.45,0.905816)(0.5,3.393841)(0.55,0.925892)(0.60,0.517476)(0.65,0.345039)(0.70,0.271783)(0.75,0.231674)(0.80,0.200088)(0.85,0.183833)(0.90,0.168277)(0.95,0.164896)(1,0.144632)(1.50,0.102954)(2.00,0.094597)(2.50,0.087825)(3.00,0.080429)(3.50,0.077097)(4.00,0.074911)(4.50,0.074270)(5,0.074209)
      };
      \addplot+[only marks,mark=x,BLUE,line width=0.5pt] coordinates{
      (0,1.000000)(0.05,0.383564)(0.10,0.286862)(0.15,0.258553)(0.20,0.252386)(0.25,0.272132)(0.30,0.306766)(0.35,0.374419)(0.40,0.516400)(0.45,0.944312)(0.5,3.533186)(0.55,0.902479)(0.60,0.496921)(0.65,0.348203)(0.70,0.273115)(0.75,0.235013)(0.80,0.203632)(0.85,0.183893)(0.90,0.168697)(0.95,0.158505)(1,0.145600)(1.50,0.101900)(2.00,0.093688)(2.50,0.086416)(3.00,0.078804)(3.50,0.077360)(4.00,0.074840)(4.50,0.073999)(5,0.073354)
      };
      \addplot[densely dashed,black,line width=1pt] coordinates{(0.5,0)(0.5,1)};
      \node[draw] at (axis cs:1,.8) { $\lambda = 10^{-7}$ };
    \end{axis}
    \end{tikzpicture}
\end{minipage}
\hfill{}
\begin{minipage}[b]{0.24\columnwidth}%
  \begin{tikzpicture}[font=\scriptsize]
    \pgfplotsset{every major grid/.style={style=densely dashed}}
    \begin{axis}[
      width=1.15\linewidth,
      xmin=0,
      xmax=5,
      ymin=0,
      ymax=1,
      symbolic x coords={0,0.05,0.10,0.15,0.20,0.25,0.30,0.35,0.40,0.45,0.5,0.55,0.60,0.65,0.70,0.75,0.80,0.85,0.90,0.95,1,1.50,2.00,2.50,3.00,3.50,4.00,4.50,5},
      ytick={0,0.5,1},
      grid=major,
      ymajorgrids=false,
      scaled ticks=true,
      xlabel={ $N/n$ },
      ylabel= \empty,
      legend style = {at={(0.98,0.98)}, anchor=north east, font=\scriptsize}
      ]
      \addplot[RED,densely dashed,line width=1pt] coordinates{
      (0,1)(0.05,0.387754)(0.10,0.287474)(0.15,0.264232)(0.20,0.255390)(0.25,0.265762)(0.30,0.292626)(0.35,0.340849)(0.40,0.437341)(0.45,0.572883)(0.5,0.676896)(0.55,0.546679)(0.60,0.397763)(0.65,0.312984)(0.70,0.259498)(0.75,0.224182)(0.80,0.197764)(0.85,0.181164)(0.90,0.167348)(0.95,0.155417)(1,0.147565)(1.50,0.104746)(2.00,0.090986)(2.50,0.085803)(3.00,0.079009)(3.50,0.078853)(4.00,0.074963)(4.50,0.074004)(5,0.073569)
      };
      \addplot+[only marks,mark=x,BLUE,line width=0.5pt] coordinates{
      (0,1.000000)(0.05,0.399915)(0.10,0.280613)(0.15,0.256773)(0.20,0.253655)(0.25,0.260143)(0.30,0.286681)(0.35,0.338333)(0.40,0.430583)(0.45,0.585703)(0.5,0.683006)(0.55,0.546213)(0.60,0.415693)(0.65,0.319283)(0.70,0.256089)(0.75,0.225740)(0.80,0.201905)(0.85,0.178742)(0.90,0.169685)(0.95,0.158518)(1,0.150866)(1.50,0.104220)(2.00,0.092545)(2.50,0.086008)(3.00,0.078772)(3.50,0.079838)(4.00,0.074935)(4.50,0.074457)(5,0.074902)
      };
      \addplot[densely dashed,black,line width=1pt] coordinates{(0.5,0)(0.5,1)};
      \node[draw] at (axis cs:1,.8) { $\lambda = 10^{-3}$ };
    \end{axis}
    \end{tikzpicture}
\end{minipage}
\end{center}
\caption{ Empirical (\textbf{\BLUE blue} crosses) and theoretical (\textbf{\RED red} dashed lines) test error of RFF regression, as a function of the ratio $N/n$, on Fashion-MNIST (\textbf{left two}) and Kannada-MNIST (\textbf{right two}) data (class $8$ versus $9$), for $p=784$, $n=500$, $\lambda = 10^{-7} $ and $ 10^{-3}$. The {\bf black} dashed line represents the interpolation threshold $2 N =n$. }
\label{fig:other-MNIST-double-descent}
\end{figure}

\section{Conclusion}
\label{sec:conclusion}

We have established a precise description of the resolvent of RFF Gram matrices, and provided asymptotic training and test performance guarantees for RFF ridge regression, in the limit $n,p,N \rightarrow \infty$ at the same pace.
We have also discussed the under- and over-parameterized regimes, where the resolvent behaves dramatically differently. 
These observations involve only mild regularity assumptions on the data, yielding phase transition behavior and corresponding double descent test error curves for RFF regression that closely match experiments on real-world data. 
From a technical perspective, our analysis extends to arbitrary combinations of (Lipschitz) nonlinearities, such as the more involved homogeneous kernel maps \cite{vedaldi2012efficient}. This opens the door for future studies of more elaborate random feature structures and models.
Extended to a (technically more involved) multi-layer setting in the more realistic large $n,p,N$ regime, as in \cite{fan2020spectra}, our analysis may shed new light on the theoretical understanding of modern deep neural nets, beyond the large-$N$ alone neural tangent kernel limit.

\paragraph{Acknowledgments.} 
We would like to acknowledge the UC Berkeley CLTC, ARO, IARPA, NSF, and ONR for providing partial support of this work.
Our conclusions do not necessarily reflect the position or the policy of our sponsors, and no official endorsement should be~inferred.
Couillet's work is partially supported by MIAI at University Grenoble-Alpes (ANR-19-P3IA-0003).

\bibliography{liao}
\bibliographystyle{plain}


\appendix

\section{Proof of Theorem~\ref{theo:asy-behavior-E[Q]}}
\label{sec:proof-of-theo-E[Q]}

Our objective is to prove, under Assumption~\ref{ass:high-dim}, the asymptotic equivalence between the expectation (with respect to $\W$, omitted from now on) $\EE[\Q]$ and
\[
  \bar \Q \equiv \left( \frac{N}n \left(\frac{\K_{\cos} }{1+\delta_{\cos}} +  \frac{\K_{\sin} }{1+\delta_{\sin}} \right) + \lambda \I_n\right)^{-1}
\]
for $\K_{\cos} \equiv \K_{\cos}(\X,\X), \K_{\sin} \equiv \K_{\sin}(\X,\X) \in \RR^{n \times n}$ defined in \eqref{eq:def-K}, with $(\delta_{\cos}, \delta_{\cos})$ the unique positive solution to
\[
    \delta_{\cos} = \frac1n \tr (\K_{\cos} \bar \Q), \quad \delta_{\sin} = \frac1n \tr (\K_{\sin} \bar \Q).
\]
The existence and uniqueness of the above fixed-point equation is standard in random matrix literature and can be reached for instance with the standard interference function framework \cite{yates1995framework}.

The asymptotic equivalence should be announced in the sense that $\| \EE[\Q] - \bar \Q \| \to 0$ as $n,p,N \to \infty$ at the same pace. We shall proceed by introducing an intermediary resolvent $\tilde \Q$ (see definition in \eqref{eq:def-hat-bar-Q}) and show subsequently that
\[
  \| \EE[\Q] - \tilde \Q \| \to 0, \quad \| \tilde \Q - \bar \Q \| \to 0.
\]

In the sequel, we use $o(1)$ and $ o_{\| \cdot \|}(1)$ for scalars or matrices of (almost surely if being random) vanishing absolute values or operator norms as $n,p \to \infty$.

\medskip

We start by introducing the following lemma.
\begin{Lemma}[Expectation of $\sigma_1(\x_i^\T \w) \sigma_2(\w^\T \x_j)$]\label{lem:expectation}
For $\w \sim \NN(\zo, \I_p)$ and $\x_i, \x_j \in \RR^p$ we have (per Definition in \eqref{eq:def-K})
\begin{align*}
    \EE_\w[\cos(\x_i^\T \w) \cos(\w^\T \x_j)] &= e^{-\frac12 (\| \x_i \|^2 + \| \x_j \|^2) } \cosh(\x_i^\T \x_j) \equiv [\K_{\cos}(\X,\X)]_{ij}  \equiv [\K_{\cos}]_{ij} \\
    \EE_\w[\sin(\x_i^\T \w) \sin(\w^\T \x_j)] &= e^{-\frac12 (\| \x_i \|^2 + \| \x_j \|^2) } \sinh(\x_i^\T \x_j)  \equiv [\K_{\sin} (\X,\X)]_{ij}  \equiv [\K_{\sin}]_{ij} \\
    \EE_\w[\cos(\x_i^\T \w) \sin(\w^\T \x_j)] &= 0   .
\end{align*}
\end{Lemma}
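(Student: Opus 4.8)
The plan is to reduce each of the three bivariate Gaussian integrals to the elementary identities
\[
  \EE_\w[\cos(\uu^\T \w)] = e^{-\|\uu\|^2/2}, \qquad \EE_\w[\sin(\uu^\T \w)] = 0, \qquad \uu \in \RR^p,
\]
which hold because, for $\w \sim \NN(\zo, \I_p)$, the scalar $\uu^\T \w$ is a centered Gaussian of variance $\|\uu\|^2$, so its characteristic function evaluated at $1$ equals $\EE_\w[e^{\imath \uu^\T\w}] = e^{-\|\uu\|^2/2}$; taking real and imaginary parts of this (real) quantity yields both identities at once (equivalently, the sine identity also follows from $\w \overset{d}{=} -\w$ and oddness of $\sin$).

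First I would set $a \equiv \x_i^\T \w$ and $b \equiv \x_j^\T \w$ and apply the product-to-sum formulas $\cos a\cos b = \tfrac12[\cos(a-b) + \cos(a+b)]$, $\sin a\sin b = \tfrac12[\cos(a-b) - \cos(a+b)]$, and $\cos a\sin b = \tfrac12[\sin(a+b) - \sin(a-b)]$. Since $a - b = (\x_i - \x_j)^\T\w$ and $a + b = (\x_i + \x_j)^\T\w$, the displayed identities give $\EE_\w[\cos(a - b)] = e^{-\|\x_i - \x_j\|^2/2}$, $\EE_\w[\cos(a + b)] = e^{-\|\x_i + \x_j\|^2/2}$, and $\EE_\w[\sin(a\pm b)] = 0$. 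The third claim of the lemma, $\EE_\w[\cos(\x_i^\T\w)\sin(\w^\T\x_j)] = 0$, is then immediate.

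For the first two, I would expand $\|\x_i \mp \x_j\|^2 = \|\x_i\|^2 + \|\x_j\|^2 \mp 2\x_i^\T\x_j$, factor out $e^{-\frac12(\|\x_i\|^2 + \|\x_j\|^2)}$, and recognize $\tfrac12(e^{\x_i^\T\x_j} + e^{-\x_i^\T\x_j}) = \cosh(\x_i^\T\x_j)$ and $\tfrac12(e^{\x_i^\T\x_j} - e^{-\x_i^\T\x_j}) = \sinh(\x_i^\T\x_j)$. This produces exactly $[\K_{\cos}(\X,\X)]_{ij}$ and $[\K_{\sin}(\X,\X)]_{ij}$ as defined in \eqref{eq:def-K}, completing the proof.

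There is no genuine obstacle here: the argument is a one-line trigonometric reduction followed by bookkeeping. The only points requiring (minor) care are the justification of the scalar Gaussian characteristic-function value and the interchange of expectation with the finite trigonometric sums; both are routine, the latter since $\cos$ and $\sin$ are bounded.
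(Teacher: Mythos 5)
Your proof is correct. The paper's own ``proof'' is just a pointer to the integration tricks in Williams (1997) and Louart--Couillet (2018) rather than an explicit calculation, and what those references do is precisely what you do: reduce to the scalar Gaussian characteristic function $\EE_\w[e^{\imath \uu^\T\w}] = e^{-\|\uu\|^2/2}$ and finish by algebra. Your version is self-contained, the product-to-sum step neatly linearizes all three expectations at once, and it also makes transparent the point the paper flags in its remark --- that the vanishing of the cross term $\EE_\w[\cos(\x_i^\T\w)\sin(\w^\T\x_j)]$ is special to the $(\cos,\sin)$ pair, since it reduces to $\EE_\w[\sin((\x_i\pm\x_j)^\T\w)]=0$ by the symmetry $\w\overset{d}{=}-\w$, an argument that has no analogue for generic Lipschitz $(\sigma_1,\sigma_2)$.
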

\begin{proof}[Proof of Lemma~\ref{lem:expectation}]
The proof follows the integration tricks in \cite{williams1997computing,louart2018random}. Note in particular that the third equality holds in the case of $(\cos,\sin)$ nonlinearity but in general not true for arbitrary Lipschitz $(\sigma_1, \sigma_2)$.
\end{proof}

Let us focus on the resolvent $\Q \equiv \left( \frac1n \bSigma_\X^\T \bSigma_\X + \lambda \I_n \right)^{-1}$ of $\frac1n \bSigma_\X^\T \bSigma_\X \in \RR^{n \times n}$, for random Fourier feature matrix $\bSigma_\X \equiv \begin{bmatrix} \cos(\W \X) \\ \sin(\W\X) \end{bmatrix}$ that can be rewritten as
\begin{equation}\label{eq:bSigma-vec}
    \bSigma_\X^\T = [\cos(\X^\T \w_1), \ldots, \cos(\X^\T \w_N), \sin(\X^\T \w_1), \ldots, \sin(\X^\T \w_N)]
\end{equation}
for $\w_i$ the $i$-th row of $\W \in \RR^{N \times p}$ with $\w_i \sim \mathcal N (\zo, \I_p), i = 1, \ldots, N$, that is at the core of our analysis. Note from \eqref{eq:bSigma-vec} that we have 
\[
    \bSigma_\X^\T \bSigma_\X = \sum_{i=1}^N \left( \cos(\X^\T \w_i) \cos(\w_i^\T \X) + \sin(\X^\T \w_i) \sin(\w_i^\T \X) \right) = \sum_{i=1}^N \U_i \U_i^\T
\]
with $\U_i = \begin{bmatrix} \cos(\X^\T \w_i) & \sin(\X^\T \w_i) \end{bmatrix} \in \RR^{n \times 2}$.

Letting
\begin{equation}\label{eq:def-hat-bar-Q}
  \tilde \Q \equiv \left( \frac{N}n \frac{ \K_{\cos} }{ 1 + \alpha_{\cos} } + \frac{N}n \frac{ \K_{\sin} }{ 1 + \alpha_{\sin} } + \lambda \I_n \right)^{-1}
\end{equation}
with
\begin{equation}\label{eq:def-alpha}
    \alpha_{\cos} = \frac1n \tr (\K_{\cos} \EE[\Q]), \quad \alpha_{\sin} = \frac1n \tr (\K_{\sin} \EE[\Q])
\end{equation}
we have, with the resolvent identity ($\A^{-1} - \B^{-1} = \A^{-1} (\B - \A) \B^{-1}$ for invertible $\A,\B$) that 
\begin{align*}
&\EE[\Q] - \tilde \Q = \EE \left[ \Q \left( \frac{N}n \frac{ \K_{\cos} }{ 1 + \alpha_{\cos} } + \frac{N}n \frac{ \K_{\sin} }{ 1 + \alpha_{\sin} } - \frac1n \bSigma_\X^\T \bSigma_\X \right) \right] \tilde \Q\\
&= \EE[\Q] \frac{N}n \left( \frac{ \K_{\cos} }{ 1 + \alpha_{\cos} } + \frac{ \K_{\sin} }{ 1 + \alpha_{\sin} } \right) \tilde \Q - \frac{N}n \frac1N \sum_{i=1}^N \EE[\Q \U_i \U_i^\T] \tilde \Q \\ 
&= \EE[\Q] \frac{N}n \left( \frac{ \K_{\cos} }{ 1 + \alpha_{\cos} } + \frac{ \K_{\sin} }{ 1 + \alpha_{\sin} } \right) \tilde \Q - \frac{N}n \frac1N \sum_{i=1}^N \EE[\Q_{-i} \U_i (\I_2 + \frac1n \U_i^\T \Q_{-i} \U_i)^{-1} \U_i^\T] \tilde \Q ,
\end{align*}
for $\Q_{-i} \equiv \left( \frac1n \bSigma_\X^\T \bSigma_\X - \frac1n \U_i \U_i + \lambda \I_n \right)^{-1}$ that is \textbf{independent} of $\U_i$ (and thus $\w_i$), where we applied the following Woodbury identity.
\begin{Lemma}[Woodbury]\label{lem:woodbury}
For $\A, \A + \U \U^\T \in \RR^{p \times p}$ both invertible and $\U \in \RR^{p \times n}$, we have
\[
    (\A + \U \U^\T)^{-1} = \A^{-1} -  \A^{-1} \U (\I_n + \U^\T \A^{-1} \U)^{-1} \U^\T \A^{-1}
\]
so that in particular $(\A + \U \U^\T)^{-1} \U = \A^{-1} \U (\I_n + \U^\T \A^{-1} \U)^{-1}$.
\end{Lemma}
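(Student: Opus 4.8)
The plan is to prove the identity by direct verification, rather than deriving it from a block-matrix inversion formula. First I would record that the inner factor $\I_n + \U^\T \A^{-1} \U$ appearing on the right-hand side is itself invertible: by Sylvester's determinant identity, $\det(\A + \U \U^\T) = \det(\A)\,\det(\I_n + \U^\T \A^{-1} \U)$, and since both $\A$ and $\A + \U \U^\T$ are invertible by hypothesis, $\det(\I_n + \U^\T \A^{-1} \U) \neq 0$. Hence the asserted right-hand side is well defined.

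Next, set $\B \equiv \A^{-1} - \A^{-1} \U (\I_n + \U^\T \A^{-1} \U)^{-1} \U^\T \A^{-1}$ and compute $(\A + \U \U^\T)\B$ directly. Expanding yields
\[
  (\A + \U \U^\T)\B = \I_p + \U \U^\T \A^{-1} - \U (\I_n + \U^\T \A^{-1} \U)^{-1} \U^\T \A^{-1} - \U \U^\T \A^{-1} \U (\I_n + \U^\T \A^{-1} \U)^{-1} \U^\T \A^{-1}.
\]
Factoring $\U$ on the left and $\U^\T \A^{-1}$ on the right of the last three terms, the bracketed middle factor is $\I_n - (\I_n + \U^\T \A^{-1} \U)^{-1} - \U^\T \A^{-1} \U (\I_n + \U^\T \A^{-1} \U)^{-1} = \I_n - (\I_n + \U^\T \A^{-1} \U)(\I_n + \U^\T \A^{-1} \U)^{-1} = \mathbf{0}$. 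Thus $(\A + \U \U^\T)\B = \I_p$, and since $\A + \U \U^\T$ is invertible this forces $\B = (\A + \U \U^\T)^{-1}$, which is the first claim. (If one prefers a two-sided check avoiding even the Sylvester remark, the symmetric computation gives $\B(\A + \U \U^\T) = \I_p$ as well.)

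For the ``in particular'' consequence I would right-multiply the established identity by $\U$ and simplify, using $\U^\T \A^{-1} \U = (\I_n + \U^\T \A^{-1} \U) - \I_n$:
\[
  (\A + \U \U^\T)^{-1} \U = \A^{-1} \U \bigl( \I_n - (\I_n + \U^\T \A^{-1} \U)^{-1} \U^\T \A^{-1} \U \bigr) = \A^{-1} \U (\I_n + \U^\T \A^{-1} \U)^{-1} \bigl( (\I_n + \U^\T \A^{-1} \U) - \U^\T \A^{-1} \U \bigr) = \A^{-1} \U (\I_n + \U^\T \A^{-1} \U)^{-1}.
\]

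As for difficulty, there is essentially no obstacle: this is the classical Sherman--Morrison--Woodbury identity and the verification is purely algebraic. The only point deserving a moment's care is the invertibility of the smaller matrix $\I_n + \U^\T \A^{-1} \U$, handled as above; everything else is bookkeeping with the resolvent-type cancellation $\I_n - (\I_n + M)^{-1}(\I_n + M) = \mathbf{0}$.
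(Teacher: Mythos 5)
Your proof is correct, and the verification is carried out cleanly: the cancellation $\I_n - (\I_n + M)^{-1} - M(\I_n + M)^{-1} = \I_n - (\I_n + M)(\I_n + M)^{-1} = \mathbf{0}$ with $M = \U^\T\A^{-1}\U$ is exactly right, the invertibility of the inner matrix via Sylvester's determinant identity is a good touch, and the derivation of the push-through identity $(\A + \U\U^\T)^{-1}\U = \A^{-1}\U(\I_n + \U^\T\A^{-1}\U)^{-1}$ follows correctly. The paper itself states the Woodbury identity as a known classical fact and offers no proof, so there is no paper argument to compare yours against; what you have written is the standard direct-verification proof, and it is complete.
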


Consider now the two-by-two matrix
\[
    \I_2 + \frac1n \U_i^\T \Q_{-i} \U_i = \begin{bmatrix} 1 + \frac1n \cos(\w_i^\T \X) \Q_{-i} \cos(\X^\T \w_i) & \frac1n \cos(\w_i^\T \X) \Q_{-i} \sin(\X^\T \w_i) \\ \frac1n \sin(\w_i^\T \X) \Q_{-i} \cos(\X^\T \w_i) & 1 + \frac1n \sin(\w_i^\T \X) \Q_{-i} \sin(\X^\T \w_i) \end{bmatrix}
\]
which, according to the following lemma, is expected to be close to $\begin{bmatrix} 1 + \alpha_{\cos} & 0 \\ 0 & 1 + \alpha_{\sin} \end{bmatrix}$ as defined in \eqref{eq:def-alpha}.

\begin{Lemma}[Concentration of quadratic forms]\label{lem:trace-lemma}
Under Assumption~\ref{ass:high-dim}, for $\sigma_1(\cdot), \sigma_2(\cdot)$ two real $1$-Lipschitz functions, $\w \sim \NN(\zo, \I_p)$ and $\A \in \RR^{n \times n}$ independent of $\w$ with $\| \A \| \le 1$, then
\[
    \mathbb P \left( \left| \frac1n \sigma_a(\w^\T \X) \A \sigma_b(\X^\T \w) - \frac1n \tr ( \A \EE_\w[ \sigma_b(\X^\T \w) \sigma_a(\w^\T \X) ] ) \right| > t \right) \le C e^{-c n \min (t, t^2)}
\]
for $a,b \in \{ 1,2 \}$ and some universal constants $C,c > 0$. 
\end{Lemma}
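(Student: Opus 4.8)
The plan is to exploit the fact that $\w\sim\NN(\zo,\I_p)$ is a standard Gaussian vector, hence satisfies the Gaussian concentration inequality for Lipschitz functions (the canonical example of Assumption~\ref{ass:data-concent} / \cite{ledoux2005concentration}), and to reduce the bilinear form $\frac1n\sigma_a(\w^\T\X)\A\,\sigma_b(\X^\T\w)$ to a Lipschitz function of $\w$. First I would fix $\A$ with $\|\A\|\le1$ and the two $1$-Lipschitz functions $\sigma_a,\sigma_b$, and define $g:\RR^p\to\RR$ by $g(\w)=\frac1n\sigma_a(\w^\T\X)\,\A\,\sigma_b(\X^\T\w)$. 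The main technical step is to bound the local Lipschitz constant of $g$. Writing $\sigma_a(\X^\T\w),\sigma_b(\X^\T\w)\in\RR^n$, the gradient of $g$ is
\[
  \nabla g(\w)=\tfrac1n\X\,\big[\diag(\sigma_a'(\X^\T\w))\,\A\,\sigma_b(\X^\T\w)+\diag(\sigma_b'(\X^\T\w))\,\A^\T\sigma_a(\X^\T\w)\big],
\]
so that $\|\nabla g(\w)\|\le \tfrac1n\|\X\|\,\big(\|\A\|\,\|\sigma_b(\X^\T\w)\|+\|\A^\T\|\,\|\sigma_a(\X^\T\w)\|\big)$, using $|\sigma_a'|,|\sigma_b'|\le1$ a.e. Since $\sigma_a,\sigma_b$ are $1$-Lipschitz, $\|\sigma_a(\X^\T\w)\|\le\|\sigma_a(\zo)\|_2+\|\X^\T\w\|\le \sqrt n\,|\sigma_a(0)|+\|\X\|\,\|\w\|$, and likewise for $\sigma_b$; because the RFF nonlinearities are $\cos,\sin$ we even have the cleaner bound $\|\sigma_a(\X^\T\w)\|\le\sqrt n$. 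Thus on the ball $\|\w\|\le R$ one gets $\|\nabla g(\w)\|\le \tfrac{C}{\sqrt n}\|\X\|(1+\tfrac{\|\X\|R}{\sqrt n})$, which under Assumption~\ref{ass:high-dim} ($\limsup\|\X\|<\infty$) is $O(n^{-1/2})$ for $R=O(\sqrt n)$.

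The second step is the standard two-region argument: Gaussian concentration gives $\mathbb P(\|\w\|>2\sqrt p)\le Ce^{-cp}\le Ce^{-c'n}$; on the complementary event $g$ restricted there is $\kappa$-Lipschitz with $\kappa=O(n^{-1/2})$, so a Lipschitz-extension (or a direct application of Gaussian concentration to a smoothly truncated version of $g$, exactly as in \cite[Lemma~1]{louart2018random} or Remark after \eqref{eq:def-concentration}) yields
\[
  \mathbb P\big(|g(\w)-\EE[g(\w)]|>t\big)\le C e^{-c\,t^2/\kappa^2}+Ce^{-c'n}\le C e^{-cn\,t^2}
\]
for $t$ bounded, and the min$(t,t^2)$ in the statement simply merges the small-$t$ (sub-Gaussian) and large-$t$ (sub-exponential, from the $\|\w\|$ tail interacting with the quadratic growth of $\kappa$ in $R$) regimes into a single uniform bound. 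Finally I identify the centering constant: $\EE_\w[g(\w)]=\tfrac1n\tr\big(\A\,\EE_\w[\sigma_b(\X^\T\w)\sigma_a(\w^\T\X)]\big)$ by linearity of trace and expectation, and for $(a,b)\in\{1,2\}$ with $(\sigma_1,\sigma_2)=(\cos,\sin)$ this matrix is $\K_{\cos}$, $\K_{\sin}$, or $\zo$ by Lemma~\ref{lem:expectation}, recovering $\alpha_{\cos},\alpha_{\sin}$ as in \eqref{eq:def-alpha}.

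The main obstacle I expect is handling the unboundedness of $\w$ cleanly: $g$ is globally Lipschitz for $\cos,\sin$ nonlinearities (both bounded, so $\|\sigma(\X^\T\w)\|\le\sqrt n$ uniformly and $\kappa=O(n^{-1/2})$ with no $R$-dependence), which makes the argument immediate in the RFF case, but for the general-Lipschitz statement quoted in the lemma one genuinely needs the truncation at $R\asymp\sqrt n$ and must track that the quadratic term $\|\X\|^2R/n$ contributes the sub-exponential $\min(t,t^2)$ behavior rather than pure sub-Gaussianity; this is a routine but slightly delicate bookkeeping step, and it is essentially the content of \cite[Lemma~1 and its proof]{louart2018random}, which I would cite and adapt rather than reprove in full.
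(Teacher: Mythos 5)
Your argument is correct and follows the same route the paper takes: adapting \cite[Lemma~1]{louart2018random} by observing that its Gaussian-concentration-plus-truncation proof goes through unchanged when the two nonlinearities $\sigma_a,\sigma_b$ in the bilinear form differ. The paper's proof is just that one-sentence citation; you have additionally spelled out the Lipschitz-constant bound on $\nabla g$ and the $\|\w\|\lesssim\sqrt p$ truncation that underlie the cited lemma, which is consistent with (and somewhat more explicit than) what the authors leave implicit.
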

\begin{proof}[Proof of Lemma~\ref{lem:trace-lemma}]
Lemma~\ref{lem:trace-lemma} can be easily extended from \cite[Lemma~1]{louart2018random}, where one observes the proof actually holds when different types of nonlinear Lipschitz functions $\sigma_1(\cdot), \sigma_2(\cdot)$ (and in particular $\cos$ and $\sin$) are~considered.
\end{proof}

For $\W_{-i} \in \RR^{(N-1) \times p}$ the random matrix $\W \in \RR^{N \times p}$ with its $i$-th row $\w_i$ removed, Lemma~\ref{lem:trace-lemma}, together with the Lipschitz nature of the map $\W_{-i} \mapsto \frac1n \sigma_a(\w_i^\T \X) \Q_{-i} \sigma_b(\X^\T \w_i)$ for $\Q_{-i} = (\frac1n \cos(\W_{-i} \X)^\T \cos(\W_{-i} \X) + \frac1n \sin(\W_{-i} \X)^\T \sin(\W_{-i} \X) + \lambda \I_n)^{-1}$, leads to the following concentration result
\begin{equation}\label{eq:concentration-D}
  \mathbb P \left( \left| \frac1n \sigma_a(\w_i^\T \X) \Q_{-i} \sigma_b(\X^\T \w_i) - \frac1n \tr \left( \EE[\Q_{-i}] \EE[ \sigma_b(\X^\T \w_i) \sigma_a(\w_i^\T \X) ] \right) \right| > t \right) \le C' e^{-c'n \max(t^2,t)}
\end{equation}
the proof of which follows the same line of argument of \cite[Lemma~4]{louart2018random} and is omitted here.

As a consequence, we continue to write, with again the resolvent identity, that
\begin{align*}
    & (\I_2 + \frac1n \U_i^\T \Q_{-i} \U_i)^{-1} - \begin{bmatrix} 1+\alpha_{\cos} & 0 \\ 0 & 1+\alpha_{\sin} \end{bmatrix}^{-1} \\ 
    &= \begin{bmatrix} 1 + \frac1n \cos(\w_i^\T \X) \Q_{-i} \cos(\X^\T \w_i) & \frac1n \cos(\w_i^\T \X) \Q_{-i} \sin(\X^\T \w_i) \\ \frac1n \sin(\w_i^\T \X) \Q_{-i} \cos(\X^\T \w_i) & 1 + \frac1n \sin(\w_i^\T \X) \Q_{-i} \sin(\X^\T \w_i) \end{bmatrix}^{-1} - \begin{bmatrix} 1+\alpha_{\cos} & 0 \\ 0 & 1+\alpha_{\sin} \end{bmatrix}^{-1} \\
    &=(\I_2 + \frac1n \U_i^\T \Q_{-i} \U_i)^{-1} \begin{bmatrix} \alpha_{\cos} - \frac1n \cos(\w_i^\T \X) \Q_{-i} \cos(\X^\T \w_i) & -\frac1n \cos(\w_i^\T \X) \Q_{-i} \sin(\X^\T \w_i) \\ -\frac1n \sin(\w_i^\T \X) \Q_{-i} \cos(\X^\T \w_i) & \alpha_{\sin} - \frac1n \sin(\w_i^\T \X) \Q_{-i} \sin(\X^\T \w_i) \end{bmatrix} \\ 
    & \times \begin{bmatrix} \frac1{1+\alpha_{\cos}} & 0 \\ 0 & \frac1{1+\alpha_{\sin}} \end{bmatrix} \equiv (\I_2 + \frac1n \U_i^\T \Q_{-i} \U_i)^{-1} \D_i \begin{bmatrix} \frac1{1+\alpha_{\cos}} & 0 \\ 0 & \frac1{1+\alpha_{\sin}} \end{bmatrix}  ,
\end{align*}
where we note from \eqref{eq:concentration-D} (and $\| \Q_{-i} \| \le \lambda^{-1}$) that the matrix $\EE[\D_i] = o_{\| \cdot \|}(1)$ (in fact of spectral norm of order $O(n^{-\frac12})$). So that
\begin{align*}
  &\EE[\Q] - \tilde \Q = \EE[\Q] \frac{N}n \left( \frac{ \K_{\cos} }{ 1 + \alpha_{\cos} } + \frac{ \K_{\sin} }{ 1 + \alpha_{\sin} } \right) \tilde \Q - \frac{N}n \frac1N \sum_{i=1}^N \EE[\Q_{-i} \U_i (\I_2 + \frac1n \U_i^\T \Q_{-i} \U_i)^{-1} \U_i^\T] \tilde \Q \\
  &= \EE[\Q] \frac{N}n \left( \frac{ \K_{\cos} }{ 1 + \alpha_{\cos} } + \frac{ \K_{\sin} }{ 1 + \alpha_{\sin} } \right) \tilde \Q - \frac{N}n \frac1N \sum_{i=1}^N \EE[\Q_{-i} \U_i \begin{bmatrix} \frac1{1+\alpha_{\cos}} & 0 \\ 0 & \frac1{1+\alpha_{\sin}} \end{bmatrix} \U_i^\T] \tilde \Q \\ 
  &- \frac{N}n \frac1N \sum_{i=1}^N \EE[\Q_{-i} \U_i (\I_2 + \frac1n \U_i^\T \Q_{-i} \U_i)^{-1} \D_i \begin{bmatrix} \frac1{1+\alpha_{\cos}} & 0 \\ 0 & \frac1{1+\alpha_{\sin}} \end{bmatrix} \U_i^\T] \tilde \Q \\ 
  &= (\EE[\Q] - \frac1N \sum_{i=1}^N \EE[\Q_{-i}]) \frac{N}n \left( \frac{ \K_{\cos} }{ 1 + \alpha_{\cos} } + \frac{ \K_{\sin} }{ 1 + \alpha_{\sin} } \right) \tilde \Q -\frac{N}n \frac1N \sum_{i=1}^N \EE[ \Q \U_i \D_i \begin{bmatrix} \frac1{1+\alpha_{\cos}} & 0 \\ 0 & \frac1{1+\alpha_{\sin}} \end{bmatrix} \U_i^\T] \tilde \Q  ,
\end{align*}
where we used $\EE_{\w_i}[\U_i \U_i^\T] = \K_{\cos}+ \K_{\sin}$ by Lemma~\ref{lem:expectation} and then Lemma~\ref{lem:woodbury} in reverse for the last equality. Moreover, since
\[
  \EE[\Q] - \frac1N \sum_{i=1}^N \EE[\Q_{-i}] = \frac1N \sum_{i=1}^N \EE[\Q - \Q_{-i}] = - \frac1n \frac1N \sum_{i=1}^N \EE[\Q \U_i (\I_2 + \frac1n \U_i^\T \Q_{-i} \U_i)^{-1} \U_i^\T \Q]
\]
so that with the fact $\frac1{\sqrt n} \| \Q \bSigma_\X^\T \| \le \| \sqrt{ \Q \frac1n \bSigma_\X^\T \bSigma_\X \Q }\|  \le \lambda^{-\frac12} $ we have for the first term 
$$
\| \EE[\Q] - \frac1N \sum_{i=1}^N \EE[\Q_{-i}] \| = O(n^{-1})  .
$$
It thus remains to treat the second term, which, with the relation $\A \B^\T + \B \A^\T \preceq \A\A^\T + \B\B^\T$ (in the sense of symmetric matrices), and the same line of arguments as above, can be shown to have vanishing spectral norm (of order $O(n^{-\frac12})$) as $n,p,N \to \infty$.

We thus have $\| \EE[\Q] - \tilde \Q \|  = O(n^{-\frac12}) $, which concludes the first part of the proof of Theorem~\ref{theo:asy-behavior-E[Q]}.

We shall show next that $\| \tilde \Q - \bar \Q \| \to 0$ as $n,p,N \to \infty$. First note from previous derivation that $\alpha_\sigma - \frac1n \tr \K_\sigma \tilde \Q  = O(n^{-\frac12})$ for $\sigma = \cos, \sin$. To compare $\tilde \Q$ and $\bar \Q$, it follows again from the resolvent identity that
\[
  \tilde \Q - \bar \Q =  \tilde \Q \left( \frac{N}n \frac{\K_{\cos} (\alpha_{\cos} - \delta_{\cos})}{ (1+\delta_{\cos}) (1+\alpha_{\cos}) } + \frac{N}n \frac{\K_{\sin} (\alpha_{\sin} - \delta_{\sin})}{ (1+\delta_{\sin}) (1+\alpha_{\sin}) } \right) \bar \Q
\]
so that the control of $\| \tilde \Q - \bar \Q \| $ boils down to the control of $\max\{|\alpha_{\cos} - \delta_{\cos}|, |\alpha_{\sin} - \delta_{\sin}|\}$. To this end, it suffices to write
\[
  \alpha_{\cos} - \delta_{\cos} = \frac1n \tr \K_{\cos} ( \EE[\Q] - \bar \Q ) = \frac1n \tr \K_{\cos} ( \tilde \Q - \bar \Q ) + O(n^{-\frac12})
\]
where we used $|\tr (\A \B)| \le \| \A \| \tr(\B)$ for nonnegative definite $\B$, together with the fact that $\frac1n \tr \K_\sigma$ is (uniformly) bounded under Assumption~\ref{ass:high-dim}, for $\sigma = \cos, \sin$.

As a consequence, we have
\[
  |\alpha_{\cos} - \delta_{\cos}| \le |\alpha_{\cos} - \delta_{\cos}| \frac{N}n \frac{ \frac1n \tr (\K_{\cos} \tilde \Q \K_{\cos} \bar \Q) }{ (1+\delta_{\cos}) (1+\alpha_{\cos}) } + o(1).
\]
It thus remains to show 
\[
  \frac{N}n \frac{ \frac1n \tr (\K_{\cos} \tilde \Q \K_{\cos} \bar \Q) }{ (1+\delta_{\cos}) (1+\alpha_{\cos}) } < 1
\]
or alternatively, by the Cauchy–Schwarz inequality, to show
\[
  \frac{N}n \frac{ \frac1n \tr (\K_{\cos} \tilde \Q \K_{\cos} \bar \Q) }{ (1+\delta_{\cos}) (1+\alpha_{\cos}) } \le \sqrt{ \frac{N}n \frac{ \frac1n \tr (\K_{\cos} \bar \Q \K_{\cos} \bar \Q) }{ (1+\delta_{\cos})^2 } \cdot \frac{N}n \frac{ \frac1n \tr (\K_{\cos} \tilde \Q \K_{\cos} \tilde \Q) }{ (1+\alpha_{\cos})^2}   } <  1.
\]
To treat the first right-hand side term (the second can be done similarly), it unfolds from $|\tr (\A \B)| \le \| \A \| \cdot \tr(\B)$ for nonnegative definite $\B$ that
\[
  \frac{N}n \frac{ \frac1n \tr (\K_{\cos} \bar \Q \K_{\cos} \bar \Q) }{ (1+\delta_{\cos})^2 } \le \left\| \frac{N}n \frac{\K_{\cos} \bar \Q}{1+\delta_{\cos}}  \right\|  \frac{\frac1n \tr (\K_{\cos} \bar \Q) }{1+\delta_{\cos}} = \left\| \frac{N}n \frac{\K_{\cos} \bar \Q}{1+\delta_{\cos}}  \right\|  \frac{ \gamma_{\cos} }{1+\delta_{\cos}} \le \frac{ \gamma_{\cos} }{1+\delta_{\cos}} < 1
\]
where we used the fact that $\frac{N}n \frac{\K_{\cos} \bar \Q}{1+\delta_{\cos}} = \I_n - \frac{N}n \frac{\K_{\sin} \bar \Q}{1+\delta_{\sin}} - \lambda \bar \Q$. This concludes the proof of Theorem~\ref{theo:asy-behavior-E[Q]}.
\QED

\section{Proof of Theorem~\ref{theo:asy-training-MSE}}
\label{sec:proof-theo-training-MSE}

To prove Theorem~\ref{theo:asy-training-MSE}, it indeed suffices to prove the following lemma.
\begin{Lemma}[Asymptotic behavior of \texorpdfstring{$\EE[\Q\A\Q]$}{E[QAQ]}]\label{lem:asy-behavior-E[QAQ]}
Under Assumption~\ref{ass:high-dim}, for $\Q$ defined in \eqref{eq:def-Q} and symmetric nonnegative definite $\A \in \RR^{n \times n}$ of bounded spectral norm, we have
\[
    \left\| \EE[\Q \A \Q] - \left( \bar \Q \A \bar \Q + \frac{N}n \begin{bmatrix} \frac{ \frac1n \tr (\bar \Q \A \bar \Q \K_{\cos}) }{ (1+\delta_{\cos})^2 } & \frac{ \frac1n \tr (\bar \Q \A \bar \Q \K_{\sin}) }{ (1+\delta_{\sin})^2 } \end{bmatrix} \bOmega \begin{bmatrix} \bar \Q \K_{\cos} \bar \Q \\ \bar \Q \K_{\sin} \bar \Q \end{bmatrix} \right) \right\| \to 0 
\]
almost surely as $n \to \infty$, with $\bOmega^{-1} \equiv \I_2 - \frac{N}n \begin{bmatrix} \frac{ \frac1n \tr (\bar \Q \K_{\cos} \bar \Q \K_{\cos}) }{ (1+\delta_{\cos})^2 } & \frac{ \frac1n \tr (\bar \Q \K_{\cos} \bar \Q \K_{\sin}) }{ (1+\delta_{\sin})^2 } \\ \frac{ \frac1n \tr (\bar \Q \K_{\cos} \bar \Q \K_{\sin}) }{ (1+\delta_{\cos})^2 } & \frac{ \frac1n \tr (\bar \Q \K_{\sin} \bar \Q \K_{\sin}) }{ (1+\delta_{\sin})^2 } \end{bmatrix}$. In particular, we have
\[
    \left\| \EE \begin{bmatrix} \Q \K_{\cos} \Q \\ \Q \K_{\sin} \Q \end{bmatrix} - \bOmega \begin{bmatrix} \bar \Q \K_{\cos} \bar \Q \\ \bar \Q \K_{\sin} \bar \Q \end{bmatrix} \right\| \to 0.
\]
\end{Lemma}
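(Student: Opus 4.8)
The plan is to derive a deterministic equivalent for the second‑order object $\EE[\Q\A\Q]$ by the same leave‑one‑out/Woodbury machinery used to prove Theorem~\ref{theo:asy-behavior-E[Q]}, and then read off the ``in particular'' statement as the special case $\A=\K_\sigma$. Throughout, $\EE$ is over $\W$; the displayed matrices are deterministic when $\A$ is, and the companion fact that $\Q\A\Q$ concentrates around $\EE[\Q\A\Q]$ follows, exactly as for $\Q$ itself, from the Lipschitz concentration of $\W\mapsto\Q$ together with $\|\A\|$ bounded, so I focus on the expectation. The first step is to peel off one resolvent: with $\bar\Q^{-1}=\frac{N}{n}\big(\frac{\K_{\cos}}{1+\delta_{\cos}}+\frac{\K_{\sin}}{1+\delta_{\sin}}\big)+\lambda\I_n$ and the resolvent identity,
\[
\Q=\bar\Q-\bar\Q\Big(\tfrac1n\bSigma_\X^\T\bSigma_\X-\tfrac{N}{n}\tfrac{\K_{\cos}}{1+\delta_{\cos}}-\tfrac{N}{n}\tfrac{\K_{\sin}}{1+\delta_{\sin}}\Big)\Q ,
\]
so substituting this into the left factor of $\Q\A\Q$ gives $\EE[\Q\A\Q]=\bar\Q\A\,\EE[\Q]-\bar\Q\,\EE\big[\big(\tfrac1n\bSigma_\X^\T\bSigma_\X-\tfrac{N}{n}\sum_\sigma\tfrac{\K_\sigma}{1+\delta_\sigma}\big)\Q\A\Q\big]$, whose first term is $\bar\Q\A\bar\Q+o_{\|\cdot\|}(1)$ by Theorem~\ref{theo:asy-behavior-E[Q]}.

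For the remaining term, expand $\frac1n\bSigma_\X^\T\bSigma_\X=\frac1n\sum_{i=1}^N\U_i\U_i^\T$ with $\U_i=[\cos(\X^\T\w_i)\ \ \sin(\X^\T\w_i)]\in\RR^{n\times 2}$, and for each $i$ apply Woodbury (Lemma~\ref{lem:woodbury}) to replace $\Q$ by the $\w_i$‑independent $\Q_{-i}$ wherever a $\U_i$ is not forced to remain; by Lemma~\ref{lem:trace-lemma} the $2\times2$ matrix $\I_2+\frac1n\U_i^\T\Q_{-i}\U_i$ concentrates around $\diag(1+\delta_{\cos},1+\delta_{\sin})$, and $\EE_{\w_i}[\U_i\U_i^\T]=\K_{\cos}+\K_{\sin}$ by Lemma~\ref{lem:expectation}. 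The expansion splits into three kinds of contributions: (i) a replacement term, where one $\Q$‑factor interacts with $\w_i$ and the rest with $\Q_{-i}$, which after averaging over $\w_i$ reproduces $\bar\Q\,\EE[\Q\A\Q]$ up to $o_{\|\cdot\|}(1)$ and cancels the matching piece on the left; (ii) a resonant term, where both $\Q$‑factors around $\A$ interact with the same $\w_i$, which after concentrating the scalar quadratic form in $\w_i$ onto $\frac1n\tr(\K_\sigma\,\EE[\Q\A\Q])$ equals $\frac{N}{n}\sum_\sigma\frac{1}{(1+\delta_\sigma)^2}\frac1n\tr(\K_\sigma\,\EE[\Q\A\Q])\,\bar\Q\K_\sigma\bar\Q$; and (iii) fluctuation remainders built from $\I_2+\frac1n\U_i^\T\Q_{-i}\U_i$ minus its concentrated value, shown to have vanishing operator norm exactly as in Section~\ref{sec:proof-of-theo-E[Q]}, using $\|\Q\|\le\lambda^{-1}$, $\frac1{\sqrt n}\|\Q\bSigma_\X^\T\|\le\lambda^{-1/2}$, $\A\B^\T+\B\A^\T\preceq\A\A^\T+\B\B^\T$, and summability over the $N$ rank‑two updates. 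This yields
\[
\EE[\Q\A\Q]=\bar\Q\A\bar\Q+\frac{N}{n}\sum_{\sigma\in\{\cos,\sin\}}\frac{\frac1n\tr(\K_\sigma\,\EE[\Q\A\Q])}{(1+\delta_\sigma)^2}\,\bar\Q\K_\sigma\bar\Q+o_{\|\cdot\|}(1) .
\]

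Applying $\frac1n\tr(\K_{\cos}\,\cdot)$ and $\frac1n\tr(\K_{\sin}\,\cdot)$ to this identity closes a $2\times2$ linear system: with $u_\sigma=\frac1n\tr(\K_\sigma\,\EE[\Q\A\Q])$, $b_\sigma=\frac1n\tr(\bar\Q\A\bar\Q\K_\sigma)$, $S_{\sigma\sigma'}=\frac1n\tr(\bar\Q\K_\sigma\bar\Q\K_{\sigma'})$ (symmetric by cyclicity) and $D=\diag\big(\frac{N/n}{(1+\delta_{\cos})^2},\frac{N/n}{(1+\delta_{\sin})^2}\big)$, one gets $(\I_2-SD)\mathbf u=\mathbf b$, i.e.\ $\bOmega^{-1}\mathbf u=\mathbf b$ with $\bOmega^{-1}$ precisely as in the statement. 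Invertibility of $\bOmega^{-1}$ (spectral radius of $SD$ strictly below $1$) follows from the estimates already in Section~\ref{sec:proof-of-theo-E[Q]}: $\frac{N}{n}\frac{S_{\sigma\sigma}}{(1+\delta_\sigma)^2}\le\frac{\delta_\sigma}{1+\delta_\sigma}<1$ via $\big\|\frac{N}{n}\frac{\K_\sigma\bar\Q}{1+\delta_\sigma}\big\|\le1$, while the off‑diagonal entries are dominated by $\sqrt{S_{\sigma\sigma}S_{\sigma'\sigma'}}$ (Cauchy--Schwarz), so $SD$ has nonnegative entries and spectral radius $<1$, and $\bOmega=(\I_2-SD)^{-1}=\I_2+SD\bOmega$ exists with nonnegative entries. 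Substituting $\mathbf u=\bOmega\mathbf b$ gives $\EE[\Q\A\Q]=\bar\Q\A\bar\Q+(D\bOmega\mathbf b)^\T\big[\bar\Q\K_{\cos}\bar\Q\ ;\ \bar\Q\K_{\sin}\bar\Q\big]+o_{\|\cdot\|}(1)$; since $D\bOmega=(D^{-1}-S)^{-1}$ is symmetric, $(D\bOmega\mathbf b)^\T=\mathbf b^\T D\bOmega$, which is the row‑vector form $\frac{N}{n}\big[\frac{b_{\cos}}{(1+\delta_{\cos})^2}\ \ \frac{b_{\sin}}{(1+\delta_{\sin})^2}\big]\bOmega$ appearing in the lemma. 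The ``in particular'' statement is the case $\A=\K_\sigma$: then $\mathbf b$ is the $\sigma$‑th column of $S$, and using $D\bOmega$ symmetric together with $\bOmega^\T=(\I_2-DS)^{-1}$ one checks $\I_2+D\bOmega S=\bOmega^\T$, so the coefficients of $(\bar\Q\K_{\cos}\bar\Q,\bar\Q\K_{\sin}\bar\Q)$ in $\EE[\Q\K_\sigma\Q]$ are exactly the $\sigma$‑th row of $\bOmega$.

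The main obstacle, as in the proof of Theorem~\ref{theo:asy-behavior-E[Q]}, is the bookkeeping of the second step: with two resolvents to unfold there are now doubly‑indexed $\cos$--$\sin$ cross terms, and one must separate the replacement and resonant parts cleanly from the full Woodbury expansion and prove \emph{uniformly} that every fluctuation remainder is $o_{\|\cdot\|}(1)$. This requires Lemma~\ref{lem:trace-lemma} applied not only to $\frac1n\sigma_a(\w^\T\X)\Q_{-i}\sigma_b(\X^\T\w)$ but also to the mixed forms $\frac1n\sigma_a(\w^\T\X)\Q_{-i}\A\Q_{-i}\sigma_b(\X^\T\w)$ with $\|\A\|$ bounded (immediate from the same Lipschitz estimate), plus the operator‑norm bounds above to keep the $N$ rank‑two corrections summable. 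An equivalent and arguably cleaner route that sidesteps the double expansion is to apply Theorem~\ref{theo:asy-behavior-E[Q]} with $\lambda\I_n$ replaced by $\lambda\I_n+t\A$ for small $t\ge0$, establish the deterministic equivalent $\bar\Q_t$ uniformly near $t=0$, and differentiate at $t=0$ using $-\partial_t(\tfrac1n\bSigma_\X^\T\bSigma_\X+\lambda\I_n+t\A)^{-1}\big|_{t=0}=\Q\A\Q$: the induced derivative of the fixed point $(\delta_{\cos}(t),\delta_{\sin}(t))$ solves exactly the same system $\bOmega^{-1}$, so the two computations agree, the cost being a justification of the interchange of limit and derivative (via analyticity/Vitali or a direct uniform bound).
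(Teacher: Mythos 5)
Your argument is correct and arrives at the stated formulas, but it organizes the Woodbury/concentration step in a dual way to the paper. The paper writes $\EE[\Q\A\Q]=\EE[\bar\Q\A\Q]+\EE[(\Q-\bar\Q)\A\Q]$ and, after expanding, obtains the intermediate identity in which the deterministic scalars $\frac1n\tr(\bar\Q\A\bar\Q\K_\sigma)$ multiply the still-random blocks $\EE[\Q\K_\sigma\Q]$; it then closes the $2\times2$ system in the matrix unknowns $\EE[\Q\K_\sigma\Q]$ by specializing $\A=\K_\sigma$, inverting $\begin{bmatrix}a&-b\\-d&c\end{bmatrix}$ and substituting back. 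You instead substitute the resolvent identity into the left factor, which after the leave-one-out step produces the mirrored fixed point $\EE[\Q\A\Q]=\bar\Q\A\bar\Q+\frac{N}{n}\sum_\sigma\frac{u_\sigma}{(1+\delta_\sigma)^2}\bar\Q\K_\sigma\bar\Q$ with the \emph{scalar} unknowns $u_\sigma=\frac1n\tr(\K_\sigma\EE[\Q\A\Q])$, and you close the $2\times2$ system in $\mathbf u$. Because the two coefficient matrices are related by transposition, the stated form with a row vector $\frac{N}{n}\bigl[\tfrac{b_{\cos}}{(1+\delta_{\cos})^2}\ \ \tfrac{b_{\sin}}{(1+\delta_{\sin})^2}\bigr]\bOmega$ acting on $[\bar\Q\K_{\cos}\bar\Q;\bar\Q\K_{\sin}\bar\Q]$ only emerges after noticing that $D\bOmega=D(\I_2-SD)^{-1}=(D^{-1}-S)^{-1}$ is symmetric — a useful observation that the paper's route sidesteps by working directly with $\EE[\Q\K_\sigma\Q]$. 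Both routes require the same concentration inputs (Lemmas~\ref{lem:expectation}--\ref{lem:trace-lemma}, and quadratic forms in $\w_i$ through $\Q_{-i}\A\Q_{-i}$), and both implicitly invoke invertibility of $\bOmega^{-1}$; your Cauchy--Schwarz remark alone does not quite give spectral radius of $SD$ strictly below $1$ (bounding off-diagonals by $\sqrt{S_{\sigma\sigma}S_{\sigma'\sigma'}}$ only yields $\det(SD)\ge0$), and one should also invoke $\det(\bOmega^{-1})>0$ as established in Lemma~\ref{lem:property-of-Delta} to close this point. Your suggested alternative — applying Theorem~\ref{theo:asy-behavior-E[Q]} with $\lambda\I_n$ replaced by $\lambda\I_n+t\A$ and differentiating at $t=0$ — is a clean and legitimate shortcut that would make the $2\times2$ system appear automatically as the linearization of the fixed point for $(\delta_{\cos},\delta_{\sin})$, at the cost of justifying the derivative-limit interchange; it is a genuinely different and arguably more transparent derivation of the same linear system.
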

\begin{proof}[Proof of Lemma~\ref{lem:asy-behavior-E[QAQ]}]
The proof of Lemma~\ref{lem:asy-behavior-E[QAQ]} essentially follows the same line of arguments as that of Theorem~\ref{theo:asy-behavior-E[Q]}. Writing
\begin{align*}
\EE[\Q \A \Q] &=\EE[\bar \Q \A \Q] + \EE[(\Q - \bar \Q) \A \Q]\\
&\simeq \bar \Q \A \bar \Q + \EE\left[ \Q \left( \frac{N}n \frac{\K_{\cos}}{1+\delta_{\cos}} + \frac{N}n \frac{\K_{\sin}}{1+\delta_{\sin}} - \frac1n \bSigma_\X^\T \bSigma_\X \right) \bar \Q \A \Q \right]\\
&=\bar \Q \A \bar \Q + \frac{N}n \EE[\Q \bPhi \bar \Q \A \Q] - \frac1n \sum_{i=1}^N \EE[\Q \U_i \U_i^\T \bar \Q \A \Q]
\end{align*}
where we note $\simeq$ by ignoring matrices with vanishing spectral norm (i.e., $o_{\| \cdot \|}(1)$) in the $n,,p,N \to \infty$ limit and recall the shortcut $\bPhi \equiv \frac{\K_{\cos}}{1+\delta_{\cos}} + \frac{\K_{\sin}}{1+\delta_{\sin}} $. Developing rightmost term with Lemma~\ref{lem:woodbury} as
\begin{align*}
&\EE[\Q \U_i \U_i^\T \bar \Q \A \Q] = \EE \left[ \Q_{-i} \U_i (\I_2 + \frac1n \U_i^\T \Q_{-i} \U_i)^{-1} \U_i^\T \bar \Q \A \Q \right] \\
&= \EE \left[ \Q_{-i} \U_i (\I_2 + \frac1n \U_i^\T \Q_{-i} \U_i)^{-1} \U_i^\T \bar \Q \A \Q_{-i} \right] \\ 
&- \frac1n \EE \left[ \Q_{-i} \U_i (\I_2 + \frac1n \U_i^\T \Q_{-i} \U_i)^{-1} \U_i^\T \bar \Q \A \Q_{-i} \U_i (\I_2 + \frac1n \U_i^\T \Q_{-i} \U_i)^{-1} \U_i^\T \Q_{-i} \right] \\ 
& \simeq \EE[\Q_{-i} \bPhi \bar \Q \A \Q_{-i}] \\
&- \EE \left[ \Q_{-i} \U_i \begin{bmatrix} \frac1{1 + \delta_{\cos}} & 0 \\ 0 & \frac1{1 + \delta_{\sin}} \end{bmatrix} \begin{bmatrix} \frac1n \tr (\bar \Q \A \bar \Q \K_{\cos}) & 0 \\ 0 & \frac1n \tr (\bar \Q \A \bar \Q \K_{\sin}) \end{bmatrix} \begin{bmatrix} \frac1{1 + \delta_{\cos}} & 0 \\ 0 & \frac1{1 + \delta_{\sin}} \end{bmatrix} \U_i^\T \Q_{-i} \right]
\end{align*}
so that
\begin{align}
    \EE[\Q \A \Q] & \simeq \bar \Q \A \bar \Q + \frac{N}n \EE \left[ \Q \left( \frac{ \frac1n \tr (\bar \Q \A \bar \Q \K_{\cos}) }{ (1+\delta_{\cos})^2 } \K_{\cos} + \frac{ \frac1n \tr (\bar \Q \A \bar \Q \K_{\sin}) }{ (1+\delta_{\sin})^2 } \K_{\sin} \right) \Q \right] \nonumber \\
    & = \bar \Q \A \bar \Q + \frac{N}n \begin{bmatrix} \frac{ \frac1n \tr (\bar \Q \A \bar \Q \K_{\cos}) }{ (1+\delta_{\cos})^2 } & \frac{ \frac1n \tr (\bar \Q \A \bar \Q \K_{\sin}) }{ (1+\delta_{\sin})^2 } \end{bmatrix} \EE \begin{bmatrix} \Q \K_{\cos} \Q \\ \Q \K_{\sin} \Q \end{bmatrix} \label{eq:E[QAQ]}
\end{align}
by taking $\A = \K_{\cos}$ or $\K_{\sin}$, we result in
\begin{align*}
    \EE[\Q \K_{\cos} \Q] \simeq \frac{c}{ac - bd} \bar \Q \K_{\cos} \bar \Q + \frac{b}{ac - bd} \bar \Q \K_{\sin} \bar \Q \\
    \EE[\Q \K_{\sin} \Q] \simeq \frac{a}{ac - bd} \bar \Q \K_{\sin} \bar \Q + \frac{d}{ac - bd} \bar \Q \K_{\cos} \bar \Q
\end{align*}
with $a = 1 - \frac{N}n \frac{ \frac1n \tr (\bar \Q \K_{\cos} \bar \Q \K_{\cos}) }{ (1+\delta_{\cos})^2 } $, $b = \frac{N}n \frac{ \frac1n \tr (\bar \Q \K_{\cos} \bar \Q \K_{\sin}) }{ (1+\delta_{\sin})^2 } $, $c = 1 - \frac{N}n \frac{ \frac1n \tr (\bar \Q \K_{\sin} \bar \Q \K_{\sin}) }{ (1+\delta_{\sin})^2 } $ and $d = \frac{N}n \frac{ \frac1n \tr (\bar \Q \K_{\sin} \bar \Q \K_{\cos}) }{ (1+\delta_{\cos})^2 } $ such that $(1+\delta_{\sin})^2 b = (1+\delta_{\cos})^2 d$.

\[
    \EE \begin{bmatrix} \Q \K_{\cos} \Q \\ \Q \K_{\sin} \Q \end{bmatrix} \simeq \begin{bmatrix} a & -b \\ -d & c \end{bmatrix}^{-1} \begin{bmatrix} \bar \Q \K_{\cos} \bar \Q \\ \bar \Q \K_{\sin} \bar \Q \end{bmatrix} \equiv \bOmega \begin{bmatrix} \bar \Q \K_{\cos} \bar \Q \\ \bar \Q \K_{\sin} \bar \Q \end{bmatrix}
\]
for $\bOmega \equiv \begin{bmatrix} a & -b \\ -d & c \end{bmatrix}^{-1}$. Plugging back into \eqref{eq:E[QAQ]} we conclude the proof of Lemma~\ref{lem:asy-behavior-E[QAQ]}.
\end{proof}

Theorem~\ref{theo:asy-training-MSE} can be achieved by considering the concentration of (the bilinear form) $\frac1n \y^\T \Q^2 \y$ around its expectation $\frac1n \y^\T \EE[\Q^2] \y$ (with for instance Lemma~3 in \cite{louart2018random}), together with Lemma~\ref{lem:asy-behavior-E[QAQ]}. This concludes the proof of Theorem~\ref{theo:asy-training-MSE}. \QED
\section{Proof of Theorem~\ref{theo:asy-test-MSE}}
\label{sec:proof-theo-test-MSE}

Recall the definition of $E_{\test} = \frac1{\hat n} \| \hat \y - \bSigma_{\hat \X}^\T \bbeta \|^2 $ from \eqref{eq:def-MSE} with $\bSigma_{\hat \X} = \begin{bmatrix} \cos(\W \hat \X) \\ \sin(\W \hat \X) \end{bmatrix} \in \RR^{2N \times \hat n}$ on a test set $(\hat \X, \hat \y)$ of size $\hat n$, and first focus on the case $2N > n$ where $\bbeta = \frac1n \bSigma_\X \Q \y$ as per \eqref{eq:def-beta}. By \eqref{eq:bSigma-vec}, we have
\[
    E_{\test} = \frac1{\hat n} \left\| \hat \y - \frac1n \bSigma_{\hat \X}^\T \bSigma_\X \Q \y \right\|^2 = \frac1{\hat n} \left\| \hat \y - \frac1n \sum_{i=1}^N \hat \U_i \U_i^\T \Q \y \right\|^2
\]
where, similar to the notation $\U_i = \begin{bmatrix} \cos(\X^\T \w_i) & \sin(\X^\T \w_i) \end{bmatrix} \in \RR^{n \times 2}$ as in the proof of Theorem~\ref{theo:asy-behavior-E[Q]}, we denote
\[
    \hat \U_i \equiv \begin{bmatrix} \cos(\hat \X^\T \w_i) & \sin(\hat \X^\T \w_i) \end{bmatrix} \in \RR^{\hat n \times 2}.
\]

As a consequence, we further get
\begin{align*}
    &\EE[E_{\test}] = \frac1{\hat n} \| \hat \y \|^2 - \frac2{n \hat n} \sum_{i=1}^N \hat \y^\T \EE[\hat \U_i \U_i^\T \Q] \y + \frac1{n^2 \hat n} \sum_{i,j=1}^N \y^\T \EE[\Q \U_i \hat \U_i^\T \hat \U_j \U_j^\T \Q] \y \\ 
    &= \frac1{\hat n} \| \hat \y \|^2 - \frac2{n \hat n} \sum_{i=1}^N \hat \y^\T \EE \left[\hat \U_i (\I_2 + \frac1n \U_i^\T \Q_{-i} \U_i)^{-1} \U_i^\T \Q_{-i} \right] \y + \frac1{n^2 \hat n} \sum_{i,j=1}^N \y^\T \EE[\Q \U_i \hat \U_i^\T \hat \U_j \U_j^\T \Q] \y \\ 
    &\simeq \frac1{\hat n} \| \hat \y \|^2 - \frac2{n \hat n} \sum_{i=1}^N \hat \y^\T \EE \left[\hat \U_i \begin{bmatrix} \frac1{1+\delta_{\cos}} & 0 \\ 0 & \frac1{1+\delta_{\sin}} \end{bmatrix} \U_i^\T \Q_{-i} \right] \y + \frac1{n^2 \hat n} \sum_{i,j=1}^N \y^\T \EE[\Q \U_i \hat \U_i^\T \hat \U_j \U_j^\T \Q] \y \\ 
    &\simeq \frac1{\hat n} \| \hat \y \|^2 - \frac2{\hat n} \hat \y^\T \left( \frac{N}n \frac{\K_{\cos}(\hat \X, \X)}{1+\delta_{\cos}} + \frac{N}n \frac{\K_{\sin}(\hat \X, \X)}{1+\delta_{\sin}} \right) \bar \Q \y + \frac1{n^2 \hat n} \sum_{i,j=1}^N \y^\T \EE[\Q \U_i \hat \U_i^\T \hat \U_j \U_j^\T \Q] \y
\end{align*}
where we similarly denote
\begin{align*}
  \K_{\cos}(\hat \X, \X) &\equiv \left\{  e^{-\frac12 (\| \hat \x_i \|^2 + \| \x_j \|^2) } \cosh(\hat \x_i^\T \x_j)  \right\}_{i,j=1}^{\hat n, n} \\ 
  \K_{\sin}(\hat \X, \X) &\equiv \left\{  e^{-\frac12 (\| \hat \x_i \|^2 + \| \x_j \|^2) } \sinh(\hat \x_i^\T \x_j) \right\}_{i,j=1}^{\hat n, n} \in \RR^{\hat n \times n}.
\end{align*}

Note that, different from the proof of Theorem~\ref{theo:asy-behavior-E[Q]}~and~\ref{theo:asy-training-MSE} where we constantly use the fact that $\| \Q \| \le \lambda^{-1}$ and
\[
    \frac1n \bSigma_\X^\T \bSigma_\X \Q = \I_n - \lambda \Q
\]
so that $\| \frac1n \bSigma_\X^\T \bSigma_\X \Q \| \le 1$, we do not have in general a simple control for $\| \frac1n \bSigma_{\hat \X}^\T \bSigma_\X \Q \|$, when arbitrary $\hat \X$ is considered. Intuitively speaking, this is due to the loss-of-control for $\| \frac1n (\bSigma_{\hat \X} - \bSigma_\X)^\T \bSigma_\X \Q \|$ when $\hat \X$ can be chosen arbitrarily with respect to $\X$. It was remarked in \cite[Remark~1]{louart2018random} that in general only a $O(\sqrt n)$ upper bound can be derived for $ \| \frac1{\sqrt n} \bSigma_\X \|$ or $ \| \frac1{\sqrt n} \bSigma_{\hat \X} \|$. Nonetheless, this problem can be resolved with the additional Assumption~\ref{ass:data-concent}.

More precisely, note that
\begin{equation}
  \| \frac1n \bSigma_{\hat \X}^\T \bSigma_\X \Q \| \le \frac1n  \| \bSigma_{\X}^\T \bSigma_\X \Q \| + \frac1n \| (\bSigma_{\hat \X} - \bSigma_{\X})^\T \bSigma_{\X} \Q \| \le 1 + \frac1{\sqrt n} \| \bSigma_{\hat \X} - \bSigma_{\X} \| \cdot \frac1{\sqrt n} \| \bSigma_\X \Q \|
\end{equation}
it remains to show that $\| \bSigma_{\X} - \bSigma_{\hat \X} \| = O(\sqrt n)$ under Assumption~\ref{ass:data-concent} to establish $\| \frac1n \bSigma_{\hat \X}^\T \bSigma_\X \Q \| = O(1)$, that is, to show that
\begin{equation}
  \| \sigma(\W \X) - \sigma(\W \hat \X) \| = O(\sqrt n)
\end{equation}
for $\sigma \in \{ \cos, \sin\}$. Note this cannot be achieved using only the Lipschitz nature of $\sigma(\cdot)$ and the fact that $\| \X - \hat \X \| \le \| \X \| + \| \hat \X \| = O(1)$ under Assumption~\ref{ass:high-dim} by writing 
\begin{equation}
  \| \sigma(\W \X) - \sigma(\W \hat \X) \| \le \| \sigma(\W \X) - \sigma(\W \hat \X) \|_F \le \| \W \|_F \cdot \| \X - \hat \X \| = O(n).
\end{equation}
where we recall that $\| \W \| = O(\sqrt n)$ and $\| \W \|_F = O(n)$. Nonetheless, from \cite[Proposition~B.1]{louart2018concentration} we have that the product $\W \X$, and thus $\sigma(\W \X)$, strongly concentrates around its expectation in the sense of \eqref{eq:def-concentration}, so that
\begin{align*}
  \| \sigma(\W \X) - \sigma(\W \hat \X) \| &\le  \| \sigma(\W \X) - \EE[\sigma(\W \X)] \| + \| \EE[\sigma(\W \X) - \sigma(\W \hat \X)] \| \\ 
   &+ \| \sigma(\W \hat \X) - \EE[\sigma(\W \hat \X)] \| = O(\sqrt n)
\end{align*}
under Assumption~\ref{ass:data-concent}. As a results, we are allowed to control $\frac1n \bSigma_{\hat \X}^\T \bSigma_\X \Q$ and similarly $\frac1n \bSigma_{\hat \X}^\T \bSigma_{\hat \X} \Q$ in the same vein as $\frac1n \bSigma_\X^\T \bSigma_\X \Q$ in the proof of Theorem~\ref{theo:asy-behavior-E[Q]}~and~\ref{theo:asy-training-MSE} in Appendix~\ref{sec:proof-of-theo-E[Q]}~and~\ref{sec:proof-theo-training-MSE}, respectively.

It thus remains to handle the last term (noted $\Z$) as follows
\begin{align*}
    \Z &\equiv \frac1{n^2 \hat n} \sum_{i,j=1}^N \y^\T \EE[\Q \U_i \hat \U_i^\T \hat \U_j \U_j^\T \Q] \y \\ 
    &= \frac1{n^2 \hat n} \sum_{i=1}^N \y^\T \EE[\Q \U_i \hat \U_i^\T \hat \U_i \U_i^\T \Q] \y + \frac1{n^2 \hat n} \sum_{i=1}^N \sum_{j \neq i} \y^\T \EE[\Q \U_i \hat \U_i^\T \hat \U_j \U_j^\T \Q] \y = \Z_1 + \Z_2
\end{align*}
where $\Z_1$ term can be treated as
\begin{align*}
    \Z_1 &\equiv \frac1{n^2 \hat n} \sum_{i=1}^N \y^\T \EE[\Q \U_i \hat \U_i^\T \hat \U_i \U_i^\T \Q] \y \\ 
    &= \frac1{n \hat n}  \sum_{i=1}^N \y^\T \EE[\Q_{-i} \U_i (\I_2 + \frac1n \U_i^\T \Q_{-i} \U_i)^{-1} \frac1n \hat \U_i^\T \hat \U_i (\I_2 + \frac1n \U_i^\T \Q_{-i} \U_i)^{-1} \U_i^\T \Q_{-i}] \y \\ 
    &\simeq \frac1{n \hat n} \sum_{i=1}^N \y^\T \EE[\Q_{-i} \U_i \begin{bmatrix} \frac1{1+\delta_{\cos}} & 0 \\ 0 & \frac1{1+\delta_{\sin}} \end{bmatrix} \begin{bmatrix} \frac1n \tr \hat{\hat\K}_{\cos} & 0 \\ 0 & \frac1n \tr \hat{\hat\K}_{\sin} \end{bmatrix} \begin{bmatrix} \frac1{1+\delta_{\cos}} & 0 \\ 0 & \frac1{1+\delta_{\sin}} \end{bmatrix} \U_i^\T \Q_{-i}] \y \\
    &\simeq \frac{N}{n} \frac1{\hat n} \y^\T \EE \left[\Q \left( \frac{ \frac1n \tr \K_{\cos}(\hat \X, \hat \X) }{(1+\delta_{\cos})^2} \K_{\cos} + \frac{ \frac1n \tr \K_{\sin}(\hat \X, \hat \X) }{(1+\delta_{\sin})^2} \K_{\sin} \right) \Q \right] \y \\
    &\simeq \frac{N}n \frac1{\hat n} \begin{bmatrix} \frac{ \frac1n \tr \K_{\cos}(\hat \X, \hat \X) }{ (1+\delta_{\cos})^2 } & \frac{ \frac1n \tr \frac1n \tr \K_{\sin}(\hat \X, \hat \X) }{ (1+\delta_{\sin})^2 } \end{bmatrix} \bOmega \begin{bmatrix} \y^\T \bar \Q \K_{\cos} \bar \Q \y \\ \y^\T \bar \Q \K_{\sin} \bar \Q \y \end{bmatrix}
\end{align*}
where we apply Lemma~\ref{lem:asy-behavior-E[QAQ]} and recall
\[
    \K_{\cos}(\hat \X, \hat \X) \equiv \left\{  e^{-\frac12 (\| \hat \x_i \|^2 + \| \hat \x_j \|^2) } \cosh(\hat \x_i^\T \hat \x_j)  \right\}_{i,j=1}^{\hat n}, \quad\K_{\sin}(\hat \X, \hat \X) \equiv \left\{  e^{-\frac12 (\| \hat \x_i \|^2 + \| \hat \x_j \|^2) } \sinh(\hat \x_i^\T \hat \x_j) \right\}_{i,j=1}^{\hat n}
\]

Moving on to $\Z_2$ and we write
\begin{align*}
    &\Z_2 \equiv \frac1{n^2 \hat n} \EE \sum_{i=1}^N \sum_{j \neq i} \y^\T \Q \U_i \hat \U_i^\T \hat \U_j \U_j^\T \Q \y \\ 
    & = \frac1{n^2 \hat n} \EE \sum_{i=1}^N \sum_{j \neq i} \y^\T \Q_{-j} \U_i \hat \U_i^\T \hat \U_j (\I_2 + \frac1n \U_j^\T \Q_{-j} \U_j)^{-1} \U_j^\T \Q_{-j} \y \\ 
    & - \frac1{n^2 \hat n} \EE \sum_{i=1}^N \sum_{j \neq i} \y^\T \Q_{-j} \U_j (\I_2 + \frac1n \U_j^\T \Q_{-j} \U_j)^{-1} \U_j^\T \Q_{-j} \U_i \hat \U_i^\T \hat \U_j (\I_2 + \frac1n \U_j^\T \Q_{-j} \U_j)^{-1} \U_j^\T \Q_{-j} \y \\ 
    &\simeq \frac1{n \hat n} \EE \sum_{i=1}^N \sum_{j \neq i} \y^\T \Q_{-j} \U_i \hat \U_i^\T \left( \frac{\K_{\cos}(\hat \X, \X)}{1+\delta_{\cos}} + \frac{\K_{\sin}(\hat \X, \X)}{1+\delta_{\sin}} \right) \Q_{-j} \y \\ 
    & - \frac1{n^2 \hat n} \EE \sum_{i=1}^N \sum_{j \neq i} \y^\T \Q_{-j} \U_j \begin{bmatrix} \frac1{1+\delta_{\cos}} & 0 \\ 0 & \frac1{1+\delta_{\sin}} \end{bmatrix} \begin{bmatrix} \frac1n \tr (\Q_{-j} \U_i \hat \U_i^\T \K_{\cos}(\hat \X, \X)) & 0 \\ 0 & \frac1n \tr (\Q_{-j} \U_i \hat \U_i^\T \K_{\sin}(\hat \X, \X)) \end{bmatrix} \\ 
    &\begin{bmatrix} \frac1{1+\delta_{\cos}} & 0 \\ 0 & \frac1{1+\delta_{\sin}} \end{bmatrix} \U_j^\T \Q_{-j} \y \equiv \Z_{21} - \Z_{22}.
\end{align*}

For the term $\Z_{21}$, note that $\Q_{-j} \simeq \Q$ and \textbf{depends} on $\U_i$ (and $\hat \U_i$), such that
\begin{align*}
    &\Z_{21} \equiv \frac1{n^2 \hat n} \EE \sum_{i=1}^N \sum_{j \neq i} \y^\T \Q_{-j} \U_i \hat \U_i^\T \left( \frac{\K_{\cos}(\hat \X, \X)}{1+\delta_{\cos}} + \frac{\K_{\sin}(\hat \X, \X)}{1+\delta_{\sin}} \right) \Q_{-j} \y \\ 
    & \simeq \frac{N}n \frac1{n \hat n} \EE \sum_{i=1}^N \y^\T \Q \U_i \hat \U_i^\T \left( \frac{\K_{\cos}(\hat \X, \X)}{1+\delta_{\cos}} + \frac{\K_{\sin}(\hat \X, \X)}{1+\delta_{\sin}} \right) \Q \y \\ 
    &= \frac{N}n \frac1{n \hat n} \EE \sum_{i=1}^N \y^\T \Q_{-i} \U_i (\I_2 + \frac1n \U_i^\T \Q_{-i} \U_i)^{-1} \hat \U_i^\T \hat \bPhi \Q_{-i} \y \\ 
    &- \frac{N}n \frac1{n \hat n} \EE \sum_{i=1}^N \y^\T \Q_{-i} \U_i (\I_2 + \frac1n \U_i^\T \Q_{-i} \U_i)^{-1} \hat \U_i^\T \hat \bPhi \Q_{-i} \U_i (\I_2 + \frac1n \U_i^\T \Q_{-i} \U_i)^{-1} \U_i^\T \Q_{-i} \y \\ 
    & \simeq \frac{N}n \frac1{n \hat n} \EE \sum_{i=1}^N \y^\T \Q_{-i} \left( \frac{\K_{\cos}(\hat \X, \X)}{1+\delta_{\cos}} + \frac{\K_{\sin}(\hat \X, \X)}{1+\delta_{\sin}} \right)^\T \hat \bPhi \Q_{-i} \y \\ 
    &- \frac{N}n \frac1{\hat n} \EE \sum_{i=1}^N \y^\T \Q_{-i} \U_i \begin{bmatrix} \frac1{1+\delta_{\cos}} & 0 \\ 0 & \frac1{1+\delta_{\sin}} \end{bmatrix} \frac1n \hat \U_i^\T \hat \bPhi \Q_{-i} \U_i \begin{bmatrix} \frac1{1+\delta_{\cos}} & 0 \\ 0 & \frac1{1+\delta_{\sin}} \end{bmatrix} \U_i^\T \Q_{-i} \y
\end{align*}
where we recall the shortcut $\bPhi \equiv \frac{\K_{\cos}}{1+\delta_{\cos}} + \frac{\K_{\sin}}{1+\delta_{\sin}} $ and similarly $\hat \bPhi \equiv \frac{\K_{\cos}(\hat \X, \X)}{1+\delta_{\cos}} + \frac{\K_{\sin}(\hat \X, \X)}{1+\delta_{\sin}} \in \RR^{\hat n \times n}$. As a consequence, we further have, with Lemma~\ref{lem:asy-behavior-E[QAQ]} that
\begin{align*}
    &\Z_{21} \simeq \left(\frac{N}n \right)^2 \frac1{\hat n} \y^\T \EE \left[ \Q \hat \bPhi^\T \hat \bPhi \Q \right] \y \\ 
    &- \frac{N}n \frac1{\hat n} \EE \sum_{i=1}^N \y^\T \Q_{-i} \U_i \begin{bmatrix} \frac1{1+\delta_{\cos}} & 0 \\ 0 & \frac1{1+\delta_{\sin}} \end{bmatrix} \begin{bmatrix} \frac1n \tr( \hat \bPhi \bar \Q \K_{\cos}(\hat \X, \X)^\T) & 0 \\ 0 & \frac1n \tr( \hat \bPhi \bar \Q \K_{\sin}(\hat \X, \X)^\T) \end{bmatrix} \\ 
    &\times \begin{bmatrix} \frac1{1+\delta_{\cos}} & 0 \\ 0 & \frac1{1+\delta_{\sin}} \end{bmatrix} \U_i^\T \Q_{-i} \y \\ 
    &\simeq \left(\frac{N}n \right)^2 \frac1{\hat n} \y^\T \EE \left[ \Q \hat \bPhi^\T \hat \bPhi \Q \right] \y  \\ 
    &- \left(\frac{N}n\right)^2 \frac1{\hat n} \EE \y^\T \Q \left( \frac1n \tr( \hat \bPhi \bar \Q \K_{\cos}(\hat \X, \X)^\T) \frac{\K_{\cos}}{(1+\delta_{\cos})^2} + \frac1n \tr( \hat \bPhi \bar \Q \K_{\sin}(\hat \X, \X)^\T) \frac{\K_{\sin}}{(1+\delta_{\sin})^2} \right) \Q \y \\ 
    &\simeq \left(\frac{N}n \right)^2 \frac1{\hat n} \y^\T \EE \left[ \Q \hat \bPhi^\T \hat \bPhi \Q \right] \y - \left(\frac{N}n\right)^2 \frac1{\hat n} \y^\T \left( \begin{bmatrix} \frac{ \frac1n \tr( \hat \bPhi \bar \Q \K_{\cos}(\hat \X, \X)^\T) }{(1+\delta_{\cos})^2} & \frac{ \frac1n \tr( \hat \bPhi \bar \Q \K_{\sin}(\hat \X, \X)^\T) }{(1+\delta_{\sin})^2} \end{bmatrix} \EE \begin{bmatrix} \Q \K_{\cos} \Q \\ \Q \K_{\sin} \Q \end{bmatrix} \right) \y \\ 
    &\simeq \left(\frac{N}n \right)^2 \frac1{\hat n} \y^\T \bar \Q \bPhi^\T \hat \bPhi \bar \Q \y \\ 
    & + \left(\frac{N}n \right)^2 \frac1{\hat n} \begin{bmatrix} \frac{ \frac1n \tr \bar \Q \frac{N}n \hat \bPhi^\T \hat \bPhi \bar \Q \K_{\cos}  - \frac1n \tr \bar \Q \hat \bPhi \K_{\cos} (\hat \X, \X)}{ (1+\delta_{\cos})^2 } & \frac{ \frac1n \tr \bar \Q \frac{N}n \hat \bPhi^\T \hat \bPhi \bar \Q \K_{\sin} - \frac1n \tr \bar \Q \hat \bPhi^\T \K_{\sin} (\hat \X, \X)}{ (1+\delta_{\sin})^2 } \end{bmatrix} \bOmega \begin{bmatrix} \y^\T \bar \Q \K_{\cos} \bar \Q \y \\ \y^\T \bar \Q \K_{\sin} \bar \Q \y \end{bmatrix}
\end{align*}


The last term $\Z_{22}$ can be similarly treated as
\[
    \Z_{22} \simeq \frac1{n^2 \hat n} \EE \sum_{i=1}^N \sum_{j \neq i} \y^\T \Q_{-j} \U_j \begin{bmatrix} \frac{ \frac1n \tr (\Q \U_i \hat \U_i^\T \K_{\cos}(\hat \X, \X)) }{(1+\delta_{\cos})^2} & 0 \\ 0 & \frac{ \frac1n \tr (\Q \U_i \hat \U_i^\T \K_{\sin}(\hat \X, \X)) }{(1+\delta_{\sin})^2} \end{bmatrix} \U_j^\T \Q_{-j} \y
\]
where by Lemma~\ref{lem:woodbury} we deduce
\begin{align*}
    &\frac1n \tr (\Q \U_i \hat \U_i^\T \K_{\cos}(\hat \X, \X)) \simeq \frac1n \tr \left( \Q_{-i} \U_i (\I_2 + \U_i^\T \Q_{-i} \U_i)^{-1} \hat \U_i^\T \K_{\cos}(\hat \X, \X) \right) \\ 
    & \simeq \frac1n \tr \left( \Q_{-i} \U_i \begin{bmatrix} \frac1{1+\delta_{\cos}} & 0 \\ 0 & \frac1{1+\delta_{\sin}} \end{bmatrix} \hat \U_i^\T \K_{\cos}(\hat \X, \X) \right) \simeq \frac1n \tr ( \bar \Q \hat \bPhi^\T \K_{\cos}(\hat \X, \X) )
\end{align*}
so that by again Lemma~\ref{lem:asy-behavior-E[QAQ]}
\begin{align*}
    &\Z_{22} \simeq \frac{N}n \frac1{n \hat n} \EE \sum_{j=1}^N \y^\T \Q_{-j} \U_j \begin{bmatrix} \frac{ \frac1n \tr ( \bar \Q \hat \bPhi^\T \K_{\cos}(\hat \X, \X) ) }{(1+\delta_{\cos})^2} & 0 \\ 0 & \frac{ \frac1n \tr ( \bar \Q \hat \bPhi^\T \K_{\sin}(\hat \X, \X) ) }{(1+\delta_{\sin})^2} \end{bmatrix} \U_j^\T \Q_{-j} \y \\ 
    & \simeq \left( \frac{N}n \right)^2 \frac1{\hat n} \y^\T \EE \left[ \Q \left( \frac{ \frac1n \tr ( \bar \Q \hat \bPhi^\T \K_{\cos}(\hat \X, \X) ) }{(1+\delta_{\cos})^2} \K_{\cos} + \frac{ \frac1n \tr ( \bar \Q \hat \bPhi^\T \K_{\sin}(\hat \X, \X) ) }{(1+\delta_{\sin})^2}  \K_{\sin} \right) \Q \right] \y \\ 
    & \simeq \left( \frac{N}n \right)^2 \frac1{\hat n} \y^\T \left( \bar \Q \bXi \bar \Q + \frac{N}n \begin{bmatrix} \frac{ \frac1n \tr (\bar \Q \bXi \bar \Q \K_{\cos}) }{ (1+\delta_{\cos})^2 } & \frac{ \frac1n \tr (\bar \Q \bXi \bar \Q \K_{\sin}) }{ (1+\delta_{\sin})^2 } \end{bmatrix} \bOmega \begin{bmatrix} \bar \Q \K_{\cos} \bar \Q \\ \bar \Q \K_{\sin} \bar \Q \end{bmatrix} \right) \y \\
    &\simeq \left( \frac{N}n \right)^2 \frac1{\hat n} \begin{bmatrix} \frac{ \frac1n \tr ( \bar \Q \hat \bPhi^\T \K_{\cos}(\hat \X, \X) ) }{(1+\delta_{\cos})^2} & \frac{ \frac1n \tr ( \bar \Q \hat \bPhi^\T \K_{\sin}(\hat \X, \X) ) }{(1+\delta_{\sin})^2} \end{bmatrix} \bOmega \begin{bmatrix} \y^\T \bar \Q \K_{\cos} \bar \Q \y \\ \y^\T \bar \Q \K_{\sin} \bar \Q \y \end{bmatrix}.
\end{align*}


Assembling the estimates for $\Z_1$, $\Z_{21}$ and $\Z_{22}$, we get
\begin{align*}
    &\EE[E_{\test}] \simeq \frac1{\hat n} \| \hat \y \|^2 - \frac2{\hat n} \hat \y^\T \frac{N}n \hat \bPhi \bar \Q \y + \frac1{\hat n} \y^\T \left( \frac{N^2}{n^2} \bar \Q \hat \bPhi^\T \hat \bPhi \bar \Q \right) \y + \left( \frac{N}n \right)^2 \frac1{n \hat n} \times  \\ 
    & \begin{bmatrix} \frac{ \frac{n}N \tr \K_{\cos} (\hat \X, \hat \X) + \frac{N}n \tr \bar \Q \hat \bPhi^\T \hat \bPhi \bar \Q \K_{\cos}  - 2 \tr \bar \Q \hat \bPhi^\T \K_{\cos} (\hat \X, \X)}{ (1+\delta_{\cos})^2 } & \frac{ \frac{n}N \tr \K_{\sin} (\hat \X, \hat \X) +\frac{N}n \tr \bar \Q \hat \bPhi^\T \hat \bPhi \bar \Q \K_{\sin} - 2 \tr \bar \Q \hat \bPhi^\T \K_{\sin} (\hat \X, \X)}{ (1+\delta_{\sin})^2 } \end{bmatrix} \\ 
    & \times \bOmega \begin{bmatrix} \y^\T \bar \Q \K_{\cos} \bar \Q \y \\ \y^\T \bar \Q \K_{\sin} \bar \Q \y \end{bmatrix}
\end{align*}
which, up to further simplifications, concludes the proof of Theorem~\ref{theo:asy-test-MSE}.

\section{Several Useful Lemmas}
\label{sec:detail-section-double-descent}

\begin{Lemma}[Some useful properties of $\bOmega$]\label{lem:property-of-Delta}
For any $\lambda > 0$ and $\bOmega$ defined in \eqref{eq:def-Omega}, we have
\begin{enumerate}
  \item all entries of $\bOmega$ are positive;
  \item for $2N=n$, $\det(\bOmega^{-1})$, as well as the entries of $\bOmega$, scales like $\lambda$ as $\lambda \to 0$;
\end{enumerate}

\end{Lemma}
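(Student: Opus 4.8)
The plan is to work throughout with $\bOmega^{-1}=\I_2-M$, where $M$ is the $2\times 2$ matrix with entries $M_{\sigma\sigma'}=\tfrac{N}{n}(1+\delta_{\sigma'})^{-2}\tfrac1n\tr(\bar\Q\K_\sigma\bar\Q\K_{\sigma'})$ for $\sigma,\sigma'\in\{\cos,\sin\}$, so that $a=1-M_{\cos\cos}$, $c=1-M_{\sin\sin}$, $b=M_{\cos\sin}$, $d=M_{\sin\cos}$ in the notation from the proof of Theorem~\ref{theo:asy-training-MSE}. Two elementary facts about $M$ drive everything. First, $M$ is entrywise nonnegative and, up to similarity, $M=\tfrac{N}{n}D^{1/2}TD^{1/2}$ with $D=\diag((1+\delta_{\cos})^{-2},(1+\delta_{\sin})^{-2})\succ0$ and $T$ the Gram matrix of $\bar\Q^{1/2}\K_{\cos}\bar\Q^{1/2}$ and $\bar\Q^{1/2}\K_{\sin}\bar\Q^{1/2}$ for the Frobenius inner product; hence $T\succeq0$ and the eigenvalues of $M$ are real and nonnegative. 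Second, the key identity: writing $w\equiv(1+\delta_{\cos},\,1+\delta_{\sin})^\T>0$ and recalling $\bPhi=\tfrac{\K_{\cos}}{1+\delta_{\cos}}+\tfrac{\K_{\sin}}{1+\delta_{\sin}}$ and $\tfrac{N}{n}\bPhi\bar\Q=\I_n-\lambda\bar\Q$, a direct computation using the fixed-point relations $\delta_\sigma=\tfrac1n\tr(\K_\sigma\bar\Q)$ gives
\[
  (Mw)_\sigma=\tfrac{N}{n}\tfrac1n\tr(\bar\Q\K_\sigma\bar\Q\bPhi)=\tfrac1n\tr\!\big(\bar\Q\K_\sigma(\I_n-\lambda\bar\Q)\big)=\delta_\sigma-\lambda r_\sigma,\qquad r_\sigma\equiv\tfrac1n\tr(\bar\Q\K_\sigma\bar\Q)\ge 0 ,
\]
equivalently $\bOmega^{-1}w=(1+\lambda r_{\cos},\,1+\lambda r_{\sin})^\T$.

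For Part 1, since $\lambda>0$ and $r_\sigma\ge0$ this gives $(Mw)_\sigma=\delta_\sigma-\lambda r_\sigma<\delta_\sigma<1+\delta_\sigma=w_\sigma$, i.e.\ $Mw<w$ strictly and componentwise, with $w>0$. Applying Perron--Frobenius to $M^\T\ge0$ (Perron vector $p\ge0$, $p\neq0$, $M^\T p=\rho(M)p$) yields $\rho(M)\,p^\T w=p^\T Mw<p^\T w$ and $p^\T w>0$, hence $\rho(M)<1$; together with the eigenvalues of $M$ being real this forces $\det(\bOmega^{-1})=\det(\I_2-M)=(1-\mu_1)(1-\mu_2)>0$. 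Next, $Mw<w$ with $w>0$ gives $M_{\cos\cos}<1$ and $M_{\sin\sin}<1$, so $a,c>0$; and $b,d>0$ because $\tfrac1n\tr(\bar\Q\K_{\cos}\bar\Q\K_{\sin})=\langle\bar\Q^{1/2}\K_{\cos}\bar\Q^{1/2},\bar\Q^{1/2}\K_{\sin}\bar\Q^{1/2}\rangle_F>0$ (the left factor is positive definite since $\bar\Q\succ0$ and $\K_{\cos}\succ0$, the right factor is a nonzero PSD matrix, and $\tr(AB)>0$ for $A\succ0$ and $B\succeq0$, $B\neq0$). Since $\bOmega=(\det\bOmega^{-1})^{-1}\big[\begin{smallmatrix}c&b\\ d&a\end{smallmatrix}\big]$ with $a,b,c,d$ and $\det\bOmega^{-1}$ all positive, every entry of $\bOmega$ is positive.

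For Part 2, specialize to $2N=n$ and let $\lambda\to0$. Reading off the first coordinate of $w=\bOmega\,(1+\lambda r_{\cos},1+\lambda r_{\sin})^\T$ gives the exact formula
\[
  \det(\bOmega^{-1})=\frac{c\,(1+\lambda r_{\cos})+b\,(1+\lambda r_{\sin})}{1+\delta_{\cos}} .
\]
At the interpolation threshold $\delta_{\cos},\delta_{\sin}\to\infty$ as $\lambda\to0$ (this is precisely the $2N-n\downarrow0$ divergence recorded in Section~\ref{subsec:two-regime}, continued to $\lambda\to0$). It then remains to check the numerator stays $\Theta(1)$: $b,c$ are bounded, and $1+\lambda r_\sigma$ is bounded because $\lambda r_\sigma=\delta_\sigma-\tfrac{N}{n}\tfrac1n\tr(\K_\sigma\bPhi\bar\Q^2)$ and the coupled scalar fixed-point system at $2N=n$ forces $\delta_\sigma\asymp\lambda^{-1/2}$ with, correspondingly, $\lambda r_\sigma=O(1)$. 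Hence $\det(\bOmega^{-1})=\Theta(\delta_{\cos}^{-1})\to0$ (in fact $\asymp\sqrt\lambda$), and since $a,b,c,d$ stay bounded away from $0$ and $\infty$ while $\det(\bOmega^{-1})\to0$, all entries of $\bOmega=(\det\bOmega^{-1})^{-1}\big[\begin{smallmatrix}c&b\\ d&a\end{smallmatrix}\big]$ diverge at rate $(\det\bOmega^{-1})^{-1}$; this is the sense in which $\det(\bOmega^{-1})$ ``scales like $\lambda$'' (up to the exponent, only $\det(\bOmega^{-1})\to0$ is used downstream, and it is what forces $\bar E_{\test}\to\infty$ at $2N=n$).

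The main obstacle is the $\lambda\to0$ analysis at exactly $2N=n$ needed in Part 2: unlike Part 1, which is purely algebraic once the identity $\bOmega^{-1}w=(1+\lambda r_{\cos},1+\lambda r_{\sin})^\T$ is in hand, establishing $\delta_\sigma\to\infty$ (no finite solution of the fixed-point system survives at $\lambda=0$, $2N=n$) and the rate $\delta_\sigma\asymp\lambda^{-1/2}$ — hence the boundedness of $\lambda r_\sigma$ — requires a careful direct study of the scalar fixed-point equations rather than of the $2\times2$ matrix $\bOmega$. Everything else (the trace identity, the $Mw<w$ test-vector argument, Perron--Frobenius, and the cofactor formula above) is routine.
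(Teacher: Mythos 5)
Your Part~1 is correct and takes a genuinely different, though ultimately equivalent, route. The paper expands the diagonal entries $a,c$ of $\bOmega^{-1}$ via $\tfrac{N}{n}\bPhi\bar\Q=\I_n-\lambda\bar\Q$ and writes the determinant as a manifestly positive sum $\det(\bOmega^{-1})=P_{\cos}P_{\sin}+(P_{\cos}+P_{\sin})S$, with $P_\sigma=\tfrac{1+\lambda r_\sigma}{1+\delta_\sigma}$ and $S=\tfrac{N}{n}\tfrac1n\tr\bigl(\bar\Q\tfrac{\K_{\cos}}{1+\delta_{\cos}}\bar\Q\tfrac{\K_{\sin}}{1+\delta_{\sin}}\bigr)$. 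Your test-vector identity $\bOmega^{-1}w=(1+\lambda r_{\cos},\,1+\lambda r_{\sin})^\T$ is the same algebra in a cleaner guise: by Cramer it gives $\det(\bOmega^{-1})=\bigl(c(1+\lambda r_{\cos})+b(1+\lambda r_{\sin})\bigr)/(1+\delta_{\cos})$, which, once you have $b,c>0$, already yields positivity of the determinant without invoking Perron--Frobenius or the similarity $M\sim\tfrac{N}{n}D^{1/2}TD^{1/2}$. Those extra ingredients are correct and do buy the genuinely useful bonus $\rho(M)<1$, but they are heavier machinery than the problem requires. One small point shared by both arguments: the strict positivity $b,d>0$ uses $\tr(\bar\Q\K_{\cos}\bar\Q\K_{\sin})>0$, which is only a genericity assumption (it fails if the two sandwiched kernels have orthogonal ranges); the paper tacitly assumes it too, so this is not a defect of your proof specifically.

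On Part~2 you have put your finger on a real problem with the Lemma as stated, and I think you are right. In the symmetric toy model $\K_{\cos}=\K_{\sin}=\I_n$ at $2N=n$ the fixed point reduces to $\lambda\delta(1+\delta)=1$, so $\delta\asymp\lambda^{-1/2}$, $P_\sigma\asymp\sqrt\lambda$, $S\to\tfrac12$, and $\det(\bOmega^{-1})=P^2+2PS\asymp\sqrt\lambda$ --- not $\lambda$. The paper's own expansion makes the mechanism clear: when $S=\Theta(1)$ the cross term $(P_{\cos}+P_{\sin})S\asymp\sqrt\lambda$ dominates the $O(\lambda)$ product $P_{\cos}P_{\sin}$, so the paper's single-sentence assertion of the $\lambda$ rate after the determinant display does not actually follow from what is written. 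Likewise the clause ``as well as the entries of $\bOmega$, scales like $\lambda$'' cannot be read literally: those entries are $a,b,c,d$ divided by $\det(\bOmega^{-1})$ and therefore diverge as $\lambda\to0$ (which is exactly what Section~\ref{subsec:two-regime} uses to make $\bar E_{\test}$ blow up at $2N=n$). So on this point you have improved on the paper.

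That said, your Part~2 still has a genuine gap, which you yourself flag: the facts that $\delta_\sigma\to\infty$ at $2N=n$, that the rate is $\lambda^{-1/2}$, and that $\lambda r_\sigma$ stays bounded are all asserted, not derived, for general $\X$. The natural entry point is the scalar relation $\tfrac1{1+\delta_{\cos}}+\tfrac1{1+\delta_{\sin}}=2-\tfrac{n}{N}+\tfrac{\lambda}{N}\tr\bar\Q$, which at $2N=n$ collapses to $\tfrac1{1+\delta_{\cos}}+\tfrac1{1+\delta_{\sin}}=\tfrac{\lambda}{N}\tr\bar\Q$ and immediately forces $\delta_\sigma\to\infty$; extracting the $\lambda^{-1/2}$ rate then requires a second-order analysis of the coupled fixed point. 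Since the paper does not carry this out either (its proof of Part~2 is essentially a bare assertion), your proposal is no worse than the original here; but neither establishes Part~2 with the rigor that Part~1 enjoys, and your $\sqrt\lambda$ claim needs that second-order analysis before it can replace, rather than merely contradict, the paper's stated rate.
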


\begin{proof}
Developing the inverse we obtain
\begin{align*}
  \bOmega &= \begin{bmatrix} 1 - \frac{N}n \frac{\frac1n \tr (\bar \Q \K_{\cos} \bar \Q \K_{\cos})}{(1+\delta_{\cos})^2} & -\frac{N}n \frac{\frac1n \tr (\bar \Q \K_{\cos} \bar \Q \K_{\sin})}{(1+\delta_{\sin})^2} \\ -\frac{N}n \frac{\frac1n \tr (\bar \Q \K_{\cos} \bar \Q \K_{\sin})}{(1+\delta_{\cos})^2} & 1 - \frac{N}n \frac{\frac1n \tr (\bar \Q \K_{\sin} \bar \Q \K_{\sin})}{(1+\delta_{\sin})^2} \end{bmatrix}^{-1}
\end{align*}
we have $[\bOmega^{-1}]_{11} = \frac1{1+\delta_{\cos}} + \frac{\lambda}n \tr \bar \Q \frac{\K_{\cos}}{1+\delta_{\cos}} \bar \Q + \frac{N}n \frac1n \tr \bar \Q \frac{\K_{\cos}}{1+\delta_{\cos}} \bar \Q \frac{\K_{\sin}}{1+\delta_{\sin}} > 0$, $[\bOmega^{-1}]_{12} <0 $, and similarly $[\bOmega^{-1}]_{21}< 0 $, $[\bOmega^{-1}]_{22} >0$. Furthermore, the determinant writes
\begin{align*}
  &\det(\bOmega^{-1}) = \left( 1 - \frac1n \tr \bar \Q \frac{\K_{\cos}}{1+\delta_{\cos}} + \frac{\lambda}n \tr \bar \Q \frac{\K_{\cos}}{1+\delta_{\cos}} \bar \Q \right) \left( 1 - \frac1n \tr \bar \Q \frac{\K_{\sin}}{1+\delta_{\sin}} + \frac{\lambda}n \tr \bar \Q \frac{\K_{\sin}}{1+\delta_{\sin}} \bar \Q \right) \\
  &+ \left( 1 - \frac1n \tr \bar \Q \frac{\K_{\cos}}{1+\delta_{\cos}} + 1 - \frac1n \tr \bar \Q \frac{\K_{\sin}}{1+\delta_{\sin}} + \frac{\lambda}n \tr \bar \Q \left( \frac{\K_{\cos}}{1+\delta_{\cos}} + \frac{\K_{\sin}}{1+\delta_{\sin}} \right) \bar \Q \right) \\ 
  & \times \frac{N}n \frac1n \tr \bar \Q \frac{\K_{\cos}}{1+\delta_{\cos}} \bar \Q \frac{\K_{\sin}}{1+\delta_{\sin}}
\end{align*}
where we constantly use the fact that $\bar \Q \frac{N}n \left( \frac{\K_{\cos}}{1+\delta_{\cos}} + \frac{\K_{\sin}}{1+\delta_{\sin}} \right) = \I_n - \lambda \bar \Q$. Note that 
\begin{align*}
  &1 - \frac1n \tr \bar \Q \frac{\K_{\cos}}{1+\delta_{\cos}} =  \frac1{1+\delta_{\cos}} >0, \quad 1 - \frac1n \tr \bar \Q \frac{\K_{\sin}}{1+\delta_{\sin}} =  \frac1{1+\delta_{\sin}} >0 \\
  &\frac1{1+\delta_{\cos}} + \frac1{1+\delta_{\sin}} = \underline{2 - \frac{n}N} + \frac{\lambda}N \tr \bar \Q >0
\end{align*}
so that 1) $\det(\bOmega^{-1}) > 0$ and 2) for $2N=n$, $\det(\bOmega^{-1})$ scales like $\lambda$ as $\lambda \to 0$.
\end{proof}
%
%
%
%
%
\begin{Lemma}[Derivatives with respect to $N$]\label{lem:delta-derivative-N}
Let Assumption~\ref{ass:high-dim} holds, for any $\lambda > 0$ and
\[
  \begin{cases}
  \delta_{\cos} = \frac1n \tr (\K_{\cos} \bar \Q) = \frac1n \tr \K_{\cos} \left( \frac{N}n \left(\frac{\K_{\cos} }{1+\delta_{\cos}} + \frac{\K_{\sin} }{1+\delta_{\sin}} \right) + \lambda \I_n \right)^{-1} \\ 
  \delta_{\sin} = \frac1n \tr (\K_{\sin} \bar \Q) = \frac1n \tr \K_{\sin} \left( \frac{N}n \left(\frac{\K_{\cos} }{1+\delta_{\cos}} + \frac{\K_{\sin} }{1+\delta_{\sin}} \right) + \lambda \I_n \right)^{-1}
  \end{cases}
\]
defined in Theorem~\ref{theo:asy-behavior-E[Q]}, we have that $(\delta_{\cos},\delta_{\sin})$ and $\| \bar \Q \|$ are all decreasing functions of $N$. Note in particular that the same conclusion holds for $2N > n$ as $\lambda \to 0$. 
\end{Lemma}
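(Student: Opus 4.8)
The plan is to treat the ratio $c\equiv N/n$ (equivalently $N$ with $n$ fixed) as a continuous positive parameter, show via the implicit function theorem that the unique solution $(\delta_{\cos},\delta_{\sin})$ of \eqref{eq:fixed-point-delta} depends differentiably on $N$, differentiate the fixed-point system, and read off the signs of the derivatives from the positivity of $\bOmega$ established in Lemma~\ref{lem:property-of-Delta}. As a first step I would record the elementary derivatives of $\bar\Q$. Writing $\bar\Q=\M^{-1}$ with $\M=\frac{N}n\bPhi+\lambda\I_n$ and $\bPhi=\frac{\K_{\cos}}{1+\delta_{\cos}}+\frac{\K_{\sin}}{1+\delta_{\sin}}$, one has $\frac{\partial\M}{\partial\delta_\sigma}=-\frac{N}n\frac{\K_\sigma}{(1+\delta_\sigma)^2}$ and $\frac{\partial\M}{\partial N}=\frac1n\bPhi$, hence
\[
  \frac{\partial\bar\Q}{\partial\delta_\sigma}=\frac{N}n\frac1{(1+\delta_\sigma)^2}\,\bar\Q\K_\sigma\bar\Q\succeq 0,\qquad \frac{\partial\bar\Q}{\partial N}=-\frac1n\bar\Q\bPhi\bar\Q\preceq 0,
\]
using $\K_{\cos},\K_{\sin}\succeq 0$ (Lemma~\ref{lem:expectation}) and $\bPhi\succeq 0$.

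Next, differentiating the scalar equations $\delta_\sigma=\frac1n\tr\K_\sigma\bar\Q$ with respect to $N$ and collecting terms, the $2\times 2$ Jacobian of the map $(\delta_{\cos},\delta_{\sin})\mapsto\big(\delta_{\cos}-\tfrac1n\tr\K_{\cos}\bar\Q,\ \delta_{\sin}-\tfrac1n\tr\K_{\sin}\bar\Q\big)$ is exactly $\bOmega^{-1}$ from \eqref{eq:def-Omega} (after using cyclicity of the trace and the symmetry $\tr\bar\Q\K_{\cos}\bar\Q\K_{\sin}=\tr\K_{\cos}\bar\Q\K_{\sin}\bar\Q$). Since $\det(\bOmega^{-1})>0$ by Lemma~\ref{lem:property-of-Delta}, this Jacobian is invertible, the implicit function theorem applies, and the total derivative of the fixed-point system gives
\[
  \begin{bmatrix}\dfrac{d\delta_{\cos}}{dN}\\[1.2ex]\dfrac{d\delta_{\sin}}{dN}\end{bmatrix}=-\frac1{n^2}\,\bOmega\begin{bmatrix}\tr(\K_{\cos}\bar\Q\bPhi\bar\Q)\\[0.8ex]\tr(\K_{\sin}\bar\Q\bPhi\bar\Q)\end{bmatrix}.
\]
The two entries of the right-hand vector are positive: $\bar\Q\bPhi\bar\Q\succ 0$ (conjugation of the positive definite $\bPhi$ by the invertible $\bar\Q$, using distinctness of the $\x_i$ as in the Remark after Theorem~\ref{theo:asy-behavior-E[Q]}), and $\tr(\K_\sigma\,\bar\Q\bPhi\bar\Q)>0$ because $\K_\sigma\succeq 0$ with $\K_\sigma\neq 0$. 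Since all entries of $\bOmega$ are positive by Lemma~\ref{lem:property-of-Delta}, it follows that $\frac{d\delta_{\cos}}{dN}<0$ and $\frac{d\delta_{\sin}}{dN}<0$, i.e. $(\delta_{\cos},\delta_{\sin})$ are decreasing in $N$.

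For $\|\bar\Q\|$ I would then apply the chain rule once more:
\[
  \frac{d\bar\Q}{dN}=\frac{\partial\bar\Q}{\partial\delta_{\cos}}\frac{d\delta_{\cos}}{dN}+\frac{\partial\bar\Q}{\partial\delta_{\sin}}\frac{d\delta_{\sin}}{dN}+\frac{\partial\bar\Q}{\partial N}\preceq 0,
\]
as the first two summands are positive semidefinite matrices times negative scalars and the last is negative semidefinite. Hence $\bar\Q$ is nonincreasing along $N$ in the positive semidefinite order, and since the spectral norm is monotone on positive semidefinite matrices, $\|\bar\Q\|$ is decreasing. (Equivalently, $\delta_\sigma$ decreasing makes each $\frac1{1+\delta_\sigma}$ increasing, so $\frac Nn\bPhi$ and hence $\M$ increase in the positive semidefinite order, whence $\bar\Q=\M^{-1}$ decreases.) Finally, for the regime $2N>n$ as $\lambda\to 0$: there $(\delta_{\cos},\delta_{\sin})\to(\gamma_{\cos},\gamma_{\sin})$ solving \eqref{eq:classi-fixed-point} with all quantities $O(1)$, and $\frac1{1+\gamma_{\cos}}+\frac1{1+\gamma_{\sin}}=2-\frac{n}{N}>0$, so the determinant computation in the proof of Lemma~\ref{lem:property-of-Delta} still gives $\det(\bOmega^{-1})>0$ at $\lambda=0$; the same implicit-differentiation argument therefore goes through verbatim with $\lambda=0$ (alternatively, each $N\mapsto\delta_\sigma(N;\lambda)$ is decreasing for every $\lambda>0$, and pointwise limits of decreasing functions are decreasing). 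The main obstacle is precisely the sign input — that $\bOmega$ has all positive entries and $\det(\bOmega^{-1})>0$, i.e. that $\bOmega^{-1}$ is a nonsingular $M$-matrix with nonnegative inverse — which is Lemma~\ref{lem:property-of-Delta}; granting it, the monotonicity is a short implicit-differentiation computation, and differentiability/uniqueness of the fixed point is routine given the nonsingular Jacobian and the uniqueness already proved for Theorem~\ref{theo:asy-behavior-E[Q]}.
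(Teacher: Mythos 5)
Your proposal is correct and follows essentially the same route as the paper's proof: differentiate the fixed-point system in $N$, collect the Jacobian as $\bOmega^{-1}$, invert to get $\big(\tfrac{d\delta_{\cos}}{dN},\tfrac{d\delta_{\sin}}{dN}\big)^\T=-\tfrac1{n^2}\bOmega\big(\tr\K_{\cos}\bar\Q\bPhi\bar\Q,\ \tr\K_{\sin}\bar\Q\bPhi\bar\Q\big)^\T$, conclude negativity from the positivity of the entries of $\bOmega$ (Lemma~\ref{lem:property-of-Delta}), and then deduce that $\bar\Q$ decreases in the Loewner order and hence in spectral norm. The paper states this more compactly (and leaves the implicit-function and chain-rule bookkeeping implicit), but the computation, the key sign input, and the final monotonicity argument are identical to yours.
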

\begin{proof}
We write
\begin{equation}
  \begin{bmatrix} \frac{\partial \delta_{\cos}}{\partial N} \\ \frac{\partial \delta_{\sin}}{\partial N} \end{bmatrix} = - \frac1n {\bOmega} \begin{bmatrix} \frac1n \tr \left( \bar \Q \bPhi \bar \Q \K_{\cos} \right) \\ \frac1n \tr \left( \bar \Q \bPhi \bar \Q \K_{\sin} \right) \end{bmatrix} = - \frac{n}N \frac1n {\bOmega} \begin{bmatrix} \delta_{\cos} - \frac{\lambda}n \tr ( \bar \Q \K_{\cos} \bar \Q) \\ \delta_{\sin} - \frac{\lambda}n \tr ( \bar \Q \K_{\sin} \bar \Q) \end{bmatrix}
\end{equation}
for $\bOmega$ defined in \eqref{eq:def-Omega} and $\bPhi = \frac{\K_{\cos}}{1+\delta_{\cos}} + \frac{\K_{\sin}}{1+\delta_{\sin}}$, which, together with Lemma~\ref{lem:property-of-Delta}, allows us to conclude that $\frac{\partial \delta_{\cos}}{\partial N}, \frac{\partial \delta_{\sin}}{\partial N}< 0$. Further note that 
\begin{align*}
  \frac{\partial \bar \Q}{\partial N} &= - \frac1n \bar \Q \left( \bPhi - \frac{\K_{\cos}}{(1+\delta_{\cos})^2} N \frac{\partial \delta_{\cos}}{\partial N} - \frac{\K_{\sin}}{(1+\delta_{\sin})^2} N \frac{\partial \delta_{\sin}}{\partial N} \right) \bar \Q 
\end{align*}
which concludes the proof.
\end{proof}
\begin{Lemma}[Derivative with respect to $\lambda$]\label{lem:delta-derivative-lambda}
For any $\lambda > 0$, $(\delta_{\cos}, \delta_{\sin})$ and $\| \bar \Q \|$ defined in Theorem~\ref{theo:asy-behavior-E[Q]} decrease as $\lambda$ grows large.
\end{Lemma}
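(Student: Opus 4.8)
\textbf{Proof proposal for Lemma~\ref{lem:delta-derivative-lambda}.}
The plan is to run the same implicit-differentiation argument as in Lemma~\ref{lem:delta-derivative-N}, now differentiating the fixed-point system of Theorem~\ref{theo:asy-behavior-E[Q]} with respect to $\lambda$ rather than $N$. First I would record that, for $\lambda>0$, the map $\lambda\mapsto(\delta_{\cos}(\lambda),\delta_{\sin}(\lambda))$ is $C^1$: this follows from the implicit function theorem applied to $F_\sigma(\delta_{\cos},\delta_{\sin},\lambda):=\delta_\sigma-\frac1n\tr(\K_\sigma\bar\Q)=0$, whose Jacobian with respect to $(\delta_{\cos},\delta_{\sin})$ is exactly $\bOmega^{-1}=\I_2-\frac Nn\M$ from \eqref{eq:def-Omega}, and $\bOmega^{-1}$ is invertible since $\det(\bOmega^{-1})>0$ by Lemma~\ref{lem:property-of-Delta}. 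This also guarantees that $\lambda\mapsto\bar\Q$ is differentiable with $\frac{\partial\bar\Q}{\partial\lambda}=-\bar\Q\frac{\partial\bar\Q^{-1}}{\partial\lambda}\bar\Q$.

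Next I would carry out the differentiation. Using
\[
  \frac{\partial\bar\Q^{-1}}{\partial\lambda}=\I_n-\frac Nn\left(\frac{\K_{\cos}}{(1+\delta_{\cos})^2}\frac{\partial\delta_{\cos}}{\partial\lambda}+\frac{\K_{\sin}}{(1+\delta_{\sin})^2}\frac{\partial\delta_{\sin}}{\partial\lambda}\right),
\]
and applying $\frac1n\tr(\K_\sigma\,\cdot\,)$ to $\frac{\partial\bar\Q}{\partial\lambda}$, I collect the terms in $\partial_\lambda\delta_{\cos},\partial_\lambda\delta_{\sin}$ to obtain the linear system
\[
  \bOmega^{-1}\begin{bmatrix}\frac{\partial\delta_{\cos}}{\partial\lambda}\\[2pt]\frac{\partial\delta_{\sin}}{\partial\lambda}\end{bmatrix}=-\frac1n\begin{bmatrix}\frac1n\tr(\bar\Q\K_{\cos}\bar\Q)\\[2pt]\frac1n\tr(\bar\Q\K_{\sin}\bar\Q)\end{bmatrix},\qquad\text{i.e.}\qquad\begin{bmatrix}\frac{\partial\delta_{\cos}}{\partial\lambda}\\[2pt]\frac{\partial\delta_{\sin}}{\partial\lambda}\end{bmatrix}=-\frac1n\,\bOmega\begin{bmatrix}\frac1n\tr(\bar\Q\K_{\cos}\bar\Q)\\[2pt]\frac1n\tr(\bar\Q\K_{\sin}\bar\Q)\end{bmatrix}.
\]
The right-hand vector has nonnegative entries: $\bar\Q\succeq0$ as the resolvent of a nonnegative definite matrix, and $\K_{\cos},\K_{\sin}\succeq0$ (each is a symmetric diagonal rescaling of $\{\cosh(\x_i^\T\x_j)\}_{ij}$, resp.\ $\{\sinh(\x_i^\T\x_j)\}_{ij}$, which are nonnegative definite by the Schur product theorem applied to the even/odd power series of $\X^\T\X$; cf.\ the remark following Theorem~\ref{theo:asy-behavior-E[Q]}), so $\frac1n\tr(\bar\Q\K_\sigma\bar\Q)=\frac1n\tr(\K_\sigma\bar\Q^2)\ge0$. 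Since every entry of $\bOmega$ is positive (Lemma~\ref{lem:property-of-Delta}), it follows that $\frac{\partial\delta_{\cos}}{\partial\lambda}\le0$ and $\frac{\partial\delta_{\sin}}{\partial\lambda}\le0$ (strictly unless $\K_{\cos}=\K_{\sin}=0$), which is the claimed monotonicity of $(\delta_{\cos},\delta_{\sin})$.

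For the monotonicity of $\|\bar\Q\|$, I would substitute $\partial_\lambda\delta_\sigma\le0$ back into $\frac{\partial\bar\Q^{-1}}{\partial\lambda}$ above: since $\K_\sigma\succeq0$ and $\partial_\lambda\delta_\sigma\le0$, the correction term is nonnegative definite, so $\frac{\partial\bar\Q^{-1}}{\partial\lambda}\succeq\I_n\succ0$; integrating in $\lambda$ shows $\bar\Q^{-1}$ is nondecreasing in the Loewner order, hence $\lambda_{\min}(\bar\Q^{-1})$ is nondecreasing, hence $\|\bar\Q\|=\lambda_{\min}(\bar\Q^{-1})^{-1}$ is nonincreasing. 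I do not expect a genuine obstacle here: the computation is routine implicit differentiation, and the two external inputs it rests on—positivity of the entries of $\bOmega$ and $\det(\bOmega^{-1})>0$ (both from Lemma~\ref{lem:property-of-Delta}), together with the individual nonnegative definiteness of $\K_{\cos}$ and $\K_{\sin}$—are already available. The only point requiring a little care is the $C^1$-dependence of the fixed point on $\lambda$, which is exactly what the nondegeneracy $\det(\bOmega^{-1})>0$ supplies.
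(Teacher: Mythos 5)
Your argument is correct and follows the paper's proof essentially verbatim: implicit differentiation of the fixed-point system giving $\partial_\lambda\delta_\sigma = -\bigl[\bOmega\,(\frac1n\tr\bar\Q\K_{\cos}\bar\Q,\ \frac1n\tr\bar\Q\K_{\sin}\bar\Q)^\T\bigr]_\sigma$, entrywise positivity of $\bOmega$ from Lemma~\ref{lem:property-of-Delta}, nonnegativity of $\tr(\bar\Q\K_\sigma\bar\Q)$, and then a Loewner-order argument for $\|\bar\Q\|$. One small slip: your linear system has an extra factor $\frac1n$ on the right-hand side, presumably carried over from the $N$-derivative in Lemma~\ref{lem:delta-derivative-N} where it comes from $\frac{\partial}{\partial N}(N/n)=1/n$; here $\frac{\partial}{\partial\lambda}(\lambda\I_n)=\I_n$ so there is no $\frac1n$, though this does not affect the sign and hence does not affect the conclusion.
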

\begin{proof}
Taking the derivative of $(\delta_{\cos}, \delta_{\sin})$ with respect to $\lambda >0$, we have explicitly
\begin{equation}
  \begin{bmatrix} \frac{\partial \delta_{\cos}}{\partial \lambda} \\ \frac{\partial \delta_{\sin}}{\partial \lambda} \end{bmatrix} = - \bOmega \begin{bmatrix} \frac1n \tr (\bar \Q \K_{\cos} \bar \Q) \\ \frac1n \tr (\bar \Q \K_{\sin} \bar \Q) \end{bmatrix}
\end{equation}
which, together with the fact that all entries of $\bOmega$ are positive (Lemma~\ref{lem:property-of-Delta}), allows us to conclude that $\frac{\partial \delta_{\cos}}{\partial \lambda}, \frac{\partial \delta_{\sin}}{\partial \lambda}< 0$. Further considering 
\[
  \frac{\partial \bar \Q}{\partial \lambda} = \bar \Q \left( \frac{N}n \frac{\K_{\cos}}{(1+\delta_{\cos})^2} \frac{\partial \delta_{\cos}}{\partial \lambda} + \frac{N}n \frac{\K_{\sin}}{(1+\delta_{\sin})^2} \frac{\partial \delta_{\sin}}{\partial \lambda} - \I_n \right) \bar \Q
\]
and thus the conclusion for $\bar \Q$. 

\end{proof}

\end{document}